\newtheorem{theorem}{Theorem}
\newtheorem{lemma}[theorem]{Lemma}
  \newtheorem{corollary}[theorem]{Corollary}
\newtheorem{condition}{Condition}
\title{Collaborative Min-Max Regret in Grouped Multi-Armed Bandits}
\author{
    \textbf{Mo\"ise Blanchard}\\
      Columbia University\\
      \small{\texttt{mb5414@columbia.edu}}
      \and 
      \textbf{Vineet Goyal}\\
      Columbia University\\
      \small{\texttt{vgoyal@ieor.columbia.edu}}
}
\date{}
\newcommand{\nonl}
{\renewcommand{\nl}{\let\nl\oldnl}}
\DeclareMathOperator*{\argmax}{arg\,max}
\DeclareMathOperator*{\argmin}{arg\,min}
\renewenvironment{proof}[1][]{\par\noindent{\bf Proof #1\ }}{\hfill$\blacksquare$\\[2mm]}
\begin{document}

\newcommand{\trw}{\text{\small TRW}}
\newcommand{\maxcut}{\text{\small MAXCUT}}
\newcommand{\maxcsp}{\text{\small MAXCSP}}
\newcommand{\suol}{\text{SUOL}}
\newcommand{\wuol}{\text{WUOL}}
\newcommand{\crf}{\text{CRF}}
\newcommand{\sual}{\text{SUAL}}
\newcommand{\suil}{\text{SUIL}}
\newcommand{\fs}{\text{FS}}
\newcommand{\fmv}{{\text{FMV}}}
\newcommand{\smv}{{\text{SMV}}}
\newcommand{\wsmv}{{\text{WSMV}}}
\newcommand{\trwp}{\text{\small TRW}^\prime}
\newcommand{\rhos}{\rho^\star}
\newcommand{\brhos}{\brho^\star}
\newcommand{\bzero}{{\mathbf 0}}
\newcommand{\bs}{{\mathbf s}}
\newcommand{\bw}{{\mathbf w}}
\newcommand{\bws}{\bw^\star}
\newcommand{\ws}{w^\star}
\newcommand{\Prt}{{\mathsf {Part}}}
\newcommand{\Fs}{F^\star}

\newcommand{\Hs}{{\mathsf H} }

\newcommand{\hL}{\hat{L}}
\newcommand{\hU}{\hat{U}}
\newcommand{\hu}{\hat{u}}

\newcommand{\bu}{{\mathbf u}}
\newcommand{\ubf}{{\mathbf u}}
\newcommand{\hbu}{\hat{\bu}}

\newcommand{\primal}{\textbf{Primal}}
\newcommand{\dual}{\textbf{Dual}}

\newcommand{\Ptree}{{\sf P}^{\text{tree}}}
\newcommand{\bv}{{\mathbf v}}

\newcommand{\bq}{\boldsymbol q}

\newcommand{\rvM}{\text{M}}

\newcommand{\Acal}{\mathcal{A}}
\newcommand{\Bcal}{\mathcal{B}}
\newcommand{\Ccal}{\mathcal{C}}
\newcommand{\Dcal}{\mathcal{D}}
\newcommand{\Ecal}{\mathcal{E}}
\newcommand{\Fcal}{\mathcal{F}}
\newcommand{\Gcal}{\mathcal{G}}
\newcommand{\Hcal}{\mathcal{H}}
\newcommand{\Ical}{\mathcal{I}}
\newcommand{\Jcal}{\mathcal{J}}
\newcommand{\Kcal}{\mathcal{K}}
\newcommand{\Lcal}{\mathcal{L}}
\newcommand{\Mcal}{\mathcal{M}}
\newcommand{\Ncal}{\mathcal{N}}
\newcommand{\Pcal}{\mathcal{P}}
\newcommand{\Scal}{\mathcal{S}}
\newcommand{\Tcal}{\mathcal{T}}
\newcommand{\Ucal}{\mathcal{U}}
\newcommand{\Vcal}{\mathcal{V}}
\newcommand{\Wcal}{\mathcal{W}}
\newcommand{\Xcal}{\mathcal{X}}
\newcommand{\Ycal}{\mathcal{Y}}
\newcommand{\Ocal}{\mathcal{O}}
\newcommand{\Qcal}{\mathcal{Q}}
\newcommand{\Rcal}{\mathcal{R}}

\newcommand{\brho}{\boldsymbol{\rho}}

\newcommand{\Cbb}{\mathbb{C}}
\newcommand{\Ebb}{\mathbb{E}}
\newcommand{\Nbb}{\mathbb{N}}
\newcommand{\Pbb}{\mathbb{P}}
\newcommand{\Qbb}{\mathbb{Q}}
\newcommand{\Rbb}{\mathbb{R}}
\newcommand{\Sbb}{\mathbb{S}}
\newcommand{\Vbb}{\mathbb{V}}
\newcommand{\Wbb}{\mathbb{W}}
\newcommand{\Xbb}{\mathbb{X}}
\newcommand{\Ybb}{\mathbb{Y}}
\newcommand{\Zbb}{\mathbb{Z}}

\newcommand{\Rbbp}{\Rbb_+}

\newcommand{\bX}{{\mathbf X}}
\newcommand{\bx}{{\boldsymbol x}}

\newcommand{\btheta}{\boldsymbol{\theta}}

\newcommand{\Pb}{\mathbb{P}}

\newcommand{\hPhi}{\widehat{\Phi}}

\newcommand{\Sigmah}{\widehat{\Sigma}}
\newcommand{\thetah}{\widehat{\theta}}

\newcommand{\indep}{\perp \!\!\! \perp}
\newcommand{\notindep}{\not\!\perp\!\!\!\perp}

\newcommand{\one}{\mathbbm{1}}
\newcommand{\1}{\mathbbm{1}}
\newcommand{\aprx}{\alpha}

\newcommand{\ST}{\Tcal(\Gcal)}
\newcommand{\x}{\mathsf{x}}
\newcommand{\y}{\mathsf{y}}
\newcommand{\Ybf}{\textbf{Y}}
\newcommand{\smiddle}[1]{\;\middle#1\;}

\definecolor{dark_red}{rgb}{0.2,0,0}
\newcommand{\detail}[1]{\textcolor{dark_red}{#1}}

\newcommand{\ds}[1]{{\color{red} #1}}
\newcommand{\rc}[1]{{\color{green} #1}}

\newcommand{\mb}[1]{\ensuremath{\boldsymbol{#1}}}

\newcommand{\metric}{\rho}
\newcommand{\proj}{\text{Proj}}

\newcommand{\paren}[1]{\left( #1 \right)}
\newcommand{\sqb}[1]{\left[ #1 \right]}
\newcommand{\set}[1]{\left\{ #1 \right\}}
\newcommand{\floor}[1]{\left\lfloor #1 \right\rfloor}
\newcommand{\ceil}[1]{\left\lceil #1 \right\rceil}
\newcommand{\abs}[1]{\left|#1\right|}
\newcommand{\norm}[1]{\left\|#1\right\|}

\newcommand{\todo}[1]{{\color{red} TODO: #1}}

\newcommand{\Ber}{\textnormal{Ber}}


\newcommand{\help}{H^{1}}
\newcommand{\NChelp}{H^{2}}
\newcommand{\NChelpLow}{H^{2-}}
\newcommand{\NChelpHigh}{H^{2+}}
\newcommand{\combined}{H}
\newcommand{\UCB}{\textnormal{UCB}}
\newcommand{\LCB}{\textnormal{LCB}}
\newcommand{\Hf}{\bar H}
\newcommand{\alg}{alg}
\newcommand{\Reg}{\textnormal{Reg}}
\newcommand{\MaxReg}{\textnormal{MaxReg}}
\newcommand{\Var}{\textnormal{Var}}
\newcommand{\poly}{\textnormal{poly}}
\newcommand{\TV}{\textnormal{TV}}
\newcommand{\Dkl}{\textnormal{D}_{\textnormal{KL}}}
\newcommand{\Cov}{\textnormal{Cov}}
\newcommand{\ColUCB}{\textnormal{\sf Col-UCB} }

\maketitle

\begin{abstract}
    We study the impact of sharing exploration in multi-armed bandits in a grouped setting where a set of groups have overlapping feasible action sets \cite{baek2024fair}. In this grouped bandit setting, groups share reward observations, and the objective is to minimize the \emph{collaborative regret}, defined as the maximum regret across groups. This naturally captures applications in which one aims to balance the exploration burden between groups or populations---it is known that standard algorithms can lead to significantly imbalanced exploration cost between groups.  We address this problem by introducing an algorithm \ColUCB that dynamically coordinates exploration across groups. We show that \ColUCB achieves both optimal minimax and instance-dependent collaborative regret up to logarithmic factors. These bounds are adaptive to the structure of shared action sets between groups, providing insights into when collaboration yields significant benefits over each group learning their best action independently.
\end{abstract}

\section{Introduction}
\label{sec:introduction}

\emph{Multi-armed bandits} is a classical framework for modeling decision making under uncertainty, in which users must sequentially choose an action out of uncertain alternatives \citep{bubeck2012regret,lattimore2020bandit}. 
This model has wide-ranging applications including recommender systems that sequentially offer products recommendations to users to maximize long-term revenue \citep{li2010contextual}, or clinical trials in which the goal is to find the best treatment for a sequence of patients \citep{thompson1933likelihood}.
A key aspect of multi-armed bandits is that to maximize long-term reward, one must balance between \emph{exploration} (acquiring more information about suboptimal actions to potentially improve future decisions) and \emph{exploitation} (following the best actions given the current information). 

Exploration naturally comes at a cost for users, which can disproportionately impact certain groups of users. This raises important questions about when and how the exploration burden can be alleviated for these groups \cite{raghavan2018externalities,jung2020quantifying}. Notably, \cite{baek2024fair} showed that in the asymptotic regime, the exploration cost can be shared in an arbitrarily unfair manner between groups for standard learning policies and proposed a Nash-bargaining solution to alleviate this issue. As an equivalent perspective on the problem, we can consider a setting in which several agents face their own multi-armed bandit problem, e.g., groups of recommenders targeting different populations. The goal is then to understand when and how collaborative exploration can be beneficial as opposed to each group solving their problem individually without sharing information.
We focus on heterogeneity between agents or groups in terms of their set of available actions, which corresponds to the so-called \emph{grouped bandit} setting \cite{baek2024fair}. This naturally models feasibility constraints for each group.

In the grouped bandit setting, each group $g\in\Gcal$ is associated with a set of available arms $\Acal_g\subseteq\Acal$, where $\Acal$ is the total set of arms. At each iteration, each group selects one of their available arms and receives a corresponding stochastic reward. Each group aims to minimize their reward regret compared to their best available arm in hindsight. As opposed to maximizing their reward individually, we consider the setting in which groups share information and aim to minimize the maximum regret among groups, which corresponds to the classical min-max objective \cite{rawls1971theory}. For convenience, we refer to this maximum regret between groups as the \emph{collaborative} regret. In this context, an algorithm minimizing the collaborative regret may allocate the exploration burden between groups depending on the structure of the shared arms across groups, i.e., the set system $\set{\Acal_g, g\in\Gcal}$. In particular, we aim to understand for which shared arm structures a collaborative exploration significantly improves over each group minimizing their regret individually.

\paragraph{Our Contributions.}
We introduce an algorithm \ColUCB (Collaborative-UCB) to minimize the collaborative regret, which at the high level iteratively follows the optimal allocation of pulls to explore arms estimated to be in contention for some of the groups. This is done by iteratively solving a matching problem with parameters estimated from reward observations. In particular, we show that for standard arm distributions, \ColUCB achieves the finite-sample minimax collaborative regret up to logarithmic factors (see \cref{thm:worst_case_minimax} for a formal statement):
\begin{equation*}
    \sup_{\Ical} \Ebb\sqb{\max_{g\in\Gcal} \Reg_{g,T}(\ColUCB;\Ical)} \lesssim \inf_{{\sf ALG}}\sup_{\Ical} \Ebb\sqb{\max_{g\in\Gcal} \Reg_{g,T}({\sf ALG};\Ical)}\cdot \log^2 T,
\end{equation*}
where $\Reg_{g,T}({\sf ALG};\Ical)$ denotes the regret of group $g$ after $T$ rounds following algorithm {\sf ALG} and for instance $\Ical$. We further give bounds on the minimax collaborative regret in terms of simple combinatorial properties of the shared arms. At the high level these show that whenever the average number of groups sharing each arm in any subset of arms in $\Acal$ is at least $N\gtrsim \sqrt T$, the corresponding minimax collaborative regret is of order $T^{2/3}/N^{1/3}$ (\cref{thm:quantitative_minimax}). 

Beyond these minimax guarantees, we also provide instance-dependent collaborative regret bounds for \ColUCB (\cref{thm:instance_dependent_regret}). In particular, instead of requiring all subset of arms to be sufficiently shared (as in the minimax bound), these only depend on whether the set of arms close to being optimal in some group are sufficiently shared. These instance-dependent collaborative regret bounds are also tight up to logarithmic factors (see \cref{thm:instance_lower_bound_v2}). In addition to showing that \ColUCB is essentially instance-dependent optimal, these also characterize the learning trajectory for optimal algorithms minimizing the collaborative regret.

\paragraph{Related works.}
Our work is closely related to the study of fairness in multi-armed bandits; the collaborative regret objective we consider corresponds to the classical max-min fairness criterion \cite{rawls1971theory} to optimize the regret of each group. A large body of work adresses fairness in bandits across various settings, including linear, contextual, and combinatorial bandits \cite{joseph2016fairness,jabbari2017fairness,li2019combinatorial,patil2021achieving,grazzi2022group}. Most works in this literature, however, study fairness in terms of arm or action selection, rather than optimizing the regret between different multi-armed bandit agents, which is our focus in this paper. Specifically, 
several works aim to design algorithms that guarantee similar number of pulls between actions with comparable features or expected rewards \cite{joseph2016fairness,celis2019controlling,liu2017calibrated,gillen2018online,wang2021fairness}. Another line of work directly enforces fairness constraints between arms by fixing pre-specified target pull rates or probabilities \cite{li2019combinatorial,patil2021achieving,liu2022combinatorial}.

Within the literature on fairness in bandits, our work is most closely related to \cite{baek2024fair} which introduced the grouped bandit setting and studied fair exploration in this context and in contextual bandits. They focus on the Nash bargaining solution to the problem of fairly minimizing the regret between groups and derive asymptotically optimal algorithms to do so. In comparison, in this work we focus on the min-max regret objective, and more importantly on the non-asymptotic regime, for which the geometry of the shared arms between groups plays a predominant role.

A key motivation for our work is understanding in which settings coordinated exploration can be beneficial in bandit settings where multiple groups/agents share arms. This relates our work to the broader literature on multi-agent or distributed bandits, where coordination between agents has been studied under various communication and reward-sharing models. For instance, several works study regret minimization or best-arm identification under limited communication \cite{hillel2013distributed,tao2019collaborative,wangdistributed,karpov2020collaborative}, structured or network-based communication \cite{sankararaman2019social,chawla2020gossiping,landgren2016distributed}, or coupled rewards due to collisions between arms pulled by agents \cite{liu2010distributed,rosenski2016multi,bubeck2020coordination,lugosi2022multiplayer}. The present work focuses on asymmetry between agents in terms of their feasible action sets.

\paragraph{Outline.} We formally define the setup in \cref{sec:formal_setup}, then introduce \ColUCB and state our main results in \cref{sec:main_results}. We give a brief sketch of proof for the main instance-dependent collaborative regret bound on \ColUCB in \cref{sec:proof_sketch}. We conclude in \cref{sec:conclusion}. All remaining proofs are deferred to the appendix.

\section{Formal setup and preliminaries}
\label{sec:formal_setup}

We consider the setting in which different groups in $\Gcal=\{1,\ldots,|\Gcal|\}$ interact with a set of shared arms $\Acal$ for $T\geq 2$ rounds. Each group $g\in \Gcal$ is associated with a corresponding set of feasible arms $\Acal_g\subseteq \Acal$. Without loss of generality, we assume that $\bigcup_{g\in\Gcal}\Acal_g=\Acal$ by ignoring arms that are not feasible for any group. We also assume that any group $g\in\Gcal$ has at least two actions $|\Acal_g|\geq 2$. Last, we let $c_\Gcal=1+\frac{\log \max(|\Acal|,|\Gcal|)}{\log T}$ which we will view throughout as a fixed constant. For a typical scenario, one can think of $T\geq |\Acal|,|\Gcal|$ in which case $c_\Gcal= 2$.

The learning process is as follows. At each time $t\in[T]$, each group $g\in\Gcal$ pulls a feasible arm $a_g(t)\in\Acal_g$ then incurs a reward $r_{g,t}(a)$. We suppose that the rewards are stochastic, that is, the sequences $(r_{g,t}(a))_{g\in\Gcal,t\in[T]}$ are independent identically distributed (i.i.d.) according to a distribution $\Dcal_a$, and these sequences are independent across different arms. Up to a rescaling, we suppose that all distributions $\Dcal_a$ for $a\in\Acal$ are subGaussian with parameter at most $1$, and denote by $\mu_a$ its mean. Importantly, the mean rewards are unknown to the groups; these are learned through reward observations. An instance of this grouped multi-armed bandit problem is then specified by the collection of reward distributions $\Ical=\set{\Dcal_a,a\in\Acal}$. For convenience, we introduce the notation $\mu_g^\star:=\max_{a\in\Acal_g}\mu_a$ for the maximum reward in hindsight for group $g\in\Gcal$ and let $\Delta_{g,a}:=\mu_g^\star-\mu_a$ be the suboptimality gap for action $a\in\Acal_g$.

For a given joint learning strategy {\sf ALG}, the regret incurred by group $g\in \Gcal$ on instance $\Ical$ is defined as its excess loss compared to its best arm in hindsight:
\begin{equation*}
    \Reg_{g,T}({\sf ALG};\Ical):=  \sum_{t=1}^T \mu_g^\star -\mu_{a_g(t)} = \sum_{t=1}^T \Delta_{g,a_g(t)}.
\end{equation*}
As in the standard multi-armed bandit setup, the goal is to minimize this regret for groups $\Gcal$. If we treat each group separately, the problem simply decomposes into one classical multi-armed bandit problem for each group.
Instead, we consider a collaborative setup in which the rewards observed by each group are made public at the end of each round, allowing other groups to adjust their estimates accordingly and decide which arms to pull in a collaborative manner. We measure the performance of a collaborative strategy via the regret of the worst group, which we refer to as the \emph{collaborative} regret 
\begin{equation*}
    \MaxReg_T({\sf ALG};\Ical) := \Ebb\sqb{\max_{g\in \Gcal} \Reg_{g,T}({\sf ALG};\Ical)},
\end{equation*}
where the expectation is taken with respect to the randomness of the rewards and the pulls for strategy {\sf ALG}. When deriving lower bounds, we will mainly focus on two models of reward distributions: Bernoulli distributions and unit-variance Gaussians with mean in $[0,1]$. For convenience, we denote by $\Mcal_B$ (resp. $\Mcal_G$) the set of instances $\Ical$ with Bernoulli reward distributions (resp. standard Gaussian distributions $\Ncal(\mu,\sigma^2=1)$ with mean $\mu\in[0,1]$). We expect that results can be easily generalized to other well-behaved reward distribution models. For a given model $\Mcal$ of instances, we then define the minimax collaborative regret by
\begin{equation*}
    \Rcal_{T}(\Gcal;\Mcal):=\inf_{\alg}\sup_{\Ical\in\Mcal} \MaxReg_T(\alg;\Ical),
\end{equation*}
where the infimum is taken over all joint strategies {\sf ALG} for the groups. 
Importantly, this minimax regret is dependent on the set family $\{\Acal_g,g\in\Gcal\}$, and in particular, encodes which structure of the shared arms between groups are useful to minimize the collaborative regret.

\section{Main results}
\label{sec:main_results}

\subsection{Algorithm description}

We propose an algorithm \ColUCB to minimize the collaborative regret bound which at the high-level identifies arms whose exploration should be shared among different groups, then decides of a corresponding matching between groups and actions that need to be explored.
Before describing the algorithm, we need a few notations. First, we fix a constant $C=60c_\Gcal$ which will be used within confidence bounds. As a burn-in period, the algorithm first ensures that each arm is pulled at least $n_0:=\ceil{16C\log T}$ times using the minimum number of rounds $t_{\min}$ necessary to do so (this can be implemented easily, see \cref{lemma:preliminary_t_min} for further details). 
At round $t>t_{\min}$, we denote its current number of pulls, mean estimate, suboptimality gap estimate, upper and lower confidence bound (UCB and LCB) of an arm $a\in\Acal$ as follows:\\
\begin{minipage}{0.58\textwidth}
\begin{align*}
    P_a(t) &:= \abs{\set{(s,g): s<t,\, g\in\Gcal,\, a_g(s)=a}},\\
    \hat \mu_a(t) &:=\frac{1}{P_a(t)} \sum_{s<t}\sum_{g\in\Gcal} r_s(a)\1[a_g(s)=a],\\
    \widehat\Delta_{g,a}(t) &:= \max_{a'\in\Acal_g}\hat \mu_{a'}(t)  - \hat\mu_a(t),
\end{align*}
\end{minipage}
\begin{minipage}{0.38\textwidth}
\begin{align*}
    \UCB_a(t) &:= \hat \mu_a(t) + \sqrt{\frac{C\log T}{P_a(t)}},\\
    \LCB_a(t) &:= \hat \mu_a(t) - \sqrt{\frac{C\log T}{P_a(t)}},
\end{align*}
\end{minipage}\\

\noindent We make a few remarks about the definitions of the confidence bounds. For simplicity, these use the knowledge of the horizon $T$ and hence, we assume that \ColUCB has access to this horizon parameter. Note that this assumption can be easily relieved up to modifying the confidence term to $\sqrt{C'\log (t) / P_a(t)}$ for a larger constant $C'\geq C$. Using the standard doubling trick (applying the proof to intervals of time $[2^k,2^{k+1})$), we can check that the regret bounds for this algorithm would match the regret bounds for \ColUCB provided in this paper up to an extra $\log T$ factor. Second, the constant $C$ used to define the confidence bound is larger than that for the classical UCB algorithm, which is to be expected because these will serve to estimate not only the reward of arms but also the optimal allocation of explorations. Throughout, we do not aim to optimize constant factors.

We define the set of candidate optimal actions for group $g\in\Gcal$:

\begin{equation*}
    \widehat \Acal_g(t):=\set{a\in\Acal_g: \UCB_a(t)\geq \max_{a'\in\Acal_g} \LCB_a(t)}.
\end{equation*}
These arms require further exploration if multiple arms are in contention for being the optimal arm of group $g$. Formally, the set of arms in contention for group $g\in\Gcal$, and the total set of arms still in contention at that round are defined as
\begin{equation*}
    \Ccal_g(t) := \begin{cases}
        \emptyset &\text{if } |\widehat \Acal_g(t)|=1\\
        \widehat \Acal_g(t) &\text{otherwise}.
    \end{cases} 
    \qquad \text{and} \qquad
    \Ccal(t):= \bigcup_{g\in\Gcal} \bigcap_{s\leq t}\Ccal_g(s),
\end{equation*}
where $\Ccal(0):=\Acal$.
For convenience, let us denote by $P(t):=\min_{a\in\Ccal(t)} P_a(t)$
the minimum number of pulls each arm in contention has received. To schedule the exploration of these arms in contention, we solve the following linear program:
\begin{equation}\label{eq:LP_solved_each_it}
     \Qcal(t):\qquad \begin{aligned}
q(t):=\max_{x\geq 0,q\geq 0}  q \quad
\textrm{s.t.} \quad & \sum_{a\in\Ccal(t)\cap \Acal_g} \widehat\Delta_{g,a}(t) x_{g,a} \leq \sqrt{\frac{C\log T}{P(t)}}, 
 &g\in\Gcal\\
  & \sum_{a\in\Ccal(t)\cap \Acal_g} x_{g,a} \leq 1, &g\in\Gcal\\
 & \sum_{g\in\Gcal:a\in\Acal_g} x_{g,a} \geq q, &a\in \Ccal(t),
\end{aligned} 
\end{equation}
with the convention that if $P(t)=0$ the first constraint is automatically satisfied.
Intuitively, $\Qcal(t)$ computes an allocation $x$ that ensures the maximum shared exploration for arms in contention $\Ccal(t)$, while limiting the regret incurred by each group $g\in\Gcal$ for this exploration. In particular, the term $\sum_{a\in\Ccal(t)\cap \Acal_g} \widehat\Delta_{g,a} x_{g,a} $ from the first constraint ensures that groups $g\in\Gcal$ for which arms in contention $\Ccal(t)$ are significantly worse than their estimated optimal arm participate less in the shared exploration.

We denote by $x(t) = (x_{g,a}(t))_{g\in\Gcal,a\in \Ccal(t)}$ the optimal solution to $\Qcal(t)$. Note that because of the first constraint, we have $\sum_{a\in \Ccal(t)\cap \Acal_g}x_{g,a}(t)\leq 1$ for all $g\in\Gcal$. Therefore, we can interpret $(x_{g,a})_{a\in\Ccal(t)\cap\Acal_g}$ as a partial probability distribution. Precisely, $g\in\Gcal$ pulls arm $a\in\Ccal(t)\cap \Acal_g$ with probability $x_{g,a}(t)$, and pulls the classical UCB arm with the remaining probability. The resulting algorithm \ColUCB is summarized in \cref{alg:main}. 

\begin{algorithm}[t]

\caption{\ColUCB}\label{alg:main}

\LinesNumbered
\everypar={\nl}

\hrule height\algoheightrule\kern3pt\relax
\KwIn{$\Acal_g$ for all $g\in\Gcal$, horizon $T$.}

\vspace{3mm}

\For{$t\leq t_{\min}$}{Follow any fixed schedule of pulls that ensures all arms are pulled at least $n_0$ times}

\For{$t>t_{\min}$}{
    Compute the optimal solution $x(t),q(t)$ to $\Qcal(t)$ (see \cref{eq:LP_solved_each_it}).
    For each group $g\in\Gcal$, choose
    \begin{equation*}
        a_g(t) := 
        \begin{cases}
            a &\text{w.p. } x_{g,a}(t), \quad a\in\Acal_g\cap \Ccal(t)\\
            \argmax_{a\in \Acal_g} \hat\mu_a(t) &\text{w.p. } 1-\sum_{a\in\Acal_g\cap\Ccal(t)}x_{g,a}(t)
        \end{cases}
    \end{equation*}
    
}

\hrule height\algoheightrule\kern3pt\relax
\end{algorithm}

\subsection{Worst-case collaborative regret bounds}

Our first result is that \ColUCB achieves the minimax collaborative regret up to polylog factors.

\begin{theorem}\label{thm:worst_case_minimax}
    For any horizon $T\geq 2$, the minimax collaborative regret for either Gaussian or Bernoulli bandits models $\Mcal\in\{\Mcal_G,\Mcal_B\}$ satisfies
    \begin{equation*}
        \Rcal_T(\Gcal,\Mcal) \leq \sup_{\Ical\in\Mcal} \MaxReg_T(\ColUCB;\Ical) \lesssim \Rcal_T(\Gcal,\Mcal) \cdot \log^2 T.
    \end{equation*}
\end{theorem}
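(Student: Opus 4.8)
The left inequality is immediate from the definition of $\Rcal_T(\Gcal,\Mcal)$ as an infimum over all joint strategies: since \ColUCB is one admissible strategy, $\Rcal_T(\Gcal,\Mcal)=\inf_{\alg}\sup_{\Ical\in\Mcal}\MaxReg_T(\alg;\Ical)\leq\sup_{\Ical\in\Mcal}\MaxReg_T(\ColUCB;\Ical)$. The content of the theorem is therefore the right inequality, which I would establish by sandwiching both quantities between matching multiples of a single complexity functional $V=V(\Gcal,T)$ of the shared-arm structure: an upper bound $\sup_{\Ical}\MaxReg_T(\ColUCB;\Ical)\lesssim V\log^2 T$ on the algorithm, and a minimax lower bound $\Rcal_T(\Gcal,\Mcal)\gtrsim V$. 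The natural choice of $V$ is the worst-case value, over admissible suboptimality-gap profiles $(\Delta_{g,a})$, of the max-group regret accrued by the optimal shared-exploration schedule for that profile; it is governed by the same matching/covering geometry encoded in the linear program $\Qcal(t)$.

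For the upper bound I would first set up the clean event on which every empirical mean $\hat\mu_a(t)$ lies within its confidence width $\sqrt{C\log T/P_a(t)}$ of $\mu_a$ for all arms and rounds; by the choice $C=60c_\Gcal$ and a union bound over the at most $|\Acal|\cdot T$ estimates this event holds with probability $1-1/\poly(T)$, its complement contributing $O(1)$ to the regret. On this event $\widehat\Acal_g(t)$ always contains the true best arm of $g$ and only arms whose gap is within a constant factor of the current confidence width, so each group's instantaneous exploration regret is precisely what the first constraint of $\Qcal(t)$ budgets, namely $\sum_{a}\widehat\Delta_{g,a}(t)\,x_{g,a}(t)\leq\sqrt{C\log T/P(t)}$, while arms leave contention once resolved. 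Summing the per-round regret over $t>t_{\min}$ and controlling the growth of $P(t)$ through the optimal value $q(t)$ of $\Qcal(t)$ — the minimum contested arm gains $\geq q(t)$ pulls per round — reduces the bound to an integral over the exploration trajectory whose worst case over gap profiles is $V$; the two $\log T$ factors come from the confidence width, from the horizon-length summation, and from the $c_\Gcal$ and burn-in $n_0=\ceil{16C\log T}$ terms. The key step here is to argue that the greedy LP solution loses at most a constant factor against the best offline schedule defining $V$.

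For the matching lower bound I would fix a near-worst gap profile for $V$ and build a finite family of instances in $\Mcal$ that differ only by perturbing, by the corresponding scale $\Delta_{g,a}\asymp\Delta$, which feasible arm is optimal for each group, keeping every perturbation compatible with the set system $\{\Acal_g\}$. A change-of-measure argument then shows that to drive the worst-group regret below $V$ an algorithm must collect $\gtrsim 1/\Delta^2$ samples on the relevant contested arms; crucially, because a single pull of arm $a$ is observed by every group containing $a$, the information budget is shared, and a counting argument over the $T$ rounds and the sharing degrees shows this requirement cannot be met for all groups at once — some group is necessarily starved and incurs regret $\gtrsim V$. Turning ``some group suffers'' into a bound on the expected maximum is then done by averaging the regret over the instance family (an Assouad- or Fano-type decomposition with one hypothesis coordinate per group) and invoking Pinsker's inequality to relate the testing error to $\Dkl\lesssim\Delta^2\cdot(\text{pulls})$.

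I expect the lower bound, and in particular making it match the upper bound's exact dependence on the sharing structure, to be the main obstacle. The delicate point is that observation sharing must be accounted for on the information side exactly as it is exploited on the algorithmic side: the same combinatorial quantity that lets $\Qcal(t)$ spread exploration over many groups must reappear as the constraint limiting how much total information any algorithm can acquire, so that the two values coincide up to constants and the $\log^2 T$ slack (we do not attempt to match logarithmic factors). A secondary technical point is ensuring the hard profile is simultaneously unresolvable enough that regret accrues over a $\Theta(T)$ fraction of the horizon and feasible within the budget and subGaussian (or mean-in-$[0,1]$) constraints of $\Mcal$, which is what pins down the worst-case gap scale and hence $V$.
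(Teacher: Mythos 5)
Your overall architecture is the paper's: the left inequality is indeed immediate from the infimum, and the right inequality is obtained by sandwiching both sides around a single complexity functional. Concretely, the paper's functional is $R_{T,\max}=\sup_{\Ical\in\Mcal}R(\epsilon_T(\Ical);\Ical)$ (plus a burn-in term $t_{\min}'$), the upper bound is exactly the instance-dependent bound of \cref{thm:instance_dependent_regret} with a supremum over instances, and the lower bound is the instance-dependent perturbation bound of \cref{thm:instance_lower_bound_v2}; the two are combined in \cref{thm:worst_case_minimax_bis}. Your upper-bound sketch (clean event, LP budgeting of per-round regret, epoch summation driven by $q(t)$) is the right one. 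Two bookkeeping remarks: the second $\log T$ in the theorem comes from the slack in the \emph{lower} bound ($\Rcal_T\gtrsim R_{T,\max}/\log T$), not from a second logarithm in the upper bound; and the burn-in contribution needs its own matching lower bound ($\Rcal_T\gtrsim t_{\min}'$, proved via an all-zero instance plus one good arm that the algorithm fails to pull in time, cf.\ \cref{lemma:preliminary_t_min}), which your sketch omits.

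The genuine gap is the step you yourself flag as the main obstacle: converting ``the max-group regret on $\Ical$ is small'' into ``some contested arm has received few pulls.'' Your Assouad/Fano decomposition with one coordinate per group is both harder than necessary (perturbations of shared arms cannot be assigned independently to groups, and $\MaxReg_T\geq\Ebb[\Reg_{g_0,T}]$ for a single well-chosen group already suffices) and does not by itself produce the required combinatorial quantity. The paper's resolution is an LP-feasibility argument: normalize the realized pull counts $x_{g,a}\propto\sum_{t\leq t_0}\1[a_g(t)=a]$ by the budget $\max_{g}\paren{z_T t_0+\Reg_{g,T}}$; the resulting $x$ is feasible for the program defining $M(z_T;\Ical)$, so by definition of $M$ as the optimal value there exists $a_0\in\Ccal^\star(z_T;\Ical)$ with $\Ebb[P_{a_0}(t_0)]\leq M(z_T)\paren{z_T\bar t_0+\MaxReg_T}\lesssim 1/z_T^2$. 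Only then does the standard two-point perturbation of $\mu_{a_0}$ by $\pm z_T$, Pinsker's inequality, and a coupling of the three runs deliver regret $\gtrsim z_T\bar t_0\gtrsim R(\epsilon_T;\Ical)/\log T$ on one of the two neighboring instances. Without this identification of the lower bound's ``counting argument over sharing degrees'' with the exact LP value $M(\epsilon;\Ical)$ that the algorithm itself optimizes through $\Qcal(t)$, the two sides of your sandwich need not meet, so this step must be supplied rather than deferred.
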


The proof can be found in \cref{sec:minimax_bounds}.
While the exact minimax collaborative regret $\Rcal_T(\Gcal)$ can depend in complex ways on $\{\Acal_g,g\in\Gcal\}$, upper and lower bounds on this minimax bound can be written in terms of simple combinatorial properties of the arms set system.
As can be expected, the collaborative regret depends on the amount of shared arms between groups. To give an example, consider the extreme case in which all groups are disjoint: $\Acal_g\cap\Acal_{g'}=\emptyset$ for all groups $g\neq g'$. Then, sharing information about rewards between groups is irrelevant and the problem again decomposes into one multi-armed bandit instance for each group. The corresponding worst-case collaborative regret is then naturally $\Ocal(\sqrt{\max_{g\in \Gcal} |\Acal_g|\cdot  T})$. On other extreme, if all arms are shared $\Acal_g=\Acal$ for all $g\in \Gcal$, this essentially corresponds to a single multi-armed bandit instance with longer horizon $|\Gcal|T$. The worst-case collaborative regret then becomes $\Ocal(\sqrt{ |\Acal|T/|\Gcal|})$ since the total regret can be evenly distributed between groups.

More generally, the collaborative regret will be reduced when the average number of groups sharing each arm is large. This is quantified by the average number of groups that can help the exploration of arms in any subset $S\subseteq\Acal$:
\begin{equation*}
    \help(S) := \frac{\abs{\set{g\in\Gcal:\Acal_g\cap S\neq\emptyset}}}{|S|}.
\end{equation*}
Among these groups, those which have near-optimal arms in $S$ are particularly important since they can contribute to the shared exploration at a much lower regret cost. We then introduce two quantities to serve as lower and upper bounds on the number of such groups. For convenience, for any set of groups $\Gcal'\subseteq \Gcal$ we denote by $\Cov(\Gcal'):=\bigcup_{g'\in\Gcal'}\Acal_{g'}$ the set of arms that groups in $\Gcal'$ cover. For $S\subseteq \Acal$, we define
\begin{align*}
    \NChelpLow(S)&:= \frac{1}{|S|}\cdot \min\set{|\Gcal'|:  \Gcal'\subseteq \Gcal ,S\subseteq \Cov(\Gcal')} \\
    \NChelpHigh(S)&:= \frac{1}{|S|}\cdot \min_{\substack{\Gcal'\subseteq \Gcal \text{ s.t.} \\ S\subseteq \Cov(\Gcal')}} \abs{\set{g\in\Gcal:\Acal_g\cap S\neq\emptyset,\; \Acal_g\subseteq \Cov(\Gcal') } } .
\end{align*}

In words, $\NChelpLow(S)$ counts the average minimum number of groups needed to cover the set $S$, while $\NChelpHigh$ counts the minimum (average) total number of groups that are covered by a cover of $S$. The set of groups $\Gcal'$ covering the arms $S$ will effectively correspond to groups $g\in\Gcal$ for which arms $\Acal_g\cap S$ are in contention for being optimal for $g$ and require additional exploration. In a pessimistic scenario, only groups $\Gcal'$ incur a low cost to explore these arms $S$ as quantified by $\NChelpLow(S)$. However, in a more optimistic scenario, groups that are covered by $\Cov(\Gcal')$ can potentially incur low cost also if arms in $S$ are approximately optimal within $\Cov(\Gcal')$. This is quantified by $\NChelpHigh(S)$.

Next, we combine the two types of exploration help from $\help(S)$ and $\NChelpLow(S),\NChelpHigh(S)$. We treat the terms from $\NChelpLow(S),\NChelpHigh(S)$ as correcting terms compared to the average number of groups sharing each arm $\help(S)$ and define
\begin{equation*}
    \combined_T^-(S):= \help(S) + \NChelpLow(S)^{3/2}\sqrt T \quad \text{and} \quad
    \combined_T^+(S):= \help(S) + \NChelpHigh(S)^{3/2}\sqrt T.
\end{equation*}
This quantifies the total exploration help groups can provide for arms in $S$ when balancing the two forms of exploration help. To be more precise, when needed to explore arms in $S$, the bottleneck corresponds to the worst-case subset of arms $S'\subseteq S$. This is captured in the following final quantity
\begin{equation}\label{eq:def_final_help}
    \Hf_T^+(S) :=\min_{\emptyset\subsetneq S'\subseteq S} \combined_T^+(S') \quad \text{and} \quad \Hf_T^-(S) :=\min_{\emptyset\subsetneq S'\subseteq S} \combined_T^-(S').
\end{equation} 
The following result gives quantitative bounds on the minimax collaborative regret using $\Hf_T^+(\Acal),\Hf_T^-(\Acal)$. Note that in many cases, both quantities can coincide. For instance, consider the case where $\Gcal$ contains all groups with $k$ arms within $\Acal$, where $k\leq|\Acal|$. Then, $\combined_T^+(S)$ and $\combined_T^-(S)$ are both minimized for sets $S\subseteq \Acal$ with $|S|=k$, which gives $\Hf_T^+(\Acal)=\Hf_T^-(\Acal)=(\sqrt{T/k}+1)/k$.

\begin{theorem}\label{thm:quantitative_minimax}
    For any horizon $T\geq 2$, the minimax collaborative regret for either Gaussian or Bernoulli reward model $\Mcal\in\{\Mcal_G,\Mcal_B\}$ satisfies
    \begin{equation*}
        \min\paren{\frac{T^{2/3}}{\Hf_T^+(\Acal)^{1/3}}, T} 
        \lesssim \Rcal_T(\Gcal,\Mcal) \leq \sup_{\Ical\in\Mcal} \MaxReg_T(\ColUCB;\Ical) \lesssim \frac{T^{2/3}}{\Hf_T^-(\Acal)^{1/3}} \log T
    \end{equation*}
\end{theorem}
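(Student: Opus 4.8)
The statement bundles three inequalities. The middle one, $\Rcal_T(\Gcal,\Mcal)\le \sup_{\Ical\in\Mcal}\MaxReg_T(\ColUCB;\Ical)$, is immediate since $\Rcal_T$ is an infimum over all joint strategies. It remains to establish the two outer bounds, and both rest on the same picture. To resolve a set $S$ of near-tied arms (pairwise gaps $\approx\Delta$) one needs $\approx|S|/\Delta^2$ \emph{shared} pulls, and there are two ways to pay for them. A group for which $S$ is near-optimal pays only $\approx\Delta$ per pull; pooling $n\cdot|S|$ such groups over the horizon resolves $\Delta$ at per-group cost $\approx 1/(n\Delta)$, which balanced against the do-nothing cost $T\Delta$ gives $\sqrt{T/n}$. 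A group for which $S$ is suboptimal pays $\Theta(1)$ per pull; pooling the $\help(S)|S|$ touching groups gives per-group cost $\approx 1/(\help(S)\Delta^2)$, which balanced against $T\Delta$ gives $T^{2/3}/\help(S)^{1/3}$. An optimal scheme uses whichever is cheaper for the adversary's chosen $\Delta$, and the worst bottleneck subset dominates; since $\min(\sqrt{T/n},T^{2/3}/\help^{1/3})\asymp T^{2/3}/(\help+n^{3/2}\sqrt T)^{1/3}$, this reproduces $T^{2/3}/\Hf_T(\Acal)^{1/3}$ with $n$ the relevant count of cheap groups. The whole gap between the two bounds is the gap between the number of cheap groups one can \emph{guarantee} ($\NChelpLow$) and the largest number an adversary is \emph{forced} to expose ($\NChelpHigh$).

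For the upper bound I would analyze $\ColUCB$ directly. After a union bound making all confidence intervals simultaneously valid (so $|\mu_a-\hat\mu_a(t)|\lesssim\sqrt{C\log T/P_a(t)}$), I run a potential/phase argument over geometric scales of $P(t)=\min_{a\in\Ccal(t)}P_a(t)$, writing $w(t)=\sqrt{C\log T/P(t)}$. Within a scale each group's per-round exploration regret is at most $w(t)$ by the first constraint of $\Qcal(t)$, and the number of rounds needed to double $P(t)$ is $\approx P(t)/q(t)$; summing $w(t)\cdot P(t)/q(t)$ over scales subject to a total of at most $T$ rounds yields the claimed rate once $q(t)$ is controlled. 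The crux is therefore a lower bound on the LP value: an explicit feasible solution to $\Qcal(t)$ certifying $q(t)\gtrsim \NChelpLow(\Ccal(t))+\help(\Ccal(t))\,w(t)$, where the first term assigns a minimum cover of $\Ccal(t)$ at rate one (these groups have the contention arms near-optimal, so $\widehat\Delta_{g,a}(t)\lesssim w(t)$ by the definition of $\widehat\Acal_g$ and the regret constraint is slack), and the second lets every touching group spend its full regret budget $w(t)$. Substituting $q(t)\gtrsim \NChelpLow+\help\,w(t)$ into the phase sum reproduces $\min(\sqrt{T/\NChelpLow},\,T^{2/3}/\help^{1/3})$ up to a $\log T$ factor; passing from $\Ccal(t)$ to its worst subset introduces the $\min_{S'}$ defining $\Hf_T^-(\Acal)$, and the exploitation regret of the $\argmax_a\hat\mu_a(t)$ arm is absorbed by standard $\UCB$ bookkeeping.

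For the lower bound I would fix $S'\subseteq\Acal$ minimizing $\combined_T^+$, set $m=|S'|$, $h=\help(S')$, $n=\NChelpHigh(S')$, plant on $S'$ a hidden best arm $\Delta$ above the rest under a uniform prior over which arm it is, and make all arms outside $S'$ either clearly best or clearly dominated so no exploration is needed there. The design choice dictating $\NChelpHigh$ rather than $\NChelpLow$ is this: to force $S'$ to genuinely be in contention, the planted arms must be (near-)optimal inside $\Cov(\Gcal')$ for a minimum cover $\Gcal'$; this automatically makes $S'$ near-optimal for \emph{every} group trapped in $\Cov(\Gcal')$, i.e.\ for the $\NChelpHigh(S')\cdot m$ groups counted by $\NChelpHigh$, so an optimal algorithm may recruit exactly that many cheap explorers and no harder construction exists. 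A change-of-measure argument (Le Cam / a $\Dkl$ computation between neighboring instances) then shows that unless the shared pull counts on $S'$ exceed $\approx m/\Delta^2$, some group fails to identify its best arm and pays $\gtrsim T\Delta$. Bounding the cheap and expensive pulling capacities by $n\,m\,T$ and $h\,m\,T$ and then choosing $\Delta$ to balance $T\Delta$ against the cheaper of $1/(n\Delta)$ and $1/(h\Delta^2)$ yields $\Rcal_T\gtrsim\min(\sqrt{T/n},\,T^{2/3}/h^{1/3})\asymp T^{2/3}/\combined_T^+(S')^{1/3}$, which is $\gtrsim T^{2/3}/\Hf_T^+(\Acal)^{1/3}$; the $\min(\cdot,T)$ simply caps the regime $T\Delta\ge T$.

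The main obstacle in both directions is the tight translation between the geometry of $\{\Acal_g\}$ and the achievable exploration throughput. On the upper-bound side this is the explicit feasible LP solution certifying $q(t)\gtrsim \NChelpLow+\help\,w(t)$ uniformly as the contention set and its subsets evolve, together with making the two regimes combine as a minimum (equivalently, handling $\min_{S'}$ rather than a single $S'$). On the lower-bound side it is ensuring the construction exposes no more than $\NChelpHigh(S')\cdot m$ cheap explorers while still forcing $S'$ into contention, and simultaneously limiting how much the $h\,m$ expensive groups can help without themselves reaching the target regret --- exactly the balance producing the $T^{2/3}$ rate and the $3/2$ exponent in $\combined_T^\pm$. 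The discrepancy between $\NChelpLow$ and $\NChelpHigh$ is intrinsic to these purely combinatorial surrogates and is what separates the two bounds; closing it would require the sharper instance-level control underlying \cref{thm:worst_case_minimax}.
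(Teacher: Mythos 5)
Your overall architecture matches the paper's: the lower bound construction you describe (a near-tied set $S'$ minimizing $\combined_T^+$, arms outside made clearly best or clearly dominated so that groups trapped in $\Cov(\Gcal')$ explore for free while the $\help(S')|S'|$ touching groups pay $\Theta(1)$ per pull, followed by a two-point change of measure and the balance $\min(\sqrt{T/n},T^{2/3}/h^{1/3})\asymp T^{2/3}/\combined_T^+(S')^{1/3}$) is essentially the paper's proof verbatim, including the correct identification of why $\NChelpHigh$ rather than $\NChelpLow$ appears. For the upper bound the paper does not redo a phase analysis of \ColUCB; it invokes the already-proven instance-dependent bound (\cref{thm:instance_dependent_regret} via \cref{thm:worst_case_minimax_bis}) and reduces everything to a uniform lower bound on $M(\epsilon;\Ical)$, which it obtains by LP \emph{duality} and a decomposition of the dual optimum over nested subsets. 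Your route is a reorganization of the same analysis, so the only substantive question is your replacement of that duality step by an explicit primal certificate.

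That step is where the genuine gap lies. You claim a feasible solution to $\Qcal(t)$ certifying $q(t)\gtrsim \NChelpLow(\Ccal(t))+\help(\Ccal(t))\,w(t)$, with the $\min_{S'}$ over subsets introduced only afterwards. No such certificate can exist: the value of the covering LP is governed by the worst \emph{subset} of $\Ccal(t)$, not by $\Ccal(t)$ itself. Concretely, if $\Ccal(t)=\{a_1,\dots,a_{100}\}$ with $a_1,\dots,a_{99}$ feasible only for a single group $g_1$ and $a_{100}$ only for $g_2$, the per-group budget $\sum_a x_{g,a}\le 1$ forces $q(t)\le 1/99$, whereas $\NChelpLow(\Ccal(t))=\help(\Ccal(t))=1/50$; so the claimed bound fails already at $w(t)=0$. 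The correct statement is $q(t)\gtrsim\min_{\emptyset\subsetneq S'\subseteq\Ccal(t)}\{\cdots\}$ \emph{from the outset}, which is exactly what emerges from the dual (and is why $\Hf_T^-$ is defined with that min built in); the theorem survives, but your intermediate claim does not. A second, related inversion: you propose to "assign a minimum cover of $\Ccal(t)$ at rate one" and assert those groups have the contention arms near-optimal. An arbitrary minimum cover may consist of groups for which these arms are far from optimal (so the gap constraint of $\Qcal(t)$ bites); the argument that works, and that the paper uses, is the converse --- the groups with $\epsilon$-near-optimal arms in $S\subseteq\Ccal^\star(\epsilon;\Ical)$ themselves form \emph{a} cover of $S$, hence number at least $\NChelpLow(S)\,|S|$. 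Fixing your sketch therefore requires importing the duality/nested-decomposition argument (or an equivalent max-flow--min-cut argument) rather than the primal assignment you describe; with that substitution, and the burn-in term $t_{\min}$ accounted for, your plan coincides with the paper's.
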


We present the proof in \cref{sec:minimax_bounds}.
As a simple remark, \cref{thm:quantitative_minimax} implies that the collaborative regret is at most the regret where each group minimizes their regret individually. Indeed, for any set $\emptyset\subsetneq S\subseteq\Acal$, we have $\NChelpLow(S) \geq 1/\max_{g\in\Gcal}|\Acal_g|$ since any group can cover at most $\max_{g\in\Gcal}|\Acal_g|$ arms in $S$. Hence, the upper bound for the collaborative regret of \ColUCB is at most $\widetilde\Ocal(T^{2/3} /\Hf_T^-(\Acal)) \leq \widetilde\Ocal(\sqrt{\max_{g\in\Gcal} |\Acal_g|\cdot T})$. Also, note that \cref{thm:quantitative_minimax} recovers this worst-case collaborative regret in the individual setting up to log factors since $\NChelpLow=\NChelpHigh$ when the arm sets $\{\Acal_g,g\in\Gcal\}$ are all disjoint.

More importantly, \cref{thm:quantitative_minimax} gives insights into when sharing exploration between groups is helpful compared to minimizing regret individually. As an example, whenever
\begin{equation*}
    \min_{\emptyset\subsetneq S\subsetneq\Acal} \help(S) \gg \frac{\sqrt T}{\max_{g\in\Gcal}|\Acal_g|^{3/2}},
\end{equation*}
the worst-case collaborative regret is strictly improved by sharing the exploration effort compared to the groups behaving individually (see \cref{lemma:sufficient_condition_improve} for a proof).
On the other hand, if we omit the help from $\NChelpHigh,\NChelpLow$, considering the lower bound of \cref{thm:quantitative_minimax} we do not necessarily always expect a significant improvement of the worst-case collaborative regret compared to individual regret minimization when $\min_{\emptyset\subsetneq S\subsetneq\Acal} \help(S) \ll \sqrt T$.

\subsection{Instance-dependent collaborative regret bounds}

In this section, we turn to instance-dependent collaborative regret bounds for \ColUCB. In many cases these will significantly improve over the bounds from \cref{thm:quantitative_minimax} which can be overly conservative in specific instances.
The worst-case regret bounds depend on $\Hf_T^+(\Acal)$ and $\Hf_T^-(\Acal)$. By their definition in \cref{eq:def_final_help}, these focus on the exploration help for the worst-case subset of arms $\emptyset\subsetneq S\subseteq \Acal$. For a specific instance $\Ical$, however, it suffices to focus the shared exploration effort on undifferentiated arms that are in contention for being optimal in one of the groups. Precisely, we first define the minimum suboptimality gap for group $g\in\Gcal$ as $\Delta_{g,\min}:=\min\{\mu_{a_1}-\mu_{a_2},\,a_1\neq a_2\in\Acal_g,\, \mu_{a_1}=\mu_g^\star\}$. Then, for any tolerance $\epsilon>0$, we can define the set of arms in $\epsilon$-contention as
\begin{equation*}\Ccal^\star(\epsilon;\Ical):=\bigcup_{g\in\Gcal:\Delta_{g,\min}\leq \epsilon}\set{a\in\Acal_g:
    \mu_g^\star - \mu_a \leq \epsilon }.
\end{equation*}
Intuitively, an algorithm minimizing the collaborative regret should focus its shared exploration on arms in $\Ccal^\star(\epsilon;\Ical)$ for an adequate tolerance parameter $\epsilon>0$.

Next, we define the following linear program for any tolerance $\epsilon\geq 0$,
\begin{equation*}
    \begin{aligned}
M(\epsilon;\Ical) := \max_{x\geq 0,M\geq 0}    M \quad \textrm{s.t.} \quad & \sum_{a\in  \Acal_g} \max\paren{ \Delta_{g,a},\epsilon} \cdot x_{g,a} \leq 1, 
 &g\in\Gcal\\
 & \sum_{g\in\Gcal:a\in\Acal_g} x_{g,a} \geq M, &a\in \Ccal^\star(\epsilon;\Ical).
\end{aligned} 
\end{equation*}
Solving this linear program essentially determines the optimal shared exploration for arms in $\Ccal^\star(\epsilon;\Ical)$.
Note that since the sets $\Ccal^\star(\epsilon;\Ical)$ are non-increasing as the tolerance parameter decays $\epsilon\to0^+$, $M(\cdot;\Ical)$ is non-decreasing as $\epsilon\to 0^+$. 
With this quantity at hand, we define the two following functions:
\begin{equation*}
    T(\epsilon;\Ical,\sigma):=\int_{\epsilon}^{1} \frac{\sigma\cdot dz}{M(\sigma z;\Ical)z^4} \quad\text{and}\quad  R(\epsilon;\Ical,\sigma):=\int_{\epsilon}^{1} \frac{\sigma^2\cdot dz}{M(\sigma z;\Ical)z^3},
\end{equation*}
for $\sigma>0$ and $\epsilon\in(0,\sigma]$. Here, $\sigma$ will correspond to the noise parameter for the reward distributions. By default, we will write $T(\epsilon;\Ical):=T(\epsilon;\Ical,1)$ and $R(\epsilon;\Ical):=R(\epsilon;\Ical,1)$.

At the high level, $T(\epsilon;\Ical)$ estimates the horizon necessary to determine optimal actions up to the tolerance $\epsilon$ for all groups, while $R(\epsilon;\Ical)$ measures the collaborative regret accumulated up until that horizon. More precisely, these would correspond to the horizon and regret necessary in hindsight, to certify which are arms in $\epsilon$-contention in $\Ical$ from the observed reward samples. The following result gives an instance-dependent collaborative regret bounds on \ColUCB in terms of the functions $T(\cdot;\Ical)$ and $R(\cdot;\Ical)$. We give a proof sketch in \cref{sec:proof_sketch} and give full proof details in \cref{sec:instance_dependent_UB}.

\begin{theorem}\label{thm:instance_dependent_regret}
    There exists a universal constant $c_0>0$ such that the following holds. Fix any instance $\Ical$ with value distributions that are subGaussian with parameter $\sigma^2$ for $\sigma>0$. Denote $\Delta_{\max} := \max_{g\in\Gcal,a\in\Acal_g}\Delta_{g,a}$. Then, for any $T\geq 2$, 
    \begin{equation*}
         \MaxReg_T(\ColUCB;\Ical) \leq c_0 c_\Gcal\cdot  ( R(\epsilon;\Ical,\sigma)\log T + t_{\min}\Delta_{\max}),
    \end{equation*}
    where $\epsilon\in(0,1]$ satisfies $T(\epsilon;\Ical,\sigma)= T$.
\end{theorem}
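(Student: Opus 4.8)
The plan is to run a potential-function argument that tracks, on a high-probability clean event, how the pull count of the least-explored contested arm grows over time, and to convert this growth into the integrals defining $T(\cdot;\Ical)$ and $R(\cdot;\Ical)$ via a change of variables. First I would fix the clean event $\Ecal$ on which every confidence interval is valid, i.e.\ $|\hat\mu_a(t)-\mu_a|\leq\sigma\sqrt{C\log T/P_a(t)}$ for all $a\in\Acal$ and $t_{\min}<t\leq T$; subGaussian concentration, a union bound over arms and pull-counts, and the burn-in guarantee $P_a(t)\geq n_0=\Theta(\log T)$ give $\Pbb[\Ecal^c]\leq\poly(1/T)$, so $\Ecal^c$ contributes only $O(\Delta_{\max})$ to the collaborative regret. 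The burn-in rounds themselves cost at most $t_{\min}\Delta_{\max}$, the second term of the bound, so it remains to control the post-burn-in regret on $\Ecal$. I would take $\sigma=1$ and recover general $\sigma$ by rescaling rewards by $1/\sigma$, which maps $\Delta_{g,a}\mapsto\Delta_{g,a}/\sigma$ and is exactly the substitution $z=\delta/\sigma$ producing the $\sigma$-scaled integrands.

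Write $\delta_t:=\sqrt{C\log T/P(t)}$ for the current resolution. On $\Ecal$ I would first establish the structural facts: (i) no group ever drops its true best arm from $\widehat\Acal_g(t)$; (ii) every arm surviving in $\Ccal(t)$ has true gap $O(\delta_t)$ in each group contesting it, so $\Ccal(t)\subseteq\Ccal^\star(c\delta_t;\Ical)$, while conversely genuinely $c\delta_t$-contested arms stay in $\Ccal(t)$, and the empirical gaps obey $\widehat\Delta_{g,a}(t)\leq 3\max(\Delta_{g,a},\delta_t)$; and (iii) each group's exploitation arm $\argmax_a\hat\mu_a(t)$ is $O(\delta_t)$-suboptimal. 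The central lemma lower bounds the value $q(t)$ of $\Qcal(t)$ by the instance-dependent matching value: taking an optimal $x^\star$ for $M(c\delta_t;\Ical)$ and scaling to $x=\delta_t x^\star$, fact (ii) shows $x$ is feasible for $\Qcal(t)$ up to constants — the regret budget holds (up to a factor $3$) because $\widehat\Delta_{g,a}(t)\leq 3\max(\Delta_{g,a},\delta_t)$ and $\sum_a\max(\Delta_{g,a},c\delta_t)x^\star_{g,a}\leq1$, the simplex constraint holds because $\max(\Delta_{g,a},c\delta_t)\geq c\delta_t$ forces $\sum_a x^\star_{g,a}\leq 1/(c\delta_t)$, and the coverage constraint is inherited — yielding $q(t)\gtrsim\delta_t\,M(c\delta_t;\Ical)$.

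Since the coverage constraint of $\Qcal(t)$ gives every contested arm at least $q(t)$ expected pulls per round, the least-explored contested arm, hence $P(t)$, grows on average by at least $q(t)\gtrsim\delta_t M(c\delta_t;\Ical)$; a Freedman-type martingale bound upgrades this to the realized trajectory, folding its failure into $\Ecal$. Using $P=C\log T/\delta^2$, so $dP=-2C\log T\,\delta^{-3}d\delta$, the progress bound gives $dt\lesssim\log T\,d\delta/(\delta^4 M(c\delta;\Ical))$. I would then split the rounds by whether $\delta_t\geq\epsilon$, noting $\delta_t$ is nonincreasing. For $\delta_t\geq\epsilon$, facts (ii)--(iii) and the first constraint of $\Qcal(t)$ cap every group's per-round regret by $O(\delta_t)$, so this contribution is
\begin{equation*}
    \sum_{t:\,\delta_t\geq\epsilon}O(\delta_t)\;\lesssim\;\log T\int_{\epsilon}^{1}\frac{d\delta}{\delta^3 M(c\delta;\Ical)}\;\asymp\;\log T\cdot R(\epsilon;\Ical).
\end{equation*}
For $\delta_t<\epsilon$, each group's per-round regret is $O(\epsilon)$ and there are at most $T=T(\epsilon;\Ical)$ such rounds, so this contribution is $O(\epsilon\,T(\epsilon;\Ical))$; the elementary inequality $\epsilon\int_\epsilon^1\frac{d\delta}{\delta^4 M(c\delta;\Ical)}\leq\int_\epsilon^1\frac{d\delta}{\delta^3 M(c\delta;\Ical)}$, valid since $\epsilon\leq\delta$ on $[\epsilon,1]$, shows $\epsilon\,T(\epsilon;\Ical)\leq R(\epsilon;\Ical)$, so it too is $O(R(\epsilon;\Ical))$.

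Because the per-round bounds in both regimes hold simultaneously for every group, the same estimate bounds $\max_g\Reg_{g,T}$ up to martingale fluctuations; passing to $\Ebb[\max_g\Reg_{g,T}]$ then requires a per-group Azuma/Freedman concentration with a union bound over $\Gcal$, whose $\log|\Gcal|$ cost and the constant $C=60c_\Gcal$ together contribute the factor $c_\Gcal$. Adding the burn-in and $\Ecal^c$ terms gives $\MaxReg_T(\ColUCB;\Ical)\lesssim c_\Gcal\,(R(\epsilon;\Ical,\sigma)\log T+t_{\min}\Delta_{\max})$. I expect the main obstacle to lie in the two places where the clean combinatorial picture meets the stochastic, discrete reality: first, the LP comparison of the second paragraph, which must remain robust to the $O(\delta_t)$ slack between empirical and true gaps and to the fact that the running intersection $\Ccal(t)=\bigcup_g\bigcap_{s\leq t}\Ccal_g(s)$ (so eliminated arms never return) only approximates $\Ccal^\star(c\delta_t;\Ical)$, so that $q(t)$ genuinely tracks $\delta_t M(\Theta(\delta_t);\Ical)$ at every resolution; and second, controlling the realized pull counts and realized group regrets so that the deterministic integrals faithfully track the random trajectory $P(t)$, including its jumps as arms leave contention at different resolutions across different groups.
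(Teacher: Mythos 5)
Your proposal follows essentially the same route as the paper's proof: a clean event giving two-sided gap estimates and the inclusion $\Ccal(t)\subseteq\Ccal^\star(O(\epsilon(t));\Ical)$, the rescaled-feasibility argument showing $q(t)\gtrsim\epsilon(t)M(O(\epsilon(t));\Ical)$, Freedman's inequality to turn the coverage constraint into realized growth of $P(t)$ over dyadic resolution epochs, per-round regret $O(\epsilon(t))$ for both the LP-sampled and default arms, a per-group Freedman bound with a union bound over $\Gcal$, and finally the split of $\sum_t\epsilon(t)$ at the threshold where $T(\cdot)$ hits $T$ together with $\epsilon\,T(\epsilon;\Ical)\leq R(\epsilon;\Ical)$. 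The only differences are cosmetic (you split directly at $\epsilon$ rather than at the slightly larger $\eta(T)$ the paper introduces, and you phrase the epoch bound as a change of variables), and the claims you flag as the main obstacles are exactly the case analysis carried out in the paper's Lemmas~\ref{lemma:estimation_gaps} and~\ref{lemma:upper_bound_length_epoch}.
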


Note that any Bernoulli distribution is subGaussian with parameter $1$: specialized to this case, we can replace $T(\epsilon;\Ical,\sigma)$ (resp. $R(\epsilon;\Ical,\sigma)$) with $T(\epsilon;\Ical)$ (resp. $R(\epsilon;\sigma)$) within the statement of \cref{thm:instance_lower_bound_v2}.
We can check that the definition of $\epsilon\in(0,1]$ such that $T(\epsilon;\Ical,\sigma)=T$ is also well-defined since $T(\cdot;\Ical,\sigma)$ forms a bijection from $(0,1]$ to $\Rbb_+$ (see \cref{lemma:continuity_R_T}). The second term $t_{\max}\Delta_{\max}$ from \cref{thm:instance_lower_bound_v2} is necessary in some sense: $t_{\min}/\log T$ rounds are necessary to ensure that each arm is pulled at least once (see \cref{lemma:preliminary_t_min}).

The regret bound from \cref{thm:instance_dependent_regret} implies that \ColUCB adapts to each instance since the functionals $T(\cdot;\Ical)$ and $R(\cdot;\Ical)$ are defined with respect to sets of arms in contention for the specific instance $\Ical$ (we recall that \ColUCB does not have access to the sets $\Ccal^\star(\epsilon;\Ical)$). Following the discussion above, $R(\epsilon;T)$ corresponds to the optimal regret for finding optimal actions up to $\epsilon$ for all groups. This is formalized in the following instance-dependent lower bound.

\begin{theorem}\label{thm:instance_lower_bound_v2}
    There exists a universal constants $c_0,c_1>0$ such that the following holds. Let {\sf ALG} be an algorithm for the groups and $T\geq 2$. Suppose that for some Gaussian instance $\Ical$ with reward distribution variance $1$, its collaborative regret satisfies
    \begin{equation*}
        \MaxReg_T({\sf ALG};\Ical)\leq c_0 \frac{R(\epsilon;\Ical)}{\log T},
    \end{equation*}
    where $\epsilon\in(0,1]$ satisfies $T(\epsilon;\Ical)= T$. Then, there exists another Gaussian instance $\Ical'$ obtained by modifying the mean value of a single arm, for which
    \begin{equation*}
        \MaxReg_T({\sf ALG};\Ical')\geq c_1 \frac{R(\epsilon;\Ical)}{\log T}.
    \end{equation*}
\end{theorem}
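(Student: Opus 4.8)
The plan is to prove this by the two-point (change-of-measure) method, exploiting that a low collaborative regret on $\Ical$ forces the algorithm to under-explore some arm that is in contention, which a single-arm perturbation can then exploit. Write $\rho := c_0 R(\epsilon;\Ical)/\log T$ for the assumed bound and recall $\MaxReg_T({\sf ALG};\Ical)\geq \Ebb_\Ical[\Reg_{g,T}]$ for every group $g$, so each group's expected regret is at most $\rho$. Since $\Ebb_\Ical[\Reg_{g,T}] = \sum_{a\in\Acal_g}\Delta_{g,a}\Ebb_\Ical[N_{g,a}(T)]$, where $N_{g,a}(T)$ counts pulls of $a$ by $g$, this directly controls the expected pull counts of suboptimal arms. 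I will pair this with the standard Bretagnolle--Huber inequality: for two instances differing only in the mean of a single arm $a^\star$ and any history-measurable event $E$, $\Pbb_\Ical(E)+\Pbb_{\Ical'}(E^c)\geq \tfrac12\exp(-D)$ with $D=\Ebb_\Ical[N_{a^\star}(T)]\cdot\Dkl(\Dcal_{a^\star}\|\Dcal'_{a^\star})$, where $N_{a^\star}(T)=\sum_{g}N_{g,a^\star}(T)$ is the total number of pulls of $a^\star$ (rewards are public, so every such pull is informative), and for unit-variance Gaussians $\Dkl = (\mu_{a^\star}-\mu'_{a^\star})^2/2$.

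The core mechanism is as follows. Suppose I can locate a scale $z\in[\epsilon,1]$ and an arm $a^\star$ in $z$-contention for some group $g$, so $a^\star\in\Acal_g$ with $\Delta_{g,a^\star}\asymp z$, that is under-explored on $\Ical$ in the sense $\Ebb_\Ical[N_{a^\star}(T)]\lesssim 1/z^2$. Form $\Ical'$ by raising $\mu_{a^\star}$ by $2\Delta_{g,a^\star}\asymp z$, so that on $\Ical'$ arm $a^\star$ becomes the unique best arm of $g$ by a margin $\asymp z$ and hence \emph{every} other arm of $g$ has its suboptimality gap increased by $\asymp z$. Taking $E=\{N_{g,a^\star}(T)\geq T/2\}$, Markov's inequality gives $\Pbb_\Ical(E)\leq 2\Ebb_\Ical[N_{a^\star}(T)]/T$, which is small, while the chosen shift makes $D=\Ebb_\Ical[N_{a^\star}(T)]\cdot(2\Delta_{g,a^\star})^2/2=O(1)$; Bretagnolle--Huber then yields $\Pbb_{\Ical'}(E^c)\gtrsim 1$. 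On $E^c$ group $g$ pulls arms other than $a^\star$ at least $T/2$ times, each now $\asymp z$-suboptimal, so $\Ebb_{\Ical'}[\Reg_{g,T}]\gtrsim z\,T$. It therefore suffices to find such a pair $(z,a^\star)$ with $z\,T\gtrsim R(\epsilon;\Ical)/\log T$.

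The existence of this pair is where the program $M(\cdot;\Ical)$ and the integral structure of $R,T$ enter, and is the main obstacle. I would use a dyadic decomposition over $z\in[\epsilon,1]$: at scale $z$, certifying the best arm of each group against all single-arm shifts of size $z$ requires pulling every arm of $\Ccal^\star(z;\Ical)$ about $1/z^2$ times, and by LP duality for $M(z;\Ical)$ the cheapest way to do so — charging a near-optimal arm of $g$ a cost $\asymp z$ per pull and a gap-$\Delta_{g,a}$ arm a cost $\Delta_{g,a}$, i.e.\ exactly the weight $\max(\Delta_{g,a},z)$ in the LP — costs some group regret $\gtrsim 1/(z^2 M(z;\Ical))$. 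Summing the dyadic contributions reproduces $\sum_z 1/(z^2 M(z;\Ical))\asymp \int_\epsilon^1 \frac{dz}{M(z;\Ical)z^3}=R(\epsilon;\Ical)$, with $T(\epsilon;\Ical)=T$ pinning the finest relevant scale. Thus if the algorithm were well-explored at every scale, its regret on $\Ical$ would be $\gtrsim R(\epsilon;\Ical)\gg\rho$, a contradiction, so some scale carries an under-explored contention arm.

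The remaining and most delicate step is quantitative: turning ``some scale is under-explored'' into a single scale $z^\star$ and single arm $a^\star$ whose perturbation already forces regret $\gtrsim R(\epsilon;\Ical)/\log T$. Because a single-arm modification only charges groups at one scale whereas $R(\epsilon;\Ical)$ aggregates the $\Theta(\log T)$ dyadic scales in $[\epsilon,1]$ — here one checks that $T(\epsilon;\Ical)=T$ forces $\log(1/\epsilon)=O(\log T)$, since otherwise the forced burn-in regret alone would exceed $\rho$ and the hypothesis would be vacuous — a pigeonhole over scales selects a bottleneck $z^\star$ contributing a $1/\log T$ fraction of $R(\epsilon;\Ical)$; this is precisely the source of the $1/\log T$ gap relative to \cref{thm:instance_dependent_regret}. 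I expect the technical heart to be (a) the LP-duality argument converting the per-group regret budget into a pull lower bound certified by $M(z;\Ical)$, and (b) controlling the multi-group coupling: raising $\mu_{a^\star}$ perturbs every group containing $a^\star$, so one must verify that the selected group $g$ genuinely suffers the claimed regret and that both the event $E$ and the bound $D=O(1)$ survive. Once $(z^\star,a^\star)$ and $g$ are fixed, the change-of-measure computation of the second paragraph closes the argument with $c_1$ a small absolute constant.
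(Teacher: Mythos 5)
Your overall architecture is the right one and matches the paper's: pigeonhole a single bottleneck scale $z^\star$ capturing a $1/\log T$ fraction of $R(\epsilon;\Ical)$ (the paper does this by taking $z_T=\argmin_{z\in[\epsilon,(1+\epsilon)/2]}M(z;\Ical)z^2$ and showing $R(\epsilon)\asymp_{\log T} 1/(M(z_T)z_T^2)$), use the LP defining $M$ to locate an under-pulled arm of $\Ccal^\star(z^\star;\Ical)$, and finish with a single-arm perturbation of size $\asymp z^\star$ plus a change of measure (your Bretagnolle--Huber/Markov variant is an acceptable substitute for the paper's coupling-plus-symmetrization over $\Jcal_+,\Jcal_-$). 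However, there is a genuine quantitative gap at the step you flag as the reduction target: you ask for an arm with $\Ebb_\Ical[N_{a^\star}(T)]\lesssim 1/z^2$ over the \emph{full} horizon and then claim regret $\gtrsim zT$ on the alternative. Neither half is attainable in general. The regret budget on $\Ical$ only charges pulls of $a^\star$ through $\Delta_{g',a^\star}$, and a contention arm can have $\Delta_{g',a^\star}=0$ (or $\ll z$) for some group $g'$, which may then pull it $\Theta(T)$ times for free; so no arm of $\Ccal^\star(z;\Ical)$ need be under-explored at horizon $T$. Symmetrically, $zT$ can vastly exceed $R(\epsilon;\Ical)\log T$, the regret that \ColUCB provably achieves (\cref{thm:instance_dependent_regret}), so a lower bound of order $zT$ cannot hold except when $z\asymp\epsilon$, and choosing $z=\epsilon$ gives only $\epsilon T=\epsilon T(\epsilon)\leq R(\epsilon)$, which can be far below $R(\epsilon)/\log T$ when the mass of the integral defining $R$ sits at larger scales.

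The missing idea is a \emph{truncation of the horizon}: the paper runs the entire argument on the first $t_0\asymp \frac{1}{M(z_T)z_T^3}\leq T$ rounds. Within that window the LP gives, for any realization,
\begin{equation*}
\min_{a\in\Ccal^\star(z_T;\Ical)}P_a(t_0)\;\leq\; M(z_T)\cdot\max_{g}\sum_{a\in\Acal_g}\max(\Delta_{g,a},z_T)\,N_{g,a}(t_0)\;\leq\; M(z_T)\bigl(z_T t_0+\max_g\Reg_{g,T}\bigr)\;\lesssim\;\frac{1}{z_T^2},
\end{equation*}
where the $z_T t_0$ term is exactly what absorbs the ``free'' pulls of near-optimal contention arms that defeat your full-horizon count. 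The change of measure then forces regret $\asymp z_T t_0\asymp \frac{1}{M(z_T)z_T^2}\asymp R(\epsilon;\Ical)/\log T$ on one of the perturbed instances --- the correct target, rather than $z_T T$. Two smaller repairs: the perturbation should move $\mu_{a^\star}$ to (second-best value of $g$) $\pm z^\star$, not by $2\Delta_{g,a^\star}$, since contention arms may have $\Delta_{g,a^\star}=0$; and your event $E$ should likewise be defined at horizon $t_0$. With the truncation in place, your LP-duality and change-of-measure steps go through essentially as in the paper.
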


The proof can be found in \cref{sec:instance_dependent_LB}.
Importantly, the lower bounds from \cref{thm:instance_lower_bound_v2} imply that \ColUCB is instance-dependent optimal up to logarithmic factors.

As a last remark, the definitions $T(\cdot;\Ical)$ and $R(\cdot;\Ical)$ in their integral form are somewhat involved to capture instances in which $M(\epsilon;\Ical)$ may grow wildly as $\epsilon\to 0^+$. However, in typical scenarios where this growth is controlled, the bound from \cref{thm:instance_dependent_regret} can be significantly simplified. For example, if the instance satisfies the following condition, we present a simplified regret bound below.

\begin{condition}\label{condition:nice_instance}
    There exist constants $C_1\geq 1$ and $\alpha\in(0,2]$ such that for any $0<z_1\leq z_2\leq 1$, 
    \begin{equation*}
        M(z_1;\Ical) \leq C_1 \paren{\frac{z_2}{z_1}}^{2-\alpha} M(z_2;\Ical).
    \end{equation*}
\end{condition}

Under \cref{condition:nice_instance} we can rewrite \cref{thm:instance_dependent_regret} as follows. We include a proof in \cref{subsec:simplified_statement}.

\begin{corollary}\label{cor:simplified_instance_dependent_regret}
    Fix an instance $\Ical$ satisfying \cref{condition:nice_instance} for the constants $C_1\geq 1$ and $\alpha\in(0,2]$. Then, for any $T\geq 1$,
    \begin{equation*}
        \MaxReg_T(\ColUCB;\Ical) \lesssim \epsilon^\star T\log T +t_{\min}\Delta_{\max},
    \end{equation*}
    where $\epsilon^\star = \min\set{z\geq (0,1]: M(z;\Ical)\cdot z^3 T \geq 1}\cup\{1\}$, and where $\lesssim$ only hide constant factors depending on $c_\Gcal$, $C_1$, and $\alpha$.
\end{corollary}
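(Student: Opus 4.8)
The plan is to start from the exact expression in Theorem~\ref{thm:instance_dependent_regret}, namely $\MaxReg_T(\ColUCB;\Ical)\lesssim R(\epsilon;\Ical)\log T + t_{\min}\Delta_{\max}$ where $\epsilon$ solves $T(\epsilon;\Ical)=T$, and to reduce the integral $R(\epsilon;\Ical)$ to the simple closed form $\epsilon^\star T$ using Condition~\ref{condition:nice_instance} as a regularity assumption that controls how $M(\cdot;\Ical)$ can blow up. The key observation to exploit is the structural relationship between the two integrals: both $T(\epsilon;\Ical)=\int_\epsilon^1 \frac{dz}{M(z;\Ical)z^4}$ and $R(\epsilon;\Ical)=\int_\epsilon^1 \frac{dz}{M(z;\Ical)z^3}$ integrate the same quantity $1/(M(z;\Ical)z^3)$, differing only by the extra factor $1/z$ in the horizon integral, which means $R$ is a ``$z$-weighted'' version of $T$. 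Under Condition~\ref{condition:nice_instance}, the integrand is polynomially controlled, so both integrals will be dominated by their behavior near the lower endpoint $\epsilon$, and each will be comparable (up to the constants $C_1,\alpha$) to its integrand evaluated at $z=\epsilon$ times a length scale.

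First I would show that Condition~\ref{condition:nice_instance} forces the integrands to be (up to constants) monotone enough that the integrals are governed by the endpoint: specifically, the bound $M(z_1)\le C_1(z_2/z_1)^{2-\alpha}M(z_2)$ rearranges to $\frac{1}{M(z)z^{4}}$ being dominated by a power $z^{-(2-\alpha)-4+\cdots}$ type expression, and since $\alpha\in(0,2]$ the relevant exponent ensures the integral converges to a multiple of the endpoint value. Concretely I would estimate $T(\epsilon;\Ical) \asymp \frac{1}{M(\epsilon;\Ical)\epsilon^{4}}\cdot\epsilon = \frac{1}{M(\epsilon;\Ical)\epsilon^{3}}$ and similarly $R(\epsilon;\Ical)\asymp \frac{1}{M(\epsilon;\Ical)\epsilon^{3}}\cdot\epsilon = \frac{1}{M(\epsilon;\Ical)\epsilon^{2}}$, with the constants absorbed into $\lesssim$. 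Combining these two endpoint estimates gives the clean relation $R(\epsilon;\Ical)\asymp \epsilon\cdot T(\epsilon;\Ical) = \epsilon\cdot T$, using $T(\epsilon;\Ical)=T$. Hence $R(\epsilon;\Ical)\log T \lesssim \epsilon T\log T$.

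The remaining step is to identify $\epsilon$ with $\epsilon^\star=\min\{z\in(0,1]: M(z;\Ical)z^3 T\ge 1\}\cup\{1\}$. Since $T(\epsilon;\Ical)=T$ and I have shown $T(\epsilon;\Ical)\asymp 1/(M(\epsilon;\Ical)\epsilon^3)$, the defining equation $T(\epsilon;\Ical)=T$ becomes $M(\epsilon;\Ical)\epsilon^3 T\asymp 1$, which is exactly the threshold condition defining $\epsilon^\star$ up to the constant factors permitted by $\lesssim$. I would verify that this comparison is two-sided so that $\epsilon\asymp\epsilon^\star$, using that $M(z;\Ical)z^3$ is, under Condition~\ref{condition:nice_instance}, monotone-enough in $z$ (the map $z\mapsto M(z;\Ical)z^3$ has controlled oscillation, so the crossing points of $M(z)z^3T=1$ and of $T(z;\Ical)=T$ agree up to constant ratios). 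Substituting $\epsilon\asymp\epsilon^\star$ into $R(\epsilon;\Ical)\log T\lesssim \epsilon T\log T$ yields $\MaxReg_T(\ColUCB;\Ical)\lesssim \epsilon^\star T\log T + t_{\min}\Delta_{\max}$, as claimed.

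\textbf{Main obstacle.} The delicate part is making the two-sided endpoint estimate rigorous: one has to check that the integrals genuinely concentrate at the lower endpoint rather than accumulating mass over the whole range $[\epsilon,1]$, and this depends on the sign and size of the effective exponent produced by Condition~\ref{condition:nice_instance}. Because $\alpha$ can be as large as $2$, one must confirm the integral does not become endpoint-dominated at the \emph{upper} limit $z=1$ instead, and that $M$ being \emph{non-decreasing} as $\epsilon\to0^+$ (noted after the definition of $M$) is consistent with the one-sided growth bound in the Condition. I would handle this by splitting into dyadic scales $z\in[2^{-k-1}\epsilon\cdot 2^{k},\dots]$ or more simply by bounding the integrand above and below by power functions of $z$ anchored at $\epsilon$ via Condition~\ref{condition:nice_instance}, then integrating the explicit powers; the constants $C_1$ and $\alpha$ (and a harmless $\log T$-free geometric factor) are what get absorbed into the hidden constant of $\lesssim$.
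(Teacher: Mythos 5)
Your proposal is correct and follows essentially the same route as the paper: both derive the two-sided endpoint estimates $T(\epsilon;\Ical)\asymp 1/(M(\epsilon;\Ical)\epsilon^3)$ and $R(\epsilon;\Ical)\asymp 1/(M(\epsilon;\Ical)\epsilon^2)$ from \cref{condition:nice_instance} (for the upper bounds) and the monotonicity of $M(\cdot;\Ical)$ (for the lower bounds), deduce $R(\epsilon;\Ical)\lesssim \epsilon T$, and then relate $\epsilon$ to $\epsilon^\star$. The only cosmetic difference is that the paper transfers the bound to $\epsilon^\star$ via the monotonicity of the ratio $R(\cdot;\Ical)/T(\cdot;\Ical)$ combined with a one-sided comparison of thresholds, whereas you propose a direct identification $\epsilon\asymp\epsilon^\star$; both work, with \cref{condition:nice_instance} supplying exactly the constant-factor control of $z\mapsto M(z;\Ical)z^3$ that you flag as the main obstacle.
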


To provide some intuitions about this result: consider groups that have arms in $\epsilon$-contention. These incur at least a per-round regret $\epsilon$ until each arm in $\epsilon$-contention has been disambiguated. This requires $\Ocal(1/\epsilon^2)$ pulls for each arm within $\Ccal^\star(\epsilon;\Ical)$. To maximize exploration of these arms, other groups can share exploration up to the maximum per-round regret $\epsilon$ (otherwise the collaborative regret is dominated by the shared exploration). By design, $ M(\epsilon;\Ical)\cdot \epsilon$ measures the maximum average per-round number of pulls for arms in $\epsilon$-contention. As a result, the threshold $\epsilon^\star$ from \cref{cor:simplified_instance_dependent_regret} intuitively corresponds to the minimum error tolerance $\epsilon^\star$ for which disambiguating arms in $\epsilon^\star$-contention is achievable within $T$ rounds, even for an algorithm with prior knowledge of the optimal threshold $\epsilon^\star$ and $\Ccal(\epsilon^\star;\Ical)$. The regret bound then implies that for instances satisfying the technical \cref{condition:nice_instance}, \ColUCB essentially only suffers this minimum regret.

\section{Proof sketch for \cref{thm:instance_dependent_regret}}
\label{sec:proof_sketch}

In this section, we give the main steps to proving the instance-dependent regret bound for \cref{alg:main} from \cref{thm:instance_dependent_regret}. Throughout, we fix the horizon $T\geq 1$ and an instance $\Ical$ with value distributions that are subGaussian with parameter at most $1$.
Recall that at each iteration $t\in[T]$, \cref{alg:main} uses the solution to the linear program $\Qcal(t)$ to decide of its allocation strategy. The first step of the proof is to relate this to the linear program defining $M(\epsilon;\Ical)$. We define appropriate value for $\epsilon$ as follows:
\begin{equation*}
    \epsilon(t):=\sqrt{\frac{C\log T}{P(t)}},
\end{equation*}
which corresponds exactly to the tolerance term used in one of the constraints from $\Qcal(t)$.
To do so, we need to check that with high probability the gap estimates $\widehat\Delta_{g,a}$ for $g\in\Gcal$ and $a\in\Acal_g$ are accurate enough, essentially up to constant factors. We also need to relate the estimated set of arms in contention $\Ccal(t)$ to the true arms in contention $\Ccal^\star(\epsilon;\Ical)$. 

\begin{lemma}\label{lemma:estimation_gaps}
    With probability at least $1-2/T^{2 c_\Gcal}$, the following holds. For any $t\in[T]$, any group $g\in\Gcal$ and action $a\in\Acal_g$,
    \begin{equation*}
        \frac{\Delta_{g,a}}{4} \1[\Delta_{g,a}\geq 3 \epsilon(t)]\leq \widehat \Delta_{g,a}(t) \leq 3\max(\Delta_{g,a}, \epsilon(t)).
    \end{equation*}
    Also, for all $t\in[T]$, $\Ccal(t) \subseteq \Ccal^\star(3\epsilon(t);\Ical)$.
\end{lemma}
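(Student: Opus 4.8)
The plan is to reduce all three claims to a single high-probability concentration event, on which they become deterministic consequences of the contention structure.

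First I would set up the good event. Write $r_a(t):=\sqrt{C\log T/P_a(t)}$ for the confidence radius, so that $\UCB_a(t)=\hat\mu_a(t)+r_a(t)$ and $\LCB_a(t)=\hat\mu_a(t)-r_a(t)$, and let $\Ecal$ be the event that $|\hat\mu_a(t)-\mu_a|\le r_a(t)$ for every $a\in\Acal$ and every $t\in[T]$ with $P_a(t)\ge 1$. Since the samples of each arm are i.i.d. and subGaussian with parameter at most $1$, for a fixed number $n\ge 1$ of samples the empirical mean deviates from $\mu_a$ by more than $\sqrt{C\log T/n}$ with probability at most $2\exp(-\tfrac12 C\log T)=2T^{-C/2}$. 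A union bound over the $|\Acal|$ arms and the at most $|\Gcal|T$ possible values of $P_a(t)$ gives $\Pb[\Ecal^c]\le 2|\Acal|\,|\Gcal|\,T\cdot T^{-C/2}$; plugging in $|\Acal|,|\Gcal|\le T^{c_\Gcal-1}$ and $C=60c_\Gcal$ makes the exponent at most $-2c_\Gcal$, so $\Pb[\Ecal^c]\le 2/T^{2c_\Gcal}$ with room to spare. Everything else is proved deterministically on $\Ecal$, where after the burn-in every radius obeys $r_a(t)\le\sqrt{C\log T/n_0}\le \tfrac14$.

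Next I would reduce both gap estimates to control of confidence radii. Fix $g$ and $a\in\Acal_g$, let $a^\star=\argmax_{a'\in\Acal_g}\hat\mu_{a'}(t)$, and let $b_g$ be a true optimal arm of $g$ (so $\mu_{b_g}=\mu_g^\star$). On $\Ecal$, using $\mu_{a^\star}\le\mu_g^\star=\mu_{b_g}$ and the triangle inequality,
\[
\widehat\Delta_{g,a}(t)\le \Delta_{g,a}+r_{a^\star}(t)+r_a(t)\quad\text{and}\quad \widehat\Delta_{g,a}(t)\ge \hat\mu_{b_g}(t)-\hat\mu_a(t)\ge \Delta_{g,a}-r_{b_g}(t)-r_a(t).
\]
Thus both inequalities follow once I show that (R1) the reference arm ($a^\star$ for the upper bound, $b_g$ for the lower bound) has radius at most $\epsilon(t)$, and (R2) the arm $a$ itself has radius at most $\max(\epsilon(t),c\,\Delta_{g,a})$ for an absolute constant $c\le\tfrac{5}{12}$. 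Indeed the upper bound then gives $\widehat\Delta_{g,a}(t)\le \Delta_{g,a}+\epsilon(t)+\max(\epsilon(t),\Delta_{g,a})\le 3\max(\Delta_{g,a},\epsilon(t))$, while for $\Delta_{g,a}\ge 3\epsilon(t)$ the lower bound gives $\widehat\Delta_{g,a}(t)\ge \Delta_{g,a}-\tfrac13\Delta_{g,a}-\tfrac{5}{12}\Delta_{g,a}=\tfrac14\Delta_{g,a}$; for $\Delta_{g,a}<3\epsilon(t)$ the lower bound is vacuous since $\widehat\Delta_{g,a}(t)\ge 0$.

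The heart of the argument is then (R1) and (R2). For (R1), the key point is that on $\Ecal$ a truly optimal arm is never confidently eliminated: $\UCB_{b_g}(s)\ge\mu_{b_g}=\mu_g^\star\ge\max_{a'}\LCB_{a'}(s)$ at every round $s$ (as each $\LCB_{a'}(s)\le\mu_{a'}\le\mu_g^\star$), so $b_g\in\widehat\Acal_g(s)$ always, and $b_g$ leaves $\bigcap_{s\le t}\Ccal_g(s)$ only when $\widehat\Acal_g$ collapses to $\{b_g\}$, in which case $b_g=\argmax_{a'}\hat\mu_{a'}$ is pulled by the exploit branch of \cref{alg:main}; the same applies to the current empirical best $a^\star$. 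Either way the reference arm keeps pace with the least-pulled arm in contention, giving $P_\cdot(t)\ge P(t)$ and radius $\le\epsilon(t)$. For (R2), if $r_a(t)>\epsilon(t)$ then $P_a(t)<P(t)$, hence $a\notin\Ccal(t)$; tracing back to the round at which $a$ last left contention for $g$ and using (via R1) that the optimal arm's interval has already tightened shows on $\Ecal$ that $a$ could only have been certified suboptimal once its own radius had shrunk to a constant fraction of $\Delta_{g,a}$, and monotonicity of $P_a$ transfers the bound to round $t$. Finally the inclusion $\Ccal(t)\subseteq\Ccal^\star(3\epsilon(t);\Ical)$ follows from (R1): if $a\in\Ccal(t)$ then $a\in\Ccal_g(t)$ for some $g$ with $|\widehat\Acal_g(t)|\ge 2$, so $\UCB_a(t)\ge\LCB_{b_g}(t)$; since $a\in\Ccal(t)$ gives $r_a(t)\le\epsilon(t)$ and (R1) gives $r_{b_g}(t)\le\epsilon(t)$, this yields $\Delta_{g,a}\le 2r_a(t)+2r_{b_g}(t)=O(\epsilon(t))$, and tracking constants gives $\Delta_{g,a}\le 3\epsilon(t)$; applying the same bound to a second arm of $\widehat\Acal_g(t)$ shows $\Delta_{g,\min}\le 3\epsilon(t)$, placing $a$ in $\Ccal^\star(3\epsilon(t);\Ical)$.

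I expect the main obstacle to be making (R1) and especially (R2) fully rigorous. Because exploration is allocated jointly across groups by the matching program $\Qcal(t)$ in \cref{eq:LP_solved_each_it}, and a single arm may be kept in—or driven out of—contention through several different groups, controlling pull counts requires carefully tracking when each arm enters and leaves $\Ccal(t)$ and ruling out borderline configurations in which an under-explored arm is eliminated before its radius has become comparable to its gap; this bookkeeping, together with pinning down the exact constants ($3$ and $4$) in the statement, is where the real work lies.
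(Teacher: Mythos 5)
Your high-level architecture---one concentration event, then a deterministic analysis of how arms enter and leave contention---matches the paper's, but the proof has a genuine quantitative gap rooted in your choice of good event, and it breaks both (R2) and the upper bound. You take $\Ecal=\{|\hat\mu_a(t)-\mu_a|\le r_a(t)\}$ with the \emph{full} confidence radius $r_a(t)=\sqrt{C\log T/P_a(t)}$. Under this event the elimination test $\UCB_a(s)<\max_{a'\in\Acal_g}\LCB_{a'}(s)$ certifies only that $\mu_a<\mu_{a'}$, with no lower bound on $\Delta_{g,a}$ in terms of $r_a(s)$: the realization $\hat\mu_a(s)=\mu_a-r_a(s)$, $\hat\mu_{a'}(s)=\mu_{a'}+r_{a'}(s)$ is consistent with $\Ecal$ and gives $\UCB_a(s)=\mu_a<\mu_{a'}=\LCB_{a'}(s)$ whenever $\Delta_{g,a}>0$, no matter how large $r_a(s)$ is. So an arm with an arbitrarily small positive gap can be eliminated while its radius is still of order $1/4$, after which its radius never shrinks. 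Your key claim in (R2)---that an arm ``could only have been certified suboptimal once its own radius had shrunk to a constant fraction of $\Delta_{g,a}$''---is therefore false on your event, and the bound $\widehat\Delta_{g,a}(t)\le\Delta_{g,a}+r_{a^\star}(t)+r_a(t)$ can exceed $3\max(\Delta_{g,a},\epsilon(t))$ by a large factor. This is precisely why the paper's proof takes the deviation in the good event to be $\epsilon_a(t)/3$, a strict fraction of the radius: elimination then forces $\Delta_{g,a}\ge\tfrac23(\epsilon_a+\epsilon_{a'})$, which is the quantitative separation your entire reduction relies on. With a fractional-radius event the constants in your reduction (e.g.\ $c\le 5/12$) also change, but that is secondary to the structural point.

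(R1) has a related flaw. Once $\widehat\Acal_g(t')$ collapses to $\{a_g^\star\}$, that arm may leave $\Ccal(t)$ entirely, and ``being pulled by the exploit branch'' gives no lower bound on $P_{a_g^\star}(t)$ relative to $P(t)$: the exploit probability $1-\sum_a x_{g,a}(t)$ can be zero, and arms still in $\Ccal(t)$ can be pulled by many groups per round, so the reference arm need not ``keep pace'' and its radius need not be $\le\epsilon(t)$. The paper does not need (R1) in this case; it instead records at the collapse time the separation $\Delta_{g,a}\ge\tfrac23(\epsilon_a(t')+\epsilon_{a_g^\star}(t'))$ for every other arm $a$, which persists by monotonicity of $\epsilon_a(\cdot)$ and bounds the reference arm's radius by $O(\Delta_{g,a})$ of the arm being compared rather than by $\epsilon(t)$. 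Again, establishing that separation requires the fractional-radius concentration. The same issue touches your final inclusion argument: $\Delta_{g,a}\le 2r_a(t)+2r_{b_g}(t)$ under the full-radius event does not give the constant $3$ even when both radii are at most $\epsilon(t)$. So the proof as written cannot be completed; the fix is to tighten the good event and redo the case analysis around elimination/collapse times as the paper does.
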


Using these bounds, we can then lower bound the solution from $\Qcal(t)$ as follows:
\begin{align}\label{eq:lower_bound_q_t}
        q(t) &\geq \max_{x\geq 0,q\geq 0}  q  
                \textrm{ s.t.} \begin{cases} \sum_{a\in\Ccal(t)\cap \Acal_g}  3\max(\Delta_{g,a},  \epsilon(t))   x_{g,a} \leq  \epsilon(t),  &g\in\Gcal\\
                  \sum_{a\in\Ccal(t)\cap \Acal_g} x_{g,a} \leq 1, &g\in\Gcal\\
                  \sum_{g\in\Gcal} x_{g,a} \geq q, &a\in \Ccal^\star(3\epsilon(t);\Ical)
                \end{cases} \notag \\
                &\geq \frac{\epsilon(t)}{3} M(3\epsilon(t);\Ical).
    \end{align}
    Note that the quantity $q(t)$ lower bounds the exploration progress for each arm in contention. Indeed, denoting by $\Hcal_t$ the history before iteration $t$, for any $a\in\Ccal(t)$,
    \begin{equation}\label{eq:lower_bound_progress_t}
        \Ebb[P_a(t+1)-P_a(t)\mid\Hcal_t] \geq \sum_{g\in\Gcal} x_{g,a}(t) \geq q(t).
    \end{equation}
    We then proceed to lower bound the progress by epochs. Precisely, let $\epsilon_k:=2^{-k-1}$ for $k\geq 0$ and let $t_k=\min\{t\geq 1:\epsilon(t)\leq \epsilon_k\} = \min\{t\geq 1:P(t) \geq C\log T/\epsilon_k^2\}$. We recall that $P(t)$ is the minimum number of pulls of arms in contention. Further, during any epoch $[t_k,t_{k-1})$, using \cref{eq:lower_bound_q_t} we have $q(t)\geq \epsilon_{k+1}M(3\epsilon_k;\Ical)$. Together with \cref{eq:lower_bound_progress_t} and concentration bounds we obtain the following upper bound on the length of each epoch.

    \begin{lemma}\label{lemma:upper_bound_length_epoch}
        With probability at least $1-3/T^{2 c_\Gcal}$, the event from \cref{lemma:estimation_gaps} holds and for $k\geq 0$,
        \begin{equation*}
            \min(t_{k+1}-1,T) -t_k \leq \frac{2^6 C\log T }{ \epsilon_k^3 M(3\epsilon_k;\Ical)}.
        \end{equation*}
    \end{lemma}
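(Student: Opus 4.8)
The plan is to show that within each epoch the arms still in contention accumulate pulls fast enough that the minimum pull count $P(t)$ must cross the threshold $C\log T/\epsilon_{k+1}^2$ defining $t_{k+1}$ within the claimed number of rounds. I would work on the event of \cref{lemma:estimation_gaps}, which has probability at least $1-2/T^{2c_\Gcal}$, and intersect it with one additional low-probability failure event for the concentration step below, so that the final statement holds with probability at least $1-3/T^{2c_\Gcal}$. The starting point is the in-epoch progress estimate: for every $t$ with $\epsilon(t)\in(\epsilon_{k+1},\epsilon_k]$ (equivalently $t\in[t_k,t_{k+1})$) and every $a\in\Ccal(t)$, combining \cref{eq:lower_bound_q_t} with the monotonicity of $M(\cdot;\Ical)$ gives $q(t)\geq \tfrac{\epsilon(t)}{3}M(3\epsilon(t);\Ical)\geq \tfrac{\epsilon_{k+1}}{3}M(3\epsilon_k;\Ical)$, since $\epsilon(t)>\epsilon_{k+1}$ and $\epsilon(t)\leq\epsilon_k$. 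Together with \cref{eq:lower_bound_progress_t} this yields the conditional-mean lower bound $\Ebb[P_a(t+1)-P_a(t)\mid\Hcal_t]\geq \tfrac{\epsilon_{k+1}}{3}M(3\epsilon_k;\Ical)$ on every round of the epoch. A structural fact I would record first is that $\Ccal(t)=\bigcup_{g}\bigcap_{s\leq t}\Ccal_g(s)$ is non-increasing in $t$ (a union of intersections over growing index sets), so any arm still in contention at the end of the epoch was in contention throughout it, and hence received this guaranteed exploration rate on every round.

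Next I would set up the concentration. For a fixed arm $a$, the increment $P_a(t+1)-P_a(t)=\sum_{g:a\in\Acal_g}\one[a_g(t)=a]$ is, conditionally on $\Hcal_t$, a sum of independent Bernoulli variables (the groups randomize independently) with conditional mean at least $q(t)$ on the epoch. I would apply a multiplicative Chernoff / Freedman-type lower-tail bound to the martingale $\sum_s\big(P_a(s+1)-P_a(s)-\Ebb[P_a(s+1)-P_a(s)\mid\Hcal_s]\big)$ to conclude that, once the accumulated conditional mean over a time interval exceeds a threshold of order $C\log T$, the realized number of pulls is at least half of it. Taking a union bound over all arms $a\in\Acal$ and all of the at most $T$ possible interval endpoints, which also absorbs the randomness of the stopping times $t_k,t_{k+1}$, produces a single event of probability at least $1-1/T^{2c_\Gcal}$; here the choice $C=60c_\Gcal$ is exactly what makes the per-arm tail $T^{-\Omega(C)}$ dominate the union over $|\Acal|\leq T^{c_\Gcal}$ arms and $T$ times.

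On the intersection of the two events I would argue by contradiction. Write $\tau:=\min(t_{k+1}-1,T)$ and $L:=\tau-t_k$; if $L\leq 0$ the bound is trivial, so assume $\tau\geq t_k$, and suppose $L>B:=\tfrac{2^6 C\log T}{\epsilon_k^3 M(3\epsilon_k;\Ical)}$. Since $\tau<t_{k+1}$ we have $\epsilon(t)>\epsilon_{k+1}>0$ for all $t\leq\tau$, so $\Ccal(\tau)\neq\emptyset$. Fix any $a\in\Ccal(\tau)$; by monotonicity $a\in\Ccal(t)$ for every $t\in[t_k,\tau-1]\subseteq[t_k,t_{k+1})$, so the per-round mean bound applies and the accumulated conditional mean of $P_a(\tau)-P_a(t_k)$ over these $L$ rounds is at least $L\cdot\tfrac{\epsilon_{k+1}}{3}M(3\epsilon_k;\Ical)=L\cdot\tfrac{\epsilon_k}{6}M(3\epsilon_k;\Ical)>\tfrac{2^6}{6}\cdot\tfrac{C\log T}{\epsilon_k^2}$, using $\epsilon_{k+1}=\epsilon_k/2$. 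This exceeds the concentration threshold, so on our event $P_a(\tau)-P_a(t_k)\geq\tfrac12\cdot\tfrac{2^6}{6}\cdot\tfrac{C\log T}{\epsilon_k^2}>\tfrac{4C\log T}{\epsilon_k^2}=\tfrac{C\log T}{\epsilon_{k+1}^2}$, and hence $P_a(\tau)>C\log T/\epsilon_{k+1}^2$. As this holds simultaneously for every $a\in\Ccal(\tau)$, we obtain $P(\tau)>C\log T/\epsilon_{k+1}^2$, i.e. $\epsilon(\tau)<\epsilon_{k+1}$ and thus $t_{k+1}\leq\tau$, contradicting $\tau\leq t_{k+1}-1<t_{k+1}$. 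Therefore $L\leq B$, which is exactly the claim; the slack in the constant $2^6$ comfortably absorbs the factor lost in the $\tfrac13$ and in halving through concentration.

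The main obstacle I anticipate is the concentration step in the presence of the random, shrinking contention set and the data-dependent epoch endpoints $t_k,t_{k+1}$: the Chernoff/Freedman bound must be phrased uniformly over all arms and all time intervals so that it can be instantiated at the stopping times, and one must verify that the arms to which it is applied (those in $\Ccal(\tau)$) are precisely the ones guaranteed the full exploration rate on every round of the epoch. The monotonicity of $\Ccal(t)$ is what makes this alignment work, and the enlarged constant $C=60c_\Gcal$ is what closes the union bound.
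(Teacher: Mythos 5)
Your proposal is correct and follows essentially the same route as the paper: lower-bound the per-round conditional progress of each arm in contention by $q(t)\geq\tfrac{\epsilon_{k+1}}{3}M(3\epsilon_k;\Ical)$ via \cref{eq:lower_bound_q_t}, apply a Freedman-type lower tail uniformly over arms and time intervals (the paper's event $\Fcal$), and use the monotonicity of $\Ccal(t)$ to conclude that any arm still in contention at the end of the epoch accrued this rate throughout, forcing $P(\cdot)$ past the threshold $C\log T/\epsilon_{k+1}^2$. The paper phrases the final step directly rather than by contradiction, but the arithmetic and constants match.
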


    We next bound the regret separately on each epoch $[t_k,t_{k+1})$. Note that with some probability, \cref{alg:main} defaults to pulling the UCB arm. The regret for this part can be bounded using classical arguments. The main regret term corresponds to the regret incurred from following the allocation of arms $x(t)$ computed as the solution to $\Qcal(t)$. Under appropriate events, the corresponding immediate regret for group $g$ at iteration $t\in[t_k,t_{k+1})$ is
    \begin{equation*}
        \sum_{a\in\Acal_g\cap \Ccal(t)} \Delta_{g,a}x_{g,a}(t) \lesssim \sum_{a\in\Acal_g\cap \Ccal(t)} (\widehat\Delta_{g,a}+\epsilon(t)) x_{g,a}(t) \lesssim \epsilon(t) \leq \epsilon_k,
    \end{equation*}
    where in the first inequality we used the lower bound on $\widehat\Delta_{g,a}(t)$ from \cref{lemma:estimation_gaps} and in the second inequality we used the constraints of $\Qcal(t)$ on the solution $x(t)$. We then use concentration inequalities and sum the accumulated regret over each epoch, recalling that the length of each epoch is bounded via \cref{lemma:upper_bound_length_epoch}. After computations, we can obtain the desired regret bounds for \cref{thm:instance_dependent_regret}.

\section{Conclusion}
\label{sec:conclusion}

We introduced \ColUCB, an algorithm for minimizing collaborative min-max regret in grouped multi-armed bandit settings where groups share reward observations but have heterogeneous action sets. Specifically, \ColUCB coordinates groups to most efficiently explore arms in contention to minimize the worst-case regret over all groups. We show that our algorithm achieves optimal minimax as well as instance-dependent collaborative regret bounds up to logarithmic factors. In particular, our algorithm is adaptive to the structure of shared arms across groups. Altogether, these results provide new insights into when and how collaboration yields significant benefits in multi-agent learning. Notably, we focus on heterogeneity between groups in terms of the feasible action sets; we leave for future work investigating more complex heterogeneity models or more game theoretic interactions between groups.

\section*{Acknowledgements} MB was supported by a Columbia Data Science Institute postdoctoral fellowship.

\bibliographystyle{alpha}
\bibliography{refs}

\appendix

\section{Proof of the instance-dependent collaborative regret bound on \ColUCB}
\label{sec:instance_dependent_UB}

In this section, we prove \cref{thm:instance_dependent_regret} which gives an instance-dependent regret bound for \ColUCB. First, note that \ColUCB is scale-invariant, hence, it suffices to focus on the case when the subGaussian parameter is $\sigma^2=1$. Precisely, let $\Ical$ be an instance with value distributions subGaussian with parameter $\sigma^2$ and let $\Ical'$ be the instance where all value distributions have been rescaled via $x\mapsto x/\sigma$. Then, $\Ical'$ has subGaussian value distributions with parameter $1$. Suppose that the desired bound holds for $\sigma=1$. Then, we obtain
\begin{align}
    \MaxReg_T(\ColUCB;\Ical) &= \sigma \cdot \MaxReg_T(\ColUCB;\Ical)\notag \\
    &\leq \sigma\cdot c_0c_\Gcal \cdot(R(\epsilon;\Ical')\log T+t_{\min} \Delta_{\max}(\Ical')), \label{eq:bound_sigma_1}
\end{align}
where $\epsilon\in(0,1]$ is such that $T(\epsilon;\Ical')=T$. Next, we check that $\Delta_{\max}(\Ical)=\sigma\cdot \Delta_{\max}(\Ical')$, and that $M(\sigma \epsilon;\Ical) = \sigma\cdot M(\epsilon;\Ical')$ for any $\epsilon\in(0,1]$. Hence, we also have $T(\epsilon;\Ical,\sigma) = T(\epsilon;\Ical')$ and $R(\epsilon;\Ical,\sigma) = \sigma R(\epsilon;\Ical')$ for $\epsilon\in(0,1]$. Therefore, \cref{eq:bound_sigma_1} already yields the desired result.

We start by proving \cref{lemma:estimation_gaps} which bounds the estimation error for the gaps $\Delta_{g,a}$ and the set of arms in contention $\Ccal^\star(\epsilon;\Ical)$.

\vspace{3mm}

\begin{proof}[of \cref{lemma:estimation_gaps}] 
    For convenience, we introduce and recall the notations for any $a\in\Acal$ and $t\in[T]$,
    \begin{equation*}
        \epsilon_a(t):= \sqrt{\frac{C\log T}{P_a(t)}}\quad \text{and}\quad  \epsilon(t):= \sqrt{\frac{C\log T}{P(t)}}.
    \end{equation*}
    In particular, for any $a\in\Ccal(t)$ we have $\epsilon(t)\geq \epsilon_a(t)$.
    Last, we denote by $\Hcal_t$ the history available at the beginning of iteration $t\in[T]$. We introduce the good event
    \begin{equation*}
        \Ecal:=\set{ \forall a\in\Acal,\forall t\in[T], \abs{\hat \mu_a(t) - \mu_a} \leq \frac{\epsilon_a(t)}{3}},
    \end{equation*}
    which by Hoeffding's subGaussian inequality and the union bound has
    \begin{equation*}
        \Pbb[\Ecal^c] \leq \frac{2T|\Acal|}{T^{C/18}} \leq \frac{2}{T^{2 c_\Gcal}}.
    \end{equation*}
    We suppose from now on that $\Ecal$ holds. We also fix a group $g\in\Gcal$. For any $t\in[T]$, we have
    \begin{equation}\label{eq:good_arm_in_set_A}
        \max_{a'\in\Acal_g}\LCB_{a}(t) \overset{(i)}{\leq} \max_{a\in\Acal_g} \mu_{a} = \mu_g^\star \leq \UCB_{a_g^\star}(t),
    \end{equation}
    where in $(i)$ we used $\Ecal$. As a result, $a_g^\star\in\widehat\Acal_g(t)$. From now, we suppose that the event $\Ecal$ is satisfied. We now distinguish between the two cases for $\widehat\Acal_g(t)$.

    \paragraph{Case 1: There exists $t'\leq t$ with $|\widehat\Acal_g(t')|=1$.}
    In this case, we have $\widehat\Acal_g(t')=\{a_g^\star\}$ and as a result, $ \hat\mu_g^\star(t') = \hat\mu_{a_g^\star}(t') $. Hence, for any $a\in\Acal_g\setminus\{a_g^\star\}$, we have
    \begin{equation*}
        \mu_a \leq \UCB_a(t') - \frac{2\epsilon_a(t)}{3} \leq \LCB_{a_g^\star}(t')- \frac{2\epsilon_a(t)}{3} \leq \mu_g^\star - \frac{2(\epsilon_a(t') + \epsilon_{a_g^\star}(t'))}{3}.
    \end{equation*}
    In the last inequality we used $\Ecal$. Therefore, $\Delta_{g,a} \geq \frac{2}{3}(\epsilon_a(t') + \epsilon_{a_g^\star}(t')) \geq \frac{2}{3}(\epsilon_a(t) + \epsilon_{a_g^\star}(t))$, since $\epsilon_a(t)$ is non-increasing in $t$ for any $a\in\Acal$. Therefore, at time $t$, we have for any $a\in\Acal_g\setminus\{a_g^\star\}$,
    \begin{equation*}
        \hat \mu_a(t) \leq \mu_a +\frac{\epsilon_a(t)}{3} =  \mu_g^\star -\Delta_{g,a}+ \frac{\epsilon_a(t)}{3} \leq \hat \mu_{a_g^\star}(t) - \Delta_{g,a} + \frac{\epsilon_a(t)+\epsilon_{a_g^\star}(t)}{3} \leq \hat \mu_{a_g^\star}(t).
    \end{equation*}
    As a result, we also have $\hat \mu_g^\star(t) = \hat \mu_{a_g^\star}(t)$. This implies $\Delta_{g,a_g^\star}(t)=\Delta_{g,a_g^\star}=0$. Next, for any $a\in\Acal_g\setminus \{a_g^\star\}$,
    \begin{equation*}
        |\widehat\Delta_{g,a}(t)-\Delta_{g,a}| = |\hat\mu_g^\star(t) -\hat \mu_a(t) - (\mu_{a_g^\star}-\mu_a)| = |\hat\mu_{a_g^\star}(t) -\mu_{a_g^\star} +\mu_a-\hat \mu_a(t)| \leq \frac{\epsilon_a(t)+\epsilon_{a_g^\star}(t)}{3}.
    \end{equation*}
    Combining this with the lower bound $\Delta_{g,a}\geq \frac{2}{3} (\epsilon_a(t)+\epsilon_{a_g^\star}(t))$, we obtained
    \begin{equation}\label{eq:case1_estimate_Delta}
        \frac{\Delta_{g,a}}{2} \leq \widehat \Delta_{g,a}(t) \leq \frac{3\Delta_{g,a}}{2},\quad a\in\Acal_g.
    \end{equation}

    \paragraph{Case 2: For all $t'\leq t$, $|\widehat\Acal_g(t')|\geq 2$.} 
    In this case, we have $\Ccal_g(t')=\widehat\Acal_g(t')$ for all $t'\leq t$. Since we always have $a_g^\star\in\widehat\Acal_g(t')$ for $t'\leq t$ this shows that $a_g^\star\in\Ccal(t)$. In particular, $\epsilon_{a_g^\star}(t)\leq \epsilon(t)$. Next, let $\hat a_g^\star(t)\in\Acal_g$ be the maximizer for $\max_{a'\in\Acal_g}\hat\mu_{a'}(t)$. We first show that $\hat a_g^\star(t) \in\Ccal(t)$. From the assumption for case 2, it suffices to show that for all $t'\leq t$ we have $\hat a_g^\star(t) \in\widehat\Acal_g(t')$. To do so, we let $\tilde a(t')\in\Acal_g$ be the maximizer of $\max_{a'\in\Acal_g}\LCB_{a'}(t')$. We start by computing
    \begin{align*}
        \UCB_{\hat a_g^\star(t)}(t') \geq \mu_{\hat a_g^\star(t)} + \frac{2\epsilon_{\hat a_g^\star(t)}(t')}{3} 
        &\overset{(i)}{\geq} \hat \mu_{\hat a_g^\star(t)}(t) + \frac{\epsilon_{\hat a_g^\star(t)}(t)}{3}\\
        &\overset{(ii)}{\geq} \hat \mu_{\tilde a(t')}(t) + \frac{\epsilon_{\hat a_g^\star(t)}(t)}{3}\\
        &\geq \mu_{\tilde a(t')} + \frac{\epsilon_{\hat a_g^\star(t)}(t) -\epsilon_{\tilde a(t')}(t) }{3} \geq \LCB_{\tilde a(t')}(t').
    \end{align*}
    In $(i)$ and $(iii)$ we used the fact that $\epsilon_a(t)$ is non-increasing with $t$ for any $a\in\Acal$. In $(ii)$ we used the definition of $\hat a_g^\star(t)$. This shows that $\hat a_g^\star(t) \in\widehat\Acal_g(t')$ for all $t'\leq t$ and as a result we have $a_g^\star(t) \in\Ccal(t)$.
    Then,
    \begin{equation}\label{eq:useful_bound_1}
        \mu_g^\star \geq \mu_{\hat a_g^\star(t)} \geq \hat\mu_g^\star(t) -\frac{\epsilon_{\hat a_g^\star(t)}(t)}{3} \geq \hat \mu_{a_g^\star}(t)-\frac{\epsilon_{\hat a_g^\star(t)}(t)}{3} \geq \mu_g^\star -\frac{\epsilon_{\hat a_g^\star(t)}(t) + \epsilon_{a_g^\star}(t)}{3} \geq \mu_g^\star - \frac{2\epsilon(t)}{3}.
    \end{equation}
    Therefore, for any $a\in \Acal_g$, 
    \begin{align}
        |\widehat\Delta_{g,a}(t)-\Delta_{g,a}|&= |\hat\mu_g^\star(t) - \hat\mu_a(t) - \mu_g^\star + \mu_a | \notag\\
        &\leq |\hat\mu_g^\star(t) - \mu_{\hat a_g^\star(t)}| + |\mu_{\hat a_g^\star(t)} - \mu_g^\star| + |\mu_a - \hat\mu_a(t)| \notag\\
        &\leq \frac{\epsilon_{\hat a_g^\star(t)}(t)}{3} + \frac{2\epsilon(t)}{3} + \frac{\epsilon_a(t)}{3} \leq \epsilon(t) +\frac{\epsilon_a(t)}{3}. \label{eq:useful_relation_delta_widehat_delta}
    \end{align}
    Next, if $a\in\Acal_g\setminus\widehat\Acal_g$, using $\hat a_g^\star(t)\in\Ccal(t)$,
    \begin{equation*}
        \mu_a \leq \hat \mu_a -\frac{\epsilon_a(t)}{3} = \UCB_a(t) - \frac{4 \epsilon_a(t)}{3} \leq \max_{a'\in\Acal_g}\LCB_{a'}(t) - \frac{4 \epsilon_a(t)}{3} \leq \min(\hat \mu_g^\star(t), \mu_g^\star) - \frac{4 \epsilon_a(t)}{3}.
    \end{equation*}
    In the last inequality, we used \cref{eq:good_arm_in_set_A}.
    As a result, we have $\widehat\Delta_{g,a}(t)\geq \epsilon_a(t)$ and $\Delta_{g,a} \geq \frac{4}{3}\epsilon_a(t)$. Together with the previous equation, this implies $\Delta_{g,a}\geq \widehat\Delta_{g,a}(t) -\frac{1}{3}\epsilon_a(t)-\epsilon(t) \geq \frac{2}{3}\widehat\Delta_{g,a}(t) -\epsilon(t)$ and $\Delta_{g,a}\leq \frac{4}{3}\widehat \Delta_{g,a}(t)+\epsilon(t)$. 
    Rearranging, this gives
    \begin{equation}\label{eq:case2aa}
        \frac{3(\Delta_{g,a}-\epsilon(t))}{4}\leq \widehat\Delta_{g,a}(t) \leq \frac{3\Delta_{g,a}}{2} +\epsilon(t), \quad a\in\Acal_g\setminus \widehat\Acal_g(t).
    \end{equation}

    We next turn to bounding the estimation error in the gaps for $a\in\widehat\Acal_g(t)$. Let $\tilde a\in\Acal_g$ be the maximizer for $\max_{a'\in\Acal_g}\LCB_a(t)$. Then,
    \begin{equation*}
        \mu_g^\star - \Delta_{g,\tilde a}=\mu_{\tilde a} \geq \LCB_{\tilde a}(t) + \frac{2\epsilon_{\tilde a}(t)}{3} \geq \LCB_{a_g^\star}(t) + \frac{2\epsilon_{\tilde a}(t)}{3} \geq \mu_g^\star + \frac{2\epsilon_{\tilde a}(t) - 4 \epsilon_{a_g^\star}(t)}{3} .
    \end{equation*}
    Rearranging, this gives $\Delta_{g,\tilde a}\leq \frac{4}{3}\epsilon_{a_g^\star}(t)\leq \frac{4}{3}\epsilon(t)$ and similarly, $\epsilon_{\tilde a}(t) \leq 2\epsilon(t)$. We now fix $a\in\widehat\Acal_g(t)$. We have
    \begin{equation*}
        \mu_a \geq \hat\mu_a  - \frac{\epsilon_a(t)}{3} \geq \UCB_a(t) - \frac{4\epsilon_a(t)}{3} \geq \max_{a'\in\Acal_g}\LCB_{a'}(t) -\frac{4\epsilon_a(t)}{3} \geq \max(\hat\mu_g^\star(t), \mu_g^\star) -\frac{4(\epsilon_a(t)+\epsilon(t))}{3},
    \end{equation*}
    where in the last inequality we lower bounded the maximum over all $a'\in\Acal_g$ either using $a'=a_g^\star\in\Ccal(t)$ or $a'=\hat a_g^\star(t)\in\Ccal(t)$. In particular, if $a\in\Ccal(t)$ we also have $\epsilon_a(t)\leq \epsilon(t)$ which gives
    \begin{equation}\label{eq:case2ab}
        \Delta_{g,a},\widehat\Delta_{g,a}(t) \leq \frac{8\epsilon(t)}{3} ,\quad a\in\widehat\Acal_g(t) \cap\Ccal(t).
    \end{equation}

    If $a\notin\Ccal(t)$ this means that there exists $t'\leq t$ such that $a\notin \widehat\Acal_g(t')$ (recall that from the case 2 assumption $\widehat\Acal_g(t')=\Ccal_g(t')$ for all $t'\leq t$). The previous bounds then apply, which implies that $\Delta_{g,a} \geq \frac{4}{3} \epsilon_a(t') \geq \frac{4}{3}\epsilon_a(t)$. Combining this with \cref{eq:useful_relation_delta_widehat_delta} shows that $\widehat\Delta_{g,a}(t) \geq \epsilon_a(t) -\epsilon(t)$ and $\widehat\Delta \leq $
    \begin{equation}\label{eq:case2b}
        \frac{3}{4}\Delta_{g,a} -\epsilon(t)\leq \widehat\Delta_{g,a}(t)  \leq \frac{5}{4}\Delta_{g,a} + \epsilon(t),\quad a\in\widehat\Acal_g(t)\setminus\Ccal(t).
    \end{equation}

    In summary, we can combine \cref{eq:case1_estimate_Delta,eq:case2aa,eq:case2ab,eq:case2b} which shows that in all cases,
    \begin{equation*}
        \frac{3\Delta_{g,a}}{8} \1\sqb{\Delta_{g,a} \geq\frac{8\epsilon(t)}{3}}  \leq \widehat\Delta_{g,a}\leq \frac{3\Delta_{g,a}}{2} + \frac{8\epsilon(t)}{3},  \quad a\in\Acal_g.
    \end{equation*}
    Up to minor simplifications, this proves the first claim under $\Ecal$.

    Next, for any $a\in\Ccal(t)$, let $g\in\Gcal$ such that $a\in\Ccal_g(t')=\widehat\Acal_g(t')$ for all $t'\leq t$. From \cref{eq:case2ab} in this case we already obtained $\Delta_{g,a} \leq \frac{8}{3}\epsilon(t)$. This implies that $a\in \Ccal^\star(8\epsilon(t)/3;\Ical)$ under $\Ecal$. This ends the proof.
\end{proof}

In the rest of the proof, we denote by $\Ecal$ the event on which the guarantees from \cref{lemma:estimation_gaps} hold and suppose that this event holds from now.
We now lower bound the value $q(t)$ of the linear program $\Qcal(t)$ solved at time $t$ by \ColUCB. We have the following where we use \cref{lemma:estimation_gaps} in the first inequality:
    \begin{align}
        q(t) &\geq \max_{x\geq 0,q\geq 0}  q \quad
                \textrm{s.t.} \begin{cases} \sum_{a\in\Ccal(t)\cap \Acal_g}  \max(\Delta_{g,a},\epsilon(t)) \cdot  x_{g,a} \leq \frac{\epsilon(t)}{3},  &g\in\Gcal\\
                  \sum_{a\in\Ccal(t)\cap \Acal_g} x_{g,a} \leq 1, &g\in\Gcal\\
                  \sum_{g\in\Gcal:a\in\Acal_G} x_{g,a} \geq q, &a\in \Ccal(3\epsilon(t);\Ical)
                \end{cases} \notag \\
            &= \max_{x\geq 0,q\geq 0}  q \quad
                \textrm{s.t.} \begin{cases} \sum_{a\in\Ccal(t)\cap \Acal_g}  \max(\Delta_{g,a}, \epsilon(t)) \cdot  x_{g,a} \leq \frac{\epsilon(t)}{3},  &g\in\Gcal\\
                  \sum_{g\in\Gcal:a\in\Acal_g} x_{g,a} \geq q, &a\in \Ccal(3\epsilon(t);\Ical)
                \end{cases} \notag \\
            &= \frac{\epsilon(t)}{3} M(3\epsilon(t);\Ical). \label{eq:lower_bound_q}
    \end{align}
    In the first equality, we noted that the first constraint already implies $\sum_{a\in\Ccal(t)\cap \Acal_g}   x_{g,a} \leq \frac{1}{3}$. 
    Using this bound, we can lower bound the growth of the number of pulls of each arm in contention. Precisely, by construction of \ColUCB, for any $a\in\Ccal(t)$ we have
    \begin{equation}\label{eq:progress_nb_pulls}
        \Ebb[P_a(t+1)-P_a(t)\mid\Hcal_t] \geq  \sum_{g\in\Gcal}\Pbb[a_g(t) = a\mid\Hcal_t] \geq \sum_{g\in\Gcal} x_{g,a}(t) \geq q(t).
    \end{equation}
    We use this to upper bound the length of each epoch, as detailed in \cref{lemma:upper_bound_length_epoch}. We recall the corresponding notation. For $k\geq 0$ 
    \begin{equation*}
        \epsilon_k:=\frac{1}{2^{k+1}} \quad \text{and} \quad
        t_k := \min\set{t\in[T]: \epsilon(t)\leq \epsilon_k} = \min\set{t\in[T]: P(t)\geq \frac{C\log T}{\epsilon_k^2} }.
    \end{equation*}
    We also denote $t_0=1$ and $\epsilon_0=\infty$.

    \vspace{3mm}

\begin{proof}[of \cref{lemma:upper_bound_length_epoch}]
    We can bound the concentration of the number of pulls $P_a(t)$ for $a\in\Acal$ and $t\in[T]$ using Freedman's inequality. For any $1\leq t_1\leq t_2$ and $a\in\Acal$, conditionally in $\Hcal_{t_1}$, with probability at least $1-\delta$, we have
    \begin{align}
         P_a(t_2) -P_a(t_1) &\overset{(i)}{\geq} \sum_{t=t_1}^{t_2-1}q(t) \1[a\in\Ccal(t)] -\frac{1}{2}\sum_{t=t_1}^{t_2-1} \sum_{g\in\Gcal}\Var[\1[a_g(t)=a]\1[a\in\Ccal(t)]\mid \Hcal_t]  - 2\log\frac{1}{\delta} \notag\\
         &\geq \frac{1}{2} \sum_{t=t_1}^{t_2-1}q(t) \1[a\in\Ccal(t)]  - 2\log\frac{1}{\delta}. \label{eq:lower_bound_progress_pulls}
    \end{align}
    In $(i)$ we used Freedman's inequality (\cref{thm:freedman_inequality}) to the martingale increments $\1[a_g(t)=a]\1[a\in\Ccal(t)] - \Pbb[a_g(t)=a,a\in\Ccal(t)\mid\Hcal_t]$ together with \cref{eq:progress_nb_pulls}. We then define the event
    \begin{equation*}
        \Fcal:=\set{\forall 1\leq t_1\leq t_2\leq T, \forall a\in\Acal, P_a(t_2)-P_a(t_1) \geq \frac{1}{2} \sum_{t=t_1}^{t_2-1}q(t) \1[a\in\Ccal(t)]  - C\log T },
    \end{equation*}
    which from \cref{eq:lower_bound_progress_pulls} has probability at least $1-|\Acal|/T^{C/2-2}\geq 1-T^{-2 c_\Gcal}$.
    We note that since $\Ccal(t)$ is non-increasing, $P(t)$ is non-decreasing and $\epsilon(t)$ is non-increasing.
    We now bound the length $t_{k+1}-t_k$ for $k\geq 0$ such that $t_{k+1}>t_k$ and $T\geq t_k$, otherwise the desired result is immediate. Let $t_k':=\min(t_{k+1}-1,T)$.
    Let $\Ecal$ be the event on which \cref{lemma:estimation_gaps} holds. We recall that on $\Ecal$, \cref{eq:lower_bound_q} holds.
    Then, under $\Ecal\cap\Fcal$, for any $a\in\Acal$ and $k\geq 1$, we have
    \begin{align*}
        P_a(t_k')&\overset{(i)}{\geq} \frac{1}{6} \sum_{t=t_k}^{t_k'-1} \epsilon(t) M( 3\epsilon(t) ;\Ical) \1[a\in\Ccal(t)]  - C\log T \\
        &\overset{(ii)}{\geq} \frac{\epsilon_{k+1}}{6}   M( 3\epsilon_k ;\Ical) \cdot \abs{\set{t\in[t_k,t_k'): a\in\Ccal(t)}} - C\log T\\
        &\overset{(iii)}{\geq} \frac{t_k'-t_k}{12}\epsilon_{k} M(3\epsilon_k;\Ical) \1[a\in\Ccal(t_k')] - C\log T.
    \end{align*}
    In $(i)$ we used \cref{eq:lower_bound_progress_pulls,eq:lower_bound_q} and in $(ii)$ we used the definition of $t_{k+1}$ and the fact that $M(\cdot;\Ical)$ is non-increasing and $\epsilon(t)$ is non-increasing in $t$. Furthering the previous bound, we obtained for $k\geq 1$,
    \begin{align*}
        \frac{C\log T}{\epsilon_{k+1}^2} > P(t_k') =\min_{a\in\Ccal(t_k')} P_a(t_k')\geq \frac{t_k'-t_k}{12}\epsilon_{k} M(3\epsilon_k;\Ical) - C\log T.
    \end{align*}
    Rearranging, this gives the desired bound
    \begin{equation*}
        t_k'-t_k \leq \frac{2^6C\log T}{\epsilon_k^3 M(3\epsilon_k;\Ical)}.
    \end{equation*}
    This ends the proof.
\end{proof}

\cref{lemma:upper_bound_length_epoch} bounds the lengths of all epochs $[t_k,t_{k+1})$ for $k\geq 1$ under some event which we call $\Fcal$. We recall that by construction after $16C \log T \cdot t_{\min}$ iterations, each arm has been pulled at least $16C\log T$ times and hence $\epsilon(t)\leq 1/4$. Hence, $t_1-1 \leq 16C\log T\cdot t_{\min}$.
We now define $k_{\max}$ the integer $k\geq 0$ such that $\epsilon(T)\in(\epsilon_{k+1},\epsilon_{k}]$. We sum the previous equations to obtain
    \begin{align*}
        T-k_{\max}-16C\log T\cdot t_{\min} &\leq 2^6 C\log T\cdot \sum_{k=1}^{k_{\max}} \frac{1}{\epsilon_k^3 M(3\epsilon_k;\Ical)} \\
        &\leq 2^9 C\log T \int_{3\epsilon_{k_{\max}+1}}^{1} \frac{d\epsilon}{\epsilon^4 M(\epsilon;\Ical)}  \\
        &\overset{(i)}{\leq} c_0 C\log T \paren{\int_{\epsilon(T)}^{1} \frac{d\epsilon}{\epsilon^4 M(\epsilon;\Ical)} } = c_0 C\log T  \cdot T(\epsilon(T);\Ical),
    \end{align*}
    for some universal constant $c_0>0$. 
    In $(i)$ we used the fact that $ M(\epsilon;\Ical)$ is non-increasing in $\epsilon$.
    Recall that by continuity of $T(\cdot;\Ical)$ (see \cref{lemma:continuity_R_T}) we have $T(\epsilon(T);\Ical)=T$. 
    Next, note that we always have $M(\epsilon;\Ical)\leq \frac{|\Gcal|}{\epsilon}$ and hence $\epsilon(T) \gtrsim 1/\sqrt{T|\Gcal|}$. In particular we have $k_{\max}=\Ocal(C\log T)$.
    In summary, we showed that under the assumption for $T$, there exists a constant $c_1\geq 1$ such that under $\Fcal$, 
    \begin{equation}\label{eq:bound_tilde_epsilon}
        \epsilon(T) \leq \min\set{z\in(0,1]: T(z;\Ical) \leq \frac{T}{c_1C\log T}}:=\eta(T) .
    \end{equation}

    It remains to bound the regret of \ColUCB in terms of the function $R(\cdot;\Ical)$ and $\eta(T)$. 
    For any fixed group $g\in\Gcal$ and $t> t_{\min}$, we denote by $I_g(t)$ the indicator whether $a_g(t)$ was sampled according to the solution $x(t)$ of $\Qcal(t)$ (which occurs with probability $\sum_{a\in\Acal_g\cap\Ccal(t)} x_{g,a}(t)$ when $P(t)>0$.
    
    Suppose that $I_g(t)=0$ for some fixed time $t$. By construction of \ColUCB we then have $a_g(t)=\hat a_g^\star(t)$ such that $\widehat\Delta_{g,a_g(t)}=0$. From \cref{lemma:estimation_gaps}, under the event $\Fcal$ this implies $\Delta_{g,a_g(t)} \leq 3\epsilon(t)$. Of course, we also have $\Delta_{g,a_g(t)}\leq \Delta_{\max}$. For convenience, we introduce the notation
    \begin{equation*}
        \tilde\epsilon(t) =\min(\epsilon(t),\Delta_{\max}).
    \end{equation*}
    In summary, we proved that under $\Ecal$,
    \begin{equation}\label{eq:regret_bound_UCB_part}
        \sum_{t=t_1}^T (1-I_g(t)) \cdot \Delta_{g,a_g(t)} \leq 2\sum_{t=t_1}^T \tilde \epsilon(t).
    \end{equation}
    Hence, to bound the regret of group $g$, it suffices to focus on times for which $I_g(t)=1$. By construction, we have
    \begin{equation*}
        \Ebb[I_g(t)\Delta_{g,a_g(t)}\mid\Hcal_t] = \sum_{a\in\Acal\cap\Ccal(t)} \Delta_{g,a}x_{g,a}(t).
    \end{equation*}
    Hence, we apply Freedman's inequality \cref{thm:freedman_inequality} which shows that with probability at least $1-\delta$,
    \begin{align*}
        \sum_{t=t_1}^T I_g(t)\Delta_{g,a_g(t)} &\leq \sum_{t=t_1}^T \sum_{a\in\Acal_g\cap\Ccal(t)} \Delta_{g,a}x_{g,a}(t) + \sum_{t=t_1}^T \Var[I_g(t)\Delta_{g,a_g(t)}\mid\Hcal_t] + 2\Delta_{\max}\log\frac{1}{\delta}\\
        &\leq 2\sum_{t=t_1}^T \sum_{a\in\Acal_g\cap\Ccal(t)} \Delta_{g,a}x_{g,a}(t) + 2\Delta_{\max}\log\frac{1}{\delta}.
    \end{align*}
    We then define the event for $g\in\Gcal$:
    \begin{equation*}
        \Kcal:= \set{\forall g\in\Gcal: \sum_{t=t_1}^T I_g(t)\Delta_{g,a_g(t)} \leq 2\sum_{t=t_1}^T \sum_{a\in\Acal_g\cap\Ccal(t)} \Delta_{g,a}x_{g,a}(t) + C \Delta_{\max} \log T },
    \end{equation*}
    which by the union bound has probability at least $1-|\Gcal|T^{-C}\geq 1-T^{-2 c_\Gcal}$. Then, under $\Fcal\cap\Kcal$, we obtained
    \begin{align*}
        \sum_{t=t_1}^T I_g(t) \Delta_{g,a_g(t)}
        &\overset{(i)}{\leq} 2\sum_{t=t_1}^T \sum_{a\in\Acal\cap\Ccal(t)} \Delta_{g,a}x_{g,a}(t) + C\Delta_{\max}\log T\\
        &\overset{(ii)}{\leq} 2\sum_{t=t_1}^T \sum_{a\in\Acal\cap\Ccal(t)} (4\widehat\Delta_{g,a} + 3 \tilde \epsilon(t))x_{g,a}(t) + C\Delta_{\max}\log T\\
        &\overset{(iii)}{\leq}18\sum_{t=t_1}^T  \tilde \epsilon(t) + C\Delta_{\max}\log T.
    \end{align*}
    In $(i)$ we used $\Kcal$ and in $(ii)$ we used the lower bound on $\widehat\Delta_{g,a}$ from \cref{lemma:estimation_gaps} together with $\Delta_{g,a}\leq \Delta_{\max}$. In $(iii)$ we used the definition of the linear program $\Qcal(t)$. Together with \cref{eq:regret_bound_UCB_part}, this implies that under $\Fcal\cap\Kcal$, for any $g\in\Gcal$,
    \begin{equation}\label{eq:first_max_regret_bound}
        \Reg_{g,T}(\ColUCB;\Ical) =\sum_{t=1}^T\Delta_{g,a_g(t)} \leq 20\sum_{t=t_1}^T \tilde\epsilon(t) + 2 t_{\min}\Delta_{\max},
    \end{equation}
    where in the last inequality we used $t_1-1\leq 16C\log T \cdot t_{\min}$.    
    To further bound \cref{eq:first_max_regret_bound}, we decompose the sum on the right-hand side as follows
    \begin{equation*}
        \sum_{t=t_1}^T \tilde\epsilon(t) \leq \sum_{t=t_1}^T  \tilde\epsilon(t) \1[ \tilde\epsilon(t)\geq \eta(T)] + T\eta(T).
    \end{equation*}
    Note that for any $\epsilon\in(0,1]$,
    \begin{equation*}
        R(\epsilon;\Ical)=\epsilon \int_\epsilon^1 \frac{ dz}{M(z;\Ical)z^3\epsilon} \geq \epsilon \int_\epsilon^1 \frac{ dz}{M(z;\Ical)z^4} =\epsilon T(\epsilon;\Ical),
    \end{equation*}
    hence, we obtain the decomposition
    \begin{equation}\label{eq:second_decomposition}
        \sum_{t=t_1}^T \tilde\epsilon(t) \leq \sum_{t=t_1}^T  \tilde\epsilon(t) \1[ \tilde\epsilon(t)\geq \eta(T)] + c_2 C \log T\cdot R(\eta(T);\Ical).
    \end{equation}

    Then, let $l_{max}$ be the integer $l\geq 0$ such that $\eta(T)\in(\epsilon_{l+1},\epsilon_l]$. Under $\Fcal\cap\Kcal$, denoting $t_k':=\min(t_{k+1}-1,T)$ for $k\geq 1$, we have
    \begin{align*}
        \sum_{t=t_1}^T  \tilde\epsilon(t) \1[ \tilde\epsilon(t)\geq \eta(T)] &\leq   \sum_{k=1}^{l_{max}} (t_k'-t_k+1)\min(\epsilon_k,\Delta_{\max})\\
        &\overset{(i)}{\leq} l_{max}\Delta_{\max}+2^6 C\log T\cdot  \sum_{k=1}^{l_{max}} \frac{1}{\epsilon_k^2 M(3\epsilon_k;\Ical)} \\
        &\leq l_{max}\Delta_{\max}+ c_3 C\log T \int_{\eta(T)}^{1} \frac{d\epsilon}{\epsilon^3 M(\epsilon;\Ical)} \\
        &\overset{(ii)}{\leq} c_4 C\log T \paren{\Delta_{\max}+ R(\eta(T);\Ical)} ,
    \end{align*}
    for some universal constants $c_3,c_4>0$. In $(i)$ we used \cref{lemma:upper_bound_length_epoch} and the fact that $l_{max}\leq k_{max}$ since $\eta(T)\geq \epsilon(T)$. In $(ii)$ we also used $k_{max}=\Ocal(C\log T)$. 
    Together with \cref{eq:first_max_regret_bound,eq:second_decomposition}, we proved that under $\Fcal\cap\Kcal$, for some universal constant $c_5>0$,
    \begin{equation*}
        \max_{g\in\Gcal}\Reg_{g,T}(\ColUCB;\Ical) \leq  c_5 C\cdot( R(\epsilon(T);\Ical)\log T + t_{\min}\Delta_{\max}).
    \end{equation*}
    where we also used $R(\epsilon(T);\Ical)\geq R(\eta(T);\Ical)$ since $\epsilon(T)\leq\eta(T)$.
    Taking the expectation and recalling that $\Fcal\cap\Kcal$ has probability at least $1-4/T^{-2 c_\Gcal}$ gives the desired regret bound.

\section{Proofs of instance-dependent lower bounds on the collaborative regret}
\label{sec:instance_dependent_LB}

In this section, we present our instance-dependent lower bounds, starting with the case of Gaussian reward distributions.

\vspace{3mm}

\begin{proof}[of \cref{thm:instance_lower_bound_v2}]
Fix an instance $\Ical$ with reward distribution variance $1$, and a horizon $T\geq 2$. For convenience, we write
$\epsilon_T(\Ical) := \min\set{z\in(0,1]: T(z;\Ical) \leq T }.$
    Note as discussed in the proof of \cref{thm:instance_dependent_regret}, we always have $M(\epsilon;\Ical) \leq |\Gcal|/\epsilon$ and hence $\epsilon_T(\Ical) \gtrsim  1/\sqrt{|\Gcal|T}$.
    Next, we recall that $M(\epsilon;\Ical)$ is non-decreasing as $\epsilon$ decreases to $0$. 
    We let
    \begin{equation*}
        z_T(\Ical) \in \argmin_{z\in[\epsilon_T(\Ical),\frac{1+\epsilon_T(\Ical)}{2}]}  M(z;\Ical)z^2.
    \end{equation*}
    For convenience, we drop all arguments $\Ical$ when it is clear from the context which instance we refer to. Then,
    \begin{equation*}
        R(\epsilon_T  ) \geq \int_{z_T }^1 \frac{dz}{M(z_T )z^3} \geq \frac{1-z_T}{2M(z_T ) z_T^2} \geq \frac{1-\epsilon_T}{4M(z_T ) z_T^2} 
    \end{equation*}
    On the other hand,
    \begin{equation*}
        R(\epsilon_T)\leq \int_{\epsilon_T}^1 \frac{4dz}{M(z_T)z_T^2\cdot z} = \frac{4\log(1/\epsilon_T)}{M(z_T)z_T^2} \overset{(i)}{\leq} c_0 \frac{(1-\epsilon_T) \log T }{M(z_T)z_T^2},
    \end{equation*}
    for some universal constant $c_0>0$,
    where in $(i)$ we used $\epsilon_T(\Ical) \gtrsim  1/\sqrt{|\Gcal|T}$. 
    Last, using the same arguments as when lower bounding $R(\epsilon_T)$ we have
    \begin{equation*}\label{eq:lower_bound_T}
        T\overset{(i)}{=}T(\epsilon_T)\geq T(z_T) \geq \frac{1-\epsilon_T}{4M(z_T) z_T^3}.
    \end{equation*}
    In $(i)$ we used \cref{lemma:continuity_R_T}.
    For convenience, let $\bar t_0:=\frac{1-\epsilon_T}{100M(z_T) z_T^3}$ and let $t_0\in\set{\floor{\bar t_0},\ceil{\bar t_0}}$ be a random variable independent from all other variables such that $\Ebb[t_0]=\bar t_0$.
    In particular, we always have $T\geq t_0$. 

    With these preliminaries at hand, we now fix any algorithm {\sf ALG} and suppose that it achieves the following collaborative regret bound on the instance $\Ical$:
    \begin{equation}\label{eq:upper_bound_MaxReg_assumption}
        \MaxReg_T({\sf ALG};\Ical) \leq \frac{R(\epsilon_T)}{200c_0\log T} \leq \frac{1-\epsilon_T}{200M(z_T)z_T(\Ical)^2}.
    \end{equation}

    We aim to construct another instance $\Jcal$ on which {\sf ALG} incurs large collaborative regret. We use the instance $\Ical$ as the baseline and for readability, we introduce some simplifying notations for that instance. Precisely, we write $\Ccal^\star:=\Ccal^\star(z_T(\Ical);\Ical)$ and we keep writing $M(z_T)$ for $M(z_T(\Ical);\Ical)$. Next, we write $\mu_a^{(0)}$ for the value $\mu_a$ of arm $a\in\Acal$ in instance $\Ical$, and similarly, we write $\mu_g^{(0),\star}$ for $\mu_g^\star(\Ical)$. 
    Next, we let $S_0:=\set{g\in\Gcal: \Delta_{g,\min}(\Ical)\leq z_T(\Ical)}$ be the set of groups that have $z_T(\Ical)$-undifferentiated arms within instance $\Ical$. 
    By definition of $M(z_T)$, note that for any realization of the learning strategy {\sf ALG} under instance $\Ical$,
    \begin{align*}
        \max_{a\in\Ccal^\star} \sum_{g\in\Gcal} \sum_{t=1}^{t_0} \1[a_g(t)=a] &\leq M(z_T)\cdot \max_{g\in\Gcal} \paren{ \sum_{a\in\Acal_g} \max(\Delta_{g,a}(\Ical),z_T) \cdot\sum_{t=1}^{t_0} \1[a_g(t)=a] }\\
        &\leq M(z_T) \cdot \max_{g\in\Gcal}\paren{z_T  t_0 + \sum_{t=1}^T \Delta_{g,a_g(t)}(\Ical)}\\
        &= M(z_T) \paren{ z_T t_0 + \max_{g\in\Gcal} \Reg_{g,T}({\sf ALG};\Ical)}.
    \end{align*}
    Taking the expectation, we obtained
    \begin{align*}
        \max_{a\in\Ccal^\star} \Ebb[P_a(t_0)] \leq \Ebb\sqb{\max_{a\in\Ccal^\star} P_a(t_0) } 
        \leq M(z_T)\paren{z_T \bar t_0 + \MaxReg_T({\sf ALG};\Ical)} \leq \frac{1-\epsilon_T}{64z_T^2},
    \end{align*}
    where in the last inequality we used \cref{eq:upper_bound_MaxReg_assumption} and the definition of $\bar t_0$.
    We then fix $a_0\in\Ccal^\star$ for which
    
    \begin{equation}\label{eq:lower_bound_pull_a_0}
        \Ebb[P_{a_0}(t_0)] \leq \frac{1-\epsilon_T}{(8z_T)^2}.
    \end{equation}
    By definition of $\Ccal^\star$ there exists $g_0\in S_0$ such that $\Delta_{g_0,a_0}(\Ical)\leq z_T$.
    
    We now introduce two alternative instances $\Jcal_+$ and $\Jcal_-$ in which the only modification to $\Ical$ is in the reward of arm $a_0$. Precisely, the arm values are unchanged $\mu_a(\Jcal_+)=\mu_a(\Jcal_-):=\mu_a^{(0)}$ for $a\in\Acal\setminus\{a_0\}$. Letting $\nu_{g_0}=\argmax_{a\in\Acal_{g_0}\setminus\{a_0\}}\mu_a(\Ical)$ be the second-best arm value and we let
    \begin{equation*}
        \mu_{a_0}(\Jcal_+):= \nu_{g_0} + z_T \quad\text{and} \quad \mu_{a_0}(\Jcal_-):= \nu_{g_0} - z_T.
    \end{equation*}
    Since $g_0\in S_0$, we have $\Delta_{g_0,\min}\leq z_T$ and hence $|\mu_{a_0}^{(0)}-\nu_{g_0}|\leq z_T$. Therefore, we have $|\mu_{a_0}^{(0)}-\mu_{a_0}(\Jcal_+)|\leq 2z_T$ and similarly $|\mu_{a_0}^{(0)}-\mu_{a_0}(\Jcal_-)|\leq 2z_T$. Consider the first $P=\floor{(4z_T)^{-2}}$ pulls of $a_0$ in $\Ical$ and in $\Jcal_+$ (resp. $\mu_{a_0}(\Jcal_-)$). Then, the total variation between these pulls from $\Ical$ and $\Jcal_+$ (resp. $\Jcal_-$) is at most
    \begin{equation}\label{eq:bounding_TV_distance_gaussian}
        \TV(\Ncal(0,1)^{\otimes P},\Ncal(2z_T,1)^{\otimes P}) \leq  \sqrt{\frac{P}{2}\Dkl(\Ncal(2z_T,1)\parallel \Ncal(0,1) )} \leq z_T\sqrt{P} \leq \frac{1}{4},
    \end{equation}
    where in the first inequality we used Pinsker's inequality. As a result, there is a coupling between the pulls of $\Ical,\Jcal_+,\Jcal_-$ such that on an event $\Ecal(P)$ of probability at least $1/2$, the rewards of the pulls from all arms in $\Acal\setminus\{a_0\}$ coincide and the first $P$ pulls of $a_0$ coincide. Notice that under the event $\Fcal:=\Ecal\cap\{P_{a_0}(t_0)\leq P\}$ the run of {\sf ALG} on these three instances is identical for the first $t_0$ rounds. 
    Then, using \cref{eq:lower_bound_pull_a_0} and Markov's inequality, and we have
    \begin{align*}
        \Pbb(\Fcal) \geq \frac{1}{2}-\Pbb_{\Ical}(P_{a_0}(t_0)> P) \geq \frac{1}{4}.
    \end{align*}
    
    Under $\Jcal_+$ (resp. $\Jcal_-$), group $g_0$ incurs a regret $z_T$ regret for each iteration that it does not pull arm $a_0$ (resp. pulls arm $a_0$). Formally,
    \begin{equation*}
        \Reg_{g_0,t_0}({\sf ALG};\Jcal_+) \geq z_T\paren{t_0 -  \sum_{t=1}^{t_0}\1[a_{g_0}(t)=a_0]}  \text{  and  } \Reg_{g_0,t_0}({\sf ALG};\Jcal_-) \geq z_T \sum_{t=1}^{t_0}\1[a_{g_0}(t)=a_0].
    \end{equation*}
    Since the run of {\sf ALG} on the first $t_0$ iterations of $\Jcal_+$ and $\Jcal_-$ coincide under $\Fcal$, we can sum the previous equations under $\Fcal$ to obtain
    \begin{equation*}
        \Ebb_{\Jcal_+}\sqb{\Reg_{g_0,t_0}({\sf ALG};\Jcal_+)} + \Ebb_{\Jcal_-}\sqb{\Reg_{g_0,t_0}({\sf ALG};\Jcal_-)} \geq \Pbb(\Fcal)\cdot z_T \bar t_0\geq \frac{z_T \bar t_0}{4}.
    \end{equation*}
    As a result, under at least one instance from $\Jcal_+$ or $\Jcal_-$, {\sf ALG} incurs a collaborative regret of at least $z_T \bar t_0/8$. In summary, we proved that there exists some other instance $\Jcal$ with
    \begin{equation*}
        \MaxReg_T({\sf ALG};\Jcal) \geq \Ebb_{\Jcal}\sqb{\Reg_{g_0,T}({\sf ALG};\Jcal) } \geq \frac{z_T\bar t_0}{8} \geq \frac{1}{2^{10} M(z_T)z_T^2} \geq c_1 \frac{R(\epsilon_T)}{\log T},
    \end{equation*}
    for some universal constant $c_1>0$.
\end{proof}

As a corollary, we can also obtain a counterpart of \cref{thm:instance_lower_bound_v2} for Bernoulli reward distributions. The following instance-dependent bounds focus on the simpler case where mean rewards are bounded away from $0$ or $1$ (note that if mean rewards are close to the boundaries $0$ or $1$, optimal algorithms should incorporate KL divergence information, similar to KL-UCB).

\begin{corollary}\label{cor:instance_dependent_LB_bernoulli}
    There exists a universal constants $c_0,c_1>0$ such that the following holds. Fix a either the Gaussian or Bernoulli model $\Mcal\in\{\Mcal_G,\Mcal_G\}$. Let {\sf ALG} be an algorithm for the groups and $T\geq 2$. Suppose that for some Bernoulli instance $\Ical\in\Mcal$ with mean rewards in $[1/4,3/4]$, its collaborative regret satisfies
    \begin{equation*}
        \MaxReg_T({\sf ALG};\Ical)\leq c_0 \frac{R(\epsilon;\Ical)}{\log T}
    \end{equation*}
    where $\epsilon\in(0,1]$ satisfies $T(\epsilon;\Ical)= T$. Then, there exists another instance $\Ical'\in\Mcal$ obtained by modifying the mean value of a single arm, for which
    \begin{equation*}
        \MaxReg_T({\sf ALG};\Ical')\geq c_1 \frac{R(\epsilon;\Ical)}{\log T}
    \end{equation*}
\end{corollary}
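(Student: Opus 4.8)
The plan is to rerun the proof of \cref{thm:instance_lower_bound_v2} almost verbatim, exploiting the fact that every ingredient there depends only on the functionals $M,R,T$ (which are defined through the means alone) and not on the noise model: the choice of $\epsilon_T(\Ical)$ and $z_T(\Ical)$, the comparisons relating $R(\epsilon_T)$, $T(\epsilon_T)$ to $M(z_T)z_T^2$ and $M(z_T)z_T^3$, the definition of $\bar t_0$, the selection of a lightly-explored arm $a_0\in\Ccal^\star$ with an associated group $g_0$, and the two-point perturbation $\Jcal_\pm$ that raises or lowers $\mu_{a_0}$ around the runner-up value $\nu_{g_0}$. The only place where the Gaussian assumption is actually used is the total-variation estimate \cref{eq:bounding_TV_distance_gaussian}. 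So the task reduces to replacing that single inequality by its Bernoulli analogue and verifying that the perturbed instances remain admissible.

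For the total-variation step I would use the elementary bound $\Dkl(\Ber(p)\,\|\,\Ber(q))\le (p-q)^2/(q(1-q))$ combined with Pinsker's inequality, mirroring \cref{eq:bounding_TV_distance_gaussian}. The key observation is that if all means involved lie in an interval bounded away from $\{0,1\}$---say $[1/8,7/8]$, so that $q(1-q)\ge 7/64$---then $\Dkl(\Ber(p)\,\|\,\Ber(q))\le 10(p-q)^2$, the Bernoulli counterpart of $\Dkl(\Ncal(p,1)\,\|\,\Ncal(q,1))=(p-q)^2/2$ up to a universal constant. To keep the perturbed means in this range I would cap the perturbation at $\delta:=\min(z_T(\Ical),1/8)$ instead of using $z_T$ itself; since $\nu_{g_0}\in[1/4,3/4]$, the values $\nu_{g_0}\pm\delta$ then lie in $[1/8,7/8]\subset[0,1]$, so $\Jcal_\pm$ are valid Bernoulli instances differing from $\Ical$ only in the arm $a_0$. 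With $P:=\lfloor(c'\delta)^{-2}\rfloor$ for a large universal constant $c'$, the recomputed total variation over the first $P$ pulls of $a_0$ is at most $\sqrt{20}/c'\le 1/4$, matching \cref{eq:bounding_TV_distance_gaussian}, and choosing the constant $c_0$ small enough forces $\Ebb[P_{a_0}(t_0)]$ to be a small fraction of $P$, so the coupling event $\Fcal$ retains constant probability.

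Substituting $z_T\mapsto\delta$ makes the per-round regret of $g_0$ under $\Jcal_\pm$ equal to $\delta$, so the argument yields $\MaxReg_T({\sf ALG};\Jcal)\gtrsim \delta\,\bar t_0\asymp \delta/(M(z_T)z_T^3)$. When $z_T\le 1/8$ we have $\delta=z_T$ and recover exactly $1/(M(z_T)z_T^2)\asymp R(\epsilon_T)/\log T$ as in \cref{thm:instance_lower_bound_v2}. The main obstacle is the regime $z_T>1/8$: there $\delta$ is a constant and, worse, $P=\lfloor(c'\delta)^{-2}\rfloor$ degenerates to $0$, so the total-variation coupling breaks down. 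I would handle this by noting that when $P=0$ the event $\Fcal=\{P_{a_0}(t_0)=0\}$ suffices on its own---if $a_0$ is never pulled in the first $t_0$ rounds, the observed data coincide across $\Ical,\Jcal_+,\Jcal_-$ regardless of how far apart their $a_0$-means are, so no distributional comparison is needed---and Markov's inequality gives $\Pbb(P_{a_0}(t_0)=0)\ge 3/4$ once $\Ebb[P_{a_0}(t_0)]\le 1/4$, which again follows by taking $c_0$ small. Since $\delta\asymp 1$ and $z_T\le 1$ in this regime, $\delta/(M(z_T)z_T^3)\gtrsim 1/(M(z_T)z_T^2)\asymp R(\epsilon_T)/\log T$, so the two regimes combine into the claimed bound $c_1 R(\epsilon;\Ical)/\log T$. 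The delicate bookkeeping is precisely reconciling the capped perturbation with the pull-count bound so that the correct $1/(M(z_T)z_T^2)$ scaling survives across both $z_T\le 1/8$ and $z_T>1/8$.
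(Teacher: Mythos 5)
Your proposal is correct and follows essentially the same route as the paper's proof: rerun the argument of \cref{thm:instance_lower_bound_v2}, cap the perturbation at $\min(z_T,\text{const})$ so the perturbed means stay in $[0,1]$ and bounded away from the boundary (the paper uses $\tilde z_T=\min(z_T,1/4)$), replace the Gaussian KL by the Bernoulli bound $\Dkl(\Ber(p)\|\Ber(q))\lesssim(p-q)^2$ valid on that range, and treat the regime where $P=\lfloor(c'\delta)^{-2}\rfloor$ degenerates to $0$ by observing that the coupling is then trivial (your explicit Markov argument on $\{P_{a_0}(t_0)=0\}$ is exactly what the paper means when it calls the TV bound ``vacuous''). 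The bookkeeping showing that the capped $\delta$ still yields the $1/(M(z_T)z_T^2)\asymp R(\epsilon_T)/\log T$ scaling in both regimes matches the paper's conclusion.
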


\begin{proof}
    We only need to make minor modifications to the proof of \cref{thm:instance_lower_bound_v2}. We use the same notations therein. Precisely, fix a model $\Mcal\in\{\Mcal_G,\Mcal_B\}$ and an instance $\Ical\in\Mcal$ satisfying the assumptions and let {\sf ALG} be an algorithm which achieves the collaborative regret bound
    \begin{equation*}
        \MaxReg_T({\sf ALG};\Ical) \leq \frac{R(\epsilon_T(\Ical);\Ical)}{4^2\cdot 200c_0\log T},
    \end{equation*}
    similarly as in \cref{eq:upper_bound_MaxReg_assumption}.
    We define $\epsilon_T,z_T$ as in the proof of \cref{thm:instance_lower_bound_v2} and let $\bar t_0:=\frac{1-\epsilon_T}{4^2\cdot 100M(z_T)z_T^3}$. From there, we define $t_0$ as in the original proof and show that there exists $a_0\in\Ccal^\star$ for which 
    \begin{equation*}
        \Ebb[P_{a_0}(t_0)]\leq \frac{1-\epsilon_T}{(32z_T)^2},
    \end{equation*}
    similarly as in \cref{eq:lower_bound_pull_a_0}. We define $g_0$ and $\nu_{g_0}$ as in the original proof. We next adjust the alternatives instances $\Jcal_+$ and $\Jcal_-$ compared to the proof of \cref{thm:instance_lower_bound_v2}.  Specifically, all arms values are still unchanged $\mu_a(\Jcal_+)=\mu_a(\Jcal_-):=\mu_a^{(0)}$ for $a\in\Acal\setminus\{a_0\}$, but for arm $a_0$ we pose
    \begin{equation*}
        \mu_{a_0}(\Jcal_+):= \nu_{g_0} + \tilde z_T \quad\text{and} \quad \mu_{a_0}(\Jcal_-):= \nu_{g_0} - \tilde z_T,
    \end{equation*}
    where $\tilde z_T:=\min(z_T,1/4)$. Importantly, since the mean reward values in $\Ical$ lie in $[1/4,3/4]$, the mean reward value of $a_0$ in $\Jcal_+$ or $\Jcal_-$ still lie in $[0,1]$, hence $\Jcal_+,\Jcal_-\in\Mcal$ (in particular in the Bernoulli case, these are proper Bernoulli instances). We focus on the first $P=\floor{(16\tilde z_T)^{-2}}$ pulls of $a_0$ in these scenarios. Note that if $z_T> 1/16$ then $P=0$ hence the TV distance bound between the first $P$ pulls from $\Ical$ and $\Jcal_+$ (resp, $\Jcal_-$) in \cref{eq:bounding_TV_distance_gaussian} is vacuous. We consider the remaining case when $z_T \leq 1/16$. In that case, $\tilde z_T=z_T$ hence we still have $|\mu_{a_0}^{(0)} - \mu_{a_0}(\Jcal_+)| \leq 2\tilde z_T$ (and similarly for $\Jcal_-$). Further, in this case, $\mu_{a_0}(\Jcal_+),\mu_{a_0}(\Jcal_-)\in[1/4-1/16,3/4+1/16]$. These values are bounded away from $0$ or $1$, hence we can use similar KL bounds between the corresponding Bernoulli variable as in the Gaussian case. Altogether, in the Bernoulli case we obtain
    \begin{equation}\label{eq:bound_TV_bernoulli}
        \TV(\Ber(\mu_{a_0}(\Jcal_+))^{\otimes P},\Ber(\mu_{a_0}^{(0)})^{\otimes P}) \leq  \sqrt{\frac{P}{2}\Dkl(\mu_{a_0}(\Jcal_+)\parallel \mu_{a_0}^{(0)} )} \leq 4z_T\sqrt{P} \leq \frac{1}{4},
    \end{equation}
    and similarly for $\Jcal_-$.
    In the Gaussian case, the arguments in \cref{eq:bounding_TV_distance_gaussian} already show that the TV distance between the first $P$ pulls of $\Ical$ and $\Jcal_+$ (resp. $\Jcal_-$) is at most $1/4$.
    From there, the same arguments show that on an event of probability at least $1/4$, the run of {\sf ALG} on $\Ical,\Jcal_+,\Jcal_-$ is identical for the first $t_0$ rounds. We conclude with the same arguments that {\sf ALG} incurs at least $z_T\bar t_0/8\gtrsim R(\epsilon_T)/\log T$ regret on one instance from $\Jcal_+$ or $\Jcal_-$.
\end{proof}

\section{Proofs of minimax collaborative regret bounds}
\label{sec:minimax_bounds}

We next prove the near-optimality of \ColUCB for the minimax collaborative regret from \cref{thm:worst_case_minimax}. We recall the notation
\begin{equation*}
    \epsilon_T(\Ical):=\min\{z\in(0,1]:T(z;\Ical)\leq T\}.
\end{equation*}
Note that both functions $T(\cdot;\Ical)$ and $R(\cdot;\Ical)$ only depend on the mean values of the distributions on $\Ical$. In particular, for both Gaussian and Bernoulli models $\Mcal\in\{\Mcal_B,\Mcal_G\}$, the value
\begin{equation}\label{eq:definition_R_T_max}
    R_{T,\max}:=\sup_{\Ical\in\Mcal} R(\epsilon_T(\Ical);\Ical)
\end{equation}
coincides. As detailed in the following result, the quantity $R_{T,\max}$ characterizes the minimax collaborative regret up to logarithmic factors.

\begin{theorem}\label{thm:worst_case_minimax_bis}
    For any horizon $T\geq 2$, the minimax collaborative regret for either Gaussian of Bernoulli reward models $\Mcal\in\{\Mcal_G,\Mcal_B\}$ satisfies
    \begin{equation*}
        \frac{R_{T,\max} }{\log T} + t_{\min}' \lesssim \Rcal_T(\Gcal,\Mcal) \leq \sup_{\Ical\in\Mcal} \MaxReg_T(\ColUCB;\Ical) \lesssim (R_{T,\max}+t_{\min}')\log T.
    \end{equation*}
    where $t_{\min}':=\min(t_{\min}/\log T,T)$
\end{theorem}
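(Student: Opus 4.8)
The three inequalities split into an upper bound, a trivial middle step, and a two-part lower bound, and I would handle them separately. For the upper bound I would simply feed \cref{thm:instance_dependent_regret} into the definition \cref{eq:definition_R_T_max}. Fix $\Ical\in\Mcal$; whether Bernoulli or unit-variance Gaussian with means in $[0,1]$, the rewards are subGaussian with $\sigma=1$ and $\Delta_{\max}\leq 1$. Thus $\MaxReg_T(\ColUCB;\Ical)\lesssim R(\epsilon;\Ical)\log T + t_{\min}$ where $T(\epsilon;\Ical)=T$; by the bijectivity of $T(\cdot;\Ical)$ from \cref{lemma:continuity_R_T} this $\epsilon$ equals $\epsilon_T(\Ical)$, so $R(\epsilon;\Ical)=R(\epsilon_T(\Ical);\Ical)\leq R_{T,\max}$. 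Combining with the trivial bound $\MaxReg_T(\ColUCB;\Ical)\leq T\Delta_{\max}\leq T$ and taking $\sup_{\Ical}$, a one-line case split on whether or not $t_{\min}\leq T\log T$ turns $R_{T,\max}\log T+\min(t_{\min},T)$ into $(R_{T,\max}+t_{\min}')\log T$, using $t_{\min}'\log T=\min(t_{\min},T\log T)$. The middle inequality is immediate from $\Rcal_T(\Gcal,\Mcal)=\inf_{\alg}\sup_{\Ical}\MaxReg_T(\alg;\Ical)$.

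For the lower bound, since $R_{T,\max}/\log T+t_{\min}'\leq 2\max(R_{T,\max}/\log T,\,t_{\min}')$, it suffices to prove $\Rcal_T(\Gcal,\Mcal)\gtrsim R_{T,\max}/\log T$ and $\Rcal_T(\Gcal,\Mcal)\gtrsim t_{\min}'$ separately. The first I would obtain by a dichotomy on top of \cref{cor:instance_dependent_LB_bernoulli} (which covers both $\Mcal_G$ and $\Mcal_B$). Fix any algorithm $\alg$ and a near-maximizer $\Ical^\star\in\Mcal$ of \cref{eq:definition_R_T_max}, i.e.\ $R(\epsilon_T(\Ical^\star);\Ical^\star)\geq R_{T,\max}/2$. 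Either $\MaxReg_T(\alg;\Ical^\star)$ already exceeds $c_0 R(\epsilon_T(\Ical^\star);\Ical^\star)/\log T$, or the hypothesis of the instance-dependent lower bound holds and yields a neighboring $\Ical'\in\Mcal$ (one mean perturbed) with $\MaxReg_T(\alg;\Ical')\geq c_1 R(\epsilon_T(\Ical^\star);\Ical^\star)/\log T$. In both cases $\sup_{\Ical}\MaxReg_T(\alg;\Ical)\gtrsim R_{T,\max}/\log T$, and taking $\inf_{\alg}$ gives the claim. The only point needing care is model membership of $\Ical'$: the perturbation can move a mean out of $[0,1]$. I would neutralize this by choosing $\Ical^\star$ with means confined to $[1/4,3/4]$, which is legitimate up to constants since $R,T,M$ depend only on the gaps and are invariant under a common shift of all means, and since $M(\epsilon;\cdot)$ is governed by the small-gap arms in contention (large-gap arms carry no lower-bound constraint and receive zero weight); this is exactly the regime of \cref{cor:instance_dependent_LB_bernoulli}, whose $\tilde z_T=\min(z_T,1/4)$ construction keeps $\Ical'\in\Mcal$.

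The genuinely new ingredient is $\Rcal_T(\Gcal,\Mcal)\gtrsim t_{\min}'$, a congestion lower bound orthogonal to the statistical difficulty encoded by $R_{T,\max}$. Here I would use the combinatorial content of \cref{lemma:preliminary_t_min}: covering every arm by at least one pull needs $\gtrsim t_{\min}/\log T$ rounds because, for a bottleneck set $B$, only the groups with $\Acal_g\cap B\neq\emptyset$ can place a pull inside $B$ each round, so covering $B$ costs $\gtrsim |B|/|\{g:\Acal_g\cap B\neq\emptyset\}|$ rounds. Via Yao's principle I would plant, in each group owning a bottleneck arm, a hidden constant-gap decision between a "safe" arm and a uniformly random bottleneck arm; before a group's special arm has been pulled even once the two planted instances are statistically identical, so that group suffers $\Theta(1)$ per-round regret on one of them. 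The covering bound forces some group's special arm to stay unpulled for $\gtrsim\min(t_{\min}/\log T,\,T)=t_{\min}'$ rounds, whence the worst-group regret is $\gtrsim t_{\min}'$ in expectation, with the cap at $T$ arising automatically.

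I expect this congestion construction to be the main obstacle, specifically making the reduction from the covering lower bound of \cref{lemma:preliminary_t_min} to an expected per-group regret bound precise under the random planting (identifying the right bottleneck $B$, coupling the three planted instances until the special arm is first pulled, and averaging the two-point argument over the uniform planting). By contrast, the upper bound and the $R_{T,\max}/\log T$ lower bound are essentially bookkeeping layered on the instance-dependent results \cref{thm:instance_dependent_regret} and \cref{cor:instance_dependent_LB_bernoulli}.
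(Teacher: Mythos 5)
Your decomposition is exactly the paper's: the upper bound is \cref{thm:instance_dependent_regret} plus the trivial cap $\MaxReg_T\leq T$ and the bookkeeping identity $t_{\min}'\log T=\min(t_{\min},T\log T)$; the first half of the lower bound is a dichotomy on top of \cref{cor:instance_dependent_LB_bernoulli} applied to a near-maximizer of $R_{T,\max}$; and the second half is a covering/congestion argument built on \cref{lemma:preliminary_t_min}. For the congestion part the paper's construction is even simpler than your Yao-style planting: it takes the all-zero instance, uses the LP characterization of $t_0$ to exhibit a single arm $a$ with $\Pbb(P_a(\lfloor t_0/2\rfloor)>0)\leq 1/2$, and plants mean $1$ on that one arm; your randomized planting over a bottleneck set would also work but is not needed.

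There is one genuine gap, in the step you yourself flag as ``the only point needing care.'' You justify restricting to instances with means in $[1/4,3/4]$ by saying that $R,T,M$ ``are invariant under a common shift of all means.'' That invariance is true but useless here: an instance in $\Mcal$ can have one mean at $0$ and another at $1$, and no translation maps such means into $[1/4,3/4]$. The paper instead applies the affine \emph{contraction} $x\mapsto \frac{1}{2}(x-\frac12)+\frac12$, which halves every gap, so that $M(\epsilon;\tilde\Ical)=2M(2\epsilon;\Ical)$ and the functionals $T(\cdot;\cdot)$ and $R(\cdot;\cdot)$ are \emph{not} preserved --- they change by constants \emph{and} by a rescaling of the argument $\epsilon$. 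One must then prove $R(\epsilon_T(\tilde\Ical);\tilde\Ical)\gtrsim R(\epsilon_T(\Ical);\Ical)$, which is where the paper does real work: it uses monotonicity of $R(\cdot;\Ical)/T(\cdot;\Ical)$, the identity $M(1;\Ical)=1/t_0$ tying the additive error to the burn-in term, and a separate edge case when $\epsilon_T(\tilde\Ical)\geq 1/4$ (where $R(\epsilon_T(\tilde\Ical);\tilde\Ical)\asymp T$ and the bound is trivial). Your parenthetical that ``large-gap arms carry no lower-bound constraint and receive zero weight'' does not rescue the shift argument, because the first constraint in the program defining $M(\epsilon;\Ical)$ sums over \emph{all} $a\in\Acal_g$, so large-gap arms still restrict the feasible $x$ even though they impose no coverage requirement. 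The rest of your outline is sound, and the congestion reduction you worry about is in fact the easy part.
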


\begin{proof}
    For both models considered, reward distributions are subGaussian with parameter at most $1$ and have mean rewards always lie in $[0,1]$. Hence
    \cref{thm:instance_dependent_regret} directly implies that for $\Mcal\in\{\Mcal_G,\Mcal_B\}$,
    \begin{equation*}
        \sup_{\Ical\in\Mcal}\MaxReg_T(\ColUCB;\Ical) \lesssim R_{T,\max}\log T + t_{\min}.
    \end{equation*}
    We also clearly have $\MaxReg_T(\ColUCB;\Ical)\leq T$ for all $\Ical\in\Mcal$.
    Next, we use \cref{thm:instance_lower_bound_v2} to get the desired lower bound on the minimax regret $\Rcal_T(\Gcal,\Mcal)$. We only need to make few adjustments to ensure that the considered instances have mean rewards in $[0,1]$.
    For any instance $\Ical\in\Mcal$ with mean rewards in $[0,1]$, we can consider the instance $\tilde\Ical\in\Mcal$ obtained by applying the centering function $x\in[0,1]\mapsto \frac{1}{2}\paren{x-\frac{1}{2}} +\frac{1}{2} \in [\frac{1}{4},\frac{3}{4}]$ to all mean rewards. For any $g\in\Gcal$ and $a\in\Acal_g$ we have $\Delta_{g,a}(\tilde\Ical) = \frac{1}{2} \Delta_{g,a}(\Ical)$. Hence, for $\epsilon>0$,
    $\Ccal^\star(\epsilon;\tilde\Ical) = \Ccal^\star(2\epsilon;\Ical)$ and
    \begin{align*}
M(\epsilon;\tilde \Ical) &= \max_{x\geq 0,M\geq 0}    M \quad \textrm{s.t.} \begin{cases} 
\sum_{a\in  \Acal_g} \max\paren{ \Delta_{g,a},2\epsilon} \cdot x_{g,a} \leq 2, 
 &g\in\Gcal\\
  \sum_{g\in\Gcal} x_{g,a} \geq M, &a\in \Ccal^\star(2\epsilon;\Ical),
 \end{cases}\\
&= 2 M(2\epsilon;\Ical).
    \end{align*}
    Note that since rewards of $\Ical$ are in $[0,1]$, $M(\epsilon;\Ical)$ is constant for $\epsilon\geq 1$ and hence $M(\epsilon;\tilde\Ical)$ is constant for $\epsilon\geq 1/2$. Hence, using the fact that $ M(\epsilon;\Ical)$ is non-increasing, there exist constants $0<c_0\leq  c_1$ such that for $\epsilon\in(0,1/4]$,
    \begin{align*}
          c_0T( 2\epsilon ;\Ical)&\leq T(\epsilon;\tilde\Ical) \leq c_1 T(2\epsilon;\Ical)+\frac{1}{M(1;\Ical)},\\
          c_0 R( 2\epsilon ;\Ical)&\leq R(\epsilon;\tilde\Ical) \leq c_1  R(2\epsilon;\Ical)+\frac{1}{M(1;\Ical)}.
    \end{align*}
    Next, note that since all mean values are in $[0,1]$
    \begin{equation*}
        M(1;\Ical) = \max_{x\geq 0,M\geq 0} M \;\text{s.t.}\;\begin{cases}
            \sum_{a\in\Acal_g} x_{g,a}\leq 1 &g\in\Gcal,\\
            \sum_{g\in\Gcal:a\in\Acal_g} x_{g,a}\geq M & a\in\Acal
        \end{cases}
        \quad =\frac{1}{t_0},
    \end{equation*}
    where $t_0$ is as defined in \cref{lemma:preliminary_t_min}. Further, from \cref{lemma:preliminary_t_min} we have  $t_{\min}-1\lesssim t_0\log T$.
    Without loss of generality, $c_0\leq 1$.
    Next, note that we always have $T(\epsilon;\Ical)\geq R(\epsilon;\Ical)\geq \epsilon T(\epsilon;\Ical)$ for any $\epsilon>0$. In particular, if $\epsilon_T(\tilde \Ical)\geq 1/4$, then $R(\epsilon_T(\tilde\Ical);\tilde\Ical)\asymp T \geq R(\epsilon_T(\Ical);\Ical)$. Otherwise, we can use the previous estimates which give $\epsilon_T(\tilde\Ical) \geq \min\{\epsilon: T(2\epsilon;\Ical)\leq T/c_0\}:=\delta_T=\epsilon_{T/c_0}(\Ical)/2$. Then, using the fact that $R(\cdot;\tilde\Ical)/T(\cdot;\tilde \Ical)$ is non-decreasing, we obtain
    \begin{align*}
        \frac{R(\epsilon_T(\tilde\Ical);\tilde\Ical)}{T} \geq \frac{R(\delta_T ;\tilde\Ical)}{T(\delta_T ;\tilde\Ical)} \geq c_2 \frac{R(\epsilon_{T/c_0}(\Ical) ;\Ical)}{T(\epsilon_{T/c_0}(\Ical) ;\Ical)+t_0} = c_3  \frac{R(\epsilon_{T/c_0}(\Ical);\Ical)}{T+t_0} \geq  c_3 \frac{R(\epsilon_T(\Ical);\Ical)}{T+t_0} ,
    \end{align*}
    for some universal constants $c_2,c_3>0$.
    In the last inequality we used $\epsilon_{T/c_0}(\Ical)\leq \epsilon_T(\Ical)$ since $c_0\leq 1$. In summary, provided that $t_0\lesssim T$, in all cases we have 
    \begin{equation*}
        R(\epsilon_T(\tilde\Ical);\tilde\Ical)\gtrsim R(\epsilon_T(\Ical);\Ical).
    \end{equation*}
    Then, we apply \cref{cor:instance_dependent_LB_bernoulli} to $\tilde\Ical$ which has mean arm values in $[\frac{1}{4},\frac{3}{4}]$. This gives another instance $\Jcal\in\Mcal$ obtained by modifying the value of one arm from $\tilde\Ical$ and such that
    \begin{equation*}
        \max_{I\in\{\tilde\Ical,\Jcal\}}\MaxReg_T({\sf ALG} ;I) \gtrsim \frac{R(\epsilon_T(\tilde\Ical);\tilde\Ical)}{\log T} \gtrsim \frac{R(\epsilon_T(\Ical);\Ical)}{\log T}.
    \end{equation*}
    Taking the supremum over all instances $\Ical\in\Mcal$ this precisely shows that
    \begin{equation}\label{eq:first_lower_bound}
        \Rcal_T(\Gcal,\Mcal) \gtrsim \frac{R_{T,\max}}{\log T}.
    \end{equation}
    We note that in the remaining case $t_0\gtrsim T$, the second term in the lower bound will dominate nonetheless.

    We now focus on this second term and aim to show that $\Rcal_T(\Gcal,\Mcal)\gtrsim \min(\ceil{t_0},T)$, where $t_0$ is defined as in \cref{lemma:preliminary_t_min}. On one hand, since groups have at least 2 feasible arms, we already have $\Rcal_1(\Gcal,\Mcal)\gtrsim 1$. Next, suppose that $t_0':=\floor{t_0/2}\geq 1$. Then, from the definition of $t_0$ letting $\Ical_0$ be the instance where all reward means are 0, for any algorithm {\sf ALG} there exists $a\in\Acal$ for which $\Pbb(P_a(t_0')>0)\leq 1/2$. Hence, letting $\Ical_a$ be the instance for which all reward means are 0 except that of arm $a$ which has reward $1$, we have for any $t\leq t_0$: $\MaxReg_t({\sf ALG};\Ical_a)\geq t \Pbb(P_a(t_0')=0)\geq t/2$. In summary, we have
    \begin{equation*}
        \Rcal_T(\Gcal,\Mcal) \gtrsim \min(\ceil{t_0},T).
    \end{equation*}
    We recall from \cref{lemma:preliminary_t_min} that $t_{\min}=\ceil{n_0t_0}$ where $n_0\asymp \log T$.
    Then, combining this with \cref{eq:first_lower_bound} completes the proof.
\end{proof}

In particular, \cref{thm:worst_case_minimax_bis} directly implies \cref{thm:worst_case_minimax}.
We next prove the bounds on the minimax regret in terms of the combinatorial quantities $\Hf^+(\Acal),\Hf^-(\Acal)$.

\vspace{3mm}

\begin{proof}[of \cref{thm:quantitative_minimax}]
    We prove the upper and lower bound separately.
    \paragraph{Upper bound on the maximum collaborative regret of \ColUCB.} We start with the upper bound for which we further the bounds from \cref{thm:worst_case_minimax,thm:worst_case_minimax_bis}. Precisely we give an upper bound on $R_{T,\max}$ (defined in \cref{eq:definition_R_T_max}). Fix an instance $\Ical$ with mean rewards within $[0,1]$ and any $\epsilon\in(0,1]$. Let $\Gcal':=\set{g\in\Gcal:\Delta_{g,\min}\leq \epsilon}$. For convenience, we write $\Ccal^\star:=\Ccal^\star(\epsilon;\Ical)$. For any group $g\in\Gcal$, we define $S_g:=\set{a\in\Acal_g\cap\Ccal^\star:\Delta_{g,a}\leq \epsilon}$.
    As a result,

    \begin{align*}
        M(\epsilon;\Ical) 
        &\geq \max_{x\geq 0,M\geq 0}  M \quad \textrm{s.t.} 
        \begin{cases} 
            \sum_{a\in  \Acal_g:\Delta_{g,a}> \epsilon}  x_{g,a} + \epsilon \sum_{a\in  S_g}  x_{g,a} \leq 1, &g\in\Gcal\\
            \sum_{g\in\Gcal:a\in\Acal_g} x_{g,a} \geq M, &a\in \Ccal^\star
        \end{cases} \\
        &\geq \frac{1}{2}\cdot \max_{x\geq 0,M\geq 0}  M \quad \textrm{s.t.} 
        \begin{cases} 
            \sum_{a\in  (\Acal_g\cap\Ccal^\star)\setminus S_g}  x_{g,a} \leq 1, &g\in\Gcal\\
            \sum_{a\in  S_g}  x_{g,a} \leq 1/\epsilon, &g\in\Gcal\\
            \sum_{g\in\Gcal:a\in\Acal_g} x_{g,a} \geq M, &a\in \Ccal^\star
        \end{cases} \\
        &\overset{(i)}{=} \frac{1}{2}\min_{\lambda^{(1)},\lambda^{(2)},\eta\geq 0}  \sum_{g\in\Gcal} \paren{\lambda^{(1)}_g + \frac{1}{\epsilon} \lambda^{(2)}_g} \quad \textrm{s.t.} 
        \begin{cases} 
            \lambda^{(1)}_g \geq \eta_a &g\in\Gcal,a\in (\Acal_g\cap\Ccal^\star)\setminus S_g\\
            \lambda^{(2)}_g \geq \eta_a &g\in\Gcal,a\in S_g\\
            \sum_{a\in\Ccal^\star} \eta_a=1&
        \end{cases}
    \end{align*}
    In $(i)$ we used strong duality. Let $\eta^\star$ be an optimal solution to the right-hand side problem. We decompose $\eta^\star=\sum_{i=1}^k p_i \frac{\1_{A_i}}{|A_i|}$ for $k\geq 1$, where $A_1,\ldots,A_k\subseteq \Ccal^\star$ are nested and $p\in \Delta_k=\{x\in\Rbb_+^k, \sum_{i=1}^k x_i=1\}$. Then, we obtained
    \begin{align*}
        M(\epsilon;\Ical) &\geq \frac{1}{2}\sum_{g\in\Gcal} \paren{ \max_{a\in (\Acal_g\cap\Ccal^\star)\setminus S_g} \eta_a^\star + \frac{1}{\epsilon}\max_{a\in S_g} \eta_a^\star}\\
        &=\frac{1}{2}\sum_{i=1}^k \frac{p_i}{|A_i|} \sum_{g\in\Gcal} \paren{ \1[A_i\cap (\Acal_g\setminus S_g) \neq\emptyset] + \frac{1}{\epsilon} \1[A_i\cap S_g\neq \emptyset] }\\
        &\overset{(i)}{=}\frac{1}{2} \min_{S\subseteq \Ccal^\star}\underbrace{\frac{1}{|S|} \paren{\abs{\set{g\in\Gcal:S\cap (\Acal_g\setminus S_g) \neq\emptyset}} + \frac{1}{\epsilon} \abs{\set{g\in\Gcal:S\cap S_g\neq \emptyset}}}}_{:=N(S)}
    \end{align*}
    where in $(i)$ the inequality $\leq$ is due to the fact that $\eta^\star$ is an optimal solution. Now fix $S\subseteq \Ccal^\star$. Note that by construction of $\Ccal^\star$, the set of groups $\Gcal'(S):=\set{g\in\Gcal:S\cap S_g\neq \emptyset}$ cover $S$, that is $S\subseteq \Cov(\Gcal'(S))$. Hence, by definition of $\NChelpLow(S)$, we have
    \begin{equation*}
        \abs{\Gcal'(S)} \geq |S|\cdot \NChelpLow(S) ,\quad S\subseteq \Ccal^\star.
    \end{equation*}
    Then, 
    \begin{align*}
        N(S) \geq \frac{\abs{\set{g\in\Gcal:S\cap\Acal_g\neq\emptyset}} + \paren{\frac{1}{\epsilon}-1}|\Gcal'(S)|}{|S|} \geq \help(S) + \paren{\frac{1}{\epsilon}-1} \NChelpLow(S).
    \end{align*}
    As a result, we obtained for any instance $\Ical$ and $\epsilon\in(0,1]$,
    \begin{equation}\label{eq:main_lower_bound_M}
        M(\epsilon;\Ical) \geq \frac{1}{2} \min_{S\subseteq \Acal} \set{ \help(S) + \paren{\frac{1}{\epsilon}-1} \NChelpLow(S) }:=\phi(\epsilon).
    \end{equation}

    Define by $\Dcal(\Ical)$ the distribution on $[\epsilon_T(\Ical),1]$ with density $1/(TM(z;\Ical)z^4)$ for $z\in [\epsilon_T(\Ical),1]$. Note that this is a well defined distribution since its integral is $T(\epsilon_T(\Ical);\Ical)/T=1$ from \cref{lemma:continuity_R_T}. By definition of $T(\cdot;\Ical)$ and $R(\cdot;\Ical)$, we can write
    \begin{equation*}
        \frac{R(\epsilon_T(\Ical);\Ical)}{T}=\Ebb_{z\sim\Dcal(\Ical)}[z].
    \end{equation*}

    Next, we define $\delta_T\in(0,1]$ such that $\int_\epsilon^1 \frac{dz}{\phi(z)z^4}= T$. We define the distribution $\widetilde\Dcal$ as the distribution on $[\delta_T,1]$ with density $1/(T\phi(z)z^4)$ for $z\in [\delta_T,1]$. From \cref{lemma:continuity_R_T}, we can check that $\widetilde\Dcal(\Ical)$ stochastically dominates $\Dcal(\Ical)$. As a result,
    \begin{equation*}
        R(\epsilon_T(\Ical);\Ical) \leq T\cdot \Ebb_{z\sim\widetilde\Dcal}[z] =  \int_{\delta_T}^1 \frac{dz}{\phi(z)z^3}:=\widetilde R_T.
    \end{equation*}

    In other terms, it suffices to bound the desired regret integral as if we replaced $M(\epsilon;\Ical)$ with $\phi(\epsilon)$. Observing that $\epsilon\phi(\epsilon)$ is non-decreasing for $\epsilon\in(0,1]$ while $\phi(\epsilon)$ is non-increasing in $\epsilon\in(0,1]$, we can write
    \begin{equation*}\label{eq:bound_tilde_R}
        \widetilde R_T \leq \frac{1}{\phi(\delta_T) \delta_T}\int_{\delta_T}^1 \frac{dz}{z^2} = \frac{1-\delta_T}{\phi(\delta_T) \delta_T^2}.
    \end{equation*}
    Importantly, note that since $\phi$ does not depend on $\Ical$, $\widetilde R_T$ is also independent of $\Ical$.
    We now lower bound $T$ using
    \begin{align*}
        T= \int_\epsilon^1 \frac{dz}{\phi(z)z^4} \geq \frac{1}{ \phi(\delta_T)} \int_{\delta_T}^1 \frac{dz}{z^4} \geq \frac{1-\delta_T}{ 3\phi(\delta_T) \delta_T^3}.
    \end{align*}
    Altogether, we obtained
    \begin{equation}\label{eq:first_hypothesis}
        R_{T,\max}=\sup_{\Ical\in\Mcal} R(\epsilon_T(\Ical);\Ical)\leq \widetilde R_T \lesssim \frac{T^{2/3}}{\phi(\delta_T)^{1/3}}.
    \end{equation}
The following step is to bound $\phi(\delta_T) \gtrsim \Hf_T^-(\Acal)$. We distinguish between two cases.

\paragraph{Main case.} We suppose for now that $\delta_T \leq \frac{1}{2}$.
 Let $S\subseteq\Acal$ such that
    \begin{equation}\label{eq:loweR_bound_phi_delta_T}
        \phi(\delta_T) = \frac{1}{2} \paren{ \help(S) + \paren{\frac{1}{\delta_T}-1} \NChelpLow(S) } \gtrsim \help(S) +  \frac{\NChelpLow(S)}{\delta_T},
    \end{equation}
    where in the last inequality we used $\delta_T\leq \frac{1}{2}$.
    Next, we upper bound $T$ as follows:
    \begin{align*}
        T=\int_{\delta_T}^1\frac{dz}{\phi(z)z^4} \leq \frac{1}{\phi(\delta_T)\delta_T}\int_{\delta_T}^1\frac{dz}{z^3} = \frac{1}{2\phi(\delta_T) \delta_T^3}  \overset{(i)}{\lesssim} \frac{1}{\NChelpLow(S_T) \delta_T^2}.
    \end{align*}
    In $(i)$ we used \cref{eq:loweR_bound_phi_delta_T}. Reordering this gives
    \begin{equation}\label{eq:third_hypothesis}
        \sqrt{\NChelpLow(S) T } \lesssim \frac{1}{\delta_T}.
    \end{equation}
    Furthering the bound, we obtain
    \begin{equation*}
        \phi(\delta_T) \gtrsim \help(S)+ \frac{\NChelpLow(S)}{\delta_T} \overset{(i)}{\gtrsim} \help(S) + \NChelpLow(S)^{3/2} \sqrt T \geq \Hf_T^-(\Acal).
    \end{equation*}
    In $(i)$ we used \cref{eq:third_hypothesis}.
    
    \paragraph{Edge case.} We next turn to the case when $\delta_T>1/2$. Therefore this means that
    \begin{equation*}
        T =\int_{\delta_T}^1\frac{dz}{\phi(z)z^4} \leq \int_{1/2}^1 \frac{dz}{\phi(z)z^4} \leq \frac{3}{\phi(1/2)},
    \end{equation*}
    where we again used that $\epsilon\phi(\epsilon)$ is non-decreasing in $\epsilon\in(0,1]$. Going back to the definition of $\phi(1/2)$, this shows that there exists $S\subseteq\Acal$ such that $T\help(S)+T\NChelpLow(S)\lesssim 1$. As a result,
    \begin{equation*}
        T\Hf_T^-(\Acal) \leq T\help(S) + (T\NChelpLow(S))^{3/2} \lesssim 1.
    \end{equation*}
    As a result, we directly have $\tilde R_T\leq T\lesssim\frac{T^{2/3}}{\Hf_T^-(\Acal)^{1/3}}$. 
    
    In summary, in all cases, altogether with \cref{eq:first_hypothesis} we obtained
    \begin{equation*}
        R_{T,\max}\lesssim \frac{T^{2/3}}{\Hf_T^-(\Acal)^{1/3}}.
    \end{equation*}
    To achieve the desired upper bound on the minimax regret $\Rcal_T(\Gcal,\Mcal)$ using \cref{thm:worst_case_minimax_bis}, it only remains to bound $t_{\min}$. From \cref{lemma:preliminary_t_min}, we have $t_{min}'\lesssim\min(\ceil{t_0},T)$ and there exists $\Gcal'\subseteq\Gcal$ with $t_0 = |S|/|\Gcal'|$, where $S:=\Cov(\Gcal')\setminus \Cov(\Gcal\setminus\Gcal')$. Now note that $\{g\in\Gcal:\Acal_g\cap S\neq \emptyset \}\subseteq \Gcal'$. Hence,
    \begin{equation*}
        \Hf_T^-(\Acal) \leq \help(S) + \NChelpLow(S)^{3/2} \sqrt T\leq \frac{|\Gcal'|}{|S|} + \paren{\frac{|\Gcal'|}{|S|}}^{3/2} \sqrt T\leq \frac{1}{t_0} + \frac{\sqrt T}{t_0^{3/2}} \leq \frac{T^2}{\min(t_0,T)^3}.
    \end{equation*}
    In particular, $\frac{T^{2/3}}{\Hf_T^-(\Acal)^{1/3}}\gtrsim \min(t_0,T)$. We conclude the proof of the upper bound on $\Rcal_T(\Gcal,\Mcal)$ using \cref{thm:worst_case_minimax_bis}.

    \paragraph{Lower bound on the minimax collaborative regret.} As for the upper bound, it suffices to obtain a lower bound on $R_{T,\max}+t_{\min}'$ and we can then use the lower bound from \cref{thm:worst_case_minimax_bis}. This would however result in an extra factor $1/\log T$, which is due to the same factor in the lower bound of \cref{thm:instance_lower_bound_v2}. To remove this factor, we slightly adjust the proof of this lower bound. We fix either the Gaussian or Bernoulli model $\Mcal\in\{\Mcal_G,\Mcal_B\}$.

    We fix $S\subseteq \Acal$ such that $\Hf_T(\Acal) = \combined_T(S)$. By definition of $\NChelp(S)$ there exists a subset of groups $\Gcal_S\subseteq \Gcal$ that covers $S$, that is $S\subseteq \Cov(\Gcal_S)$ and letting $S_c:= \bigcup_{g\in\Gcal_S}\Acal_g$ we have
    \begin{equation*}
        \NChelpHigh(S) = \frac{\abs{\set{g\in\Gcal:\Acal_g\cap S\neq\emptyset,\Acal_g\subseteq S_c}}}{|S|} .
    \end{equation*}
    We focus on a specific set of instances for the rewards of each arm. For any $a_0\in S\cup\{\emptyset\}$ we define the instance $\Ical\in\Mcal$ in which
    \begin{equation*}
        (\Ical_{a_0}(\epsilon))\qquad \qquad \mu_a=\begin{cases}
            1 &a\in\Acal\setminus S_c,\\
            0 &a\in S_c\setminus S,\\
            1/2 &a\in S,
        \end{cases}
    \end{equation*}
    for $\epsilon\in(0,1/2]$. The main bottleneck to minimize the collaborative regret in $\Ical$ is to explore the arms in $S$. More precisely:
    \begin{itemize}
        \item Groups in $\Gcal_1:=\set{g\in\Gcal: \Acal_g\cap S\neq\emptyset, \Acal_g\nsubseteq S_c}$ can participate in the shared exploration of some arms in $S$ but incur a regret $1/2$ for each such exploration.
        \item Groups in $\Gcal_2:=\set{g\in\Gcal: \Acal_g\cap S\neq\emptyset, \Acal_g\subseteq S_c}$ can participate in the shared exploration of some arms in $S$ for free.
    \end{itemize}
    The number of these groups is quantified by $\help(S)$ and $\NChelpHigh(S)$ respectively.
    We fix a strategy {\sf ALG} for the groups. Then, we can bound the number of pulls of arms in $S$ in the instance $\Ical$ as follows:
    \begin{align*}
        \sum_{t=1}^T\sum_{g\in\Gcal} \1[a_g(t)\in S] &\leq  \sum_{g\in\Gcal_1} \sum_{t=1}^T \1[a_g(t)\in S] + T|\Gcal_2| \\
        &\leq 2\sum_{g\in\Gcal_1} \Reg_{g,T}({\sf ALG};\Ical) + T|\Gcal_2| \\
        &\leq |S| \paren{ 2\help(S)\cdot \max_{g\in\Gcal} \Reg_{g,T}({\sf ALG};\Ical) + T\NChelpHigh(S) },
    \end{align*}
    where in the last inequality we used the definition of $\help(S)$ and $\NChelpHigh(S)$. We next denote by $P_a(T):= \abs{\set{(t,g): t\in[T],g\in\Gcal,a_g(t)=a}}$ the total number of pulls of arm $a\in\Acal$. Taking the expectation gives
    \begin{equation*}
        \frac{1}{S}\sum_{a\in S} \Ebb_{\Ical}\sqb{P_a(T)} \leq 2\help(S)\cdot \MaxReg_T({\sf ALG};\Ical) + T\NChelpHigh(S).
    \end{equation*}
    Therefore, we can fix $a_0\in S$ such that
    \begin{equation}\label{eq:expectation_pull_bound}
        \Ebb_{\Ical_\emptyset (\epsilon)}\sqb{P_{a_0}(T)} \leq 2\help(S)\cdot \MaxReg_T({\sf ALG};\Ical) + T\NChelpHigh(S).
    \end{equation}
    From there, we can use similar arguments as in the instance-dependent lower bound from \cref{thm:instance_lower_bound_v2}. For $\epsilon\in[0,1/2]$, we consider the instances $\Jcal_+(\epsilon)$ and $\Jcal_-(\epsilon)$ obtained by keeping all arm values identical as in $\Ical$ except for $a_0$ which has $\mu_{a_0}(\Jcal_+):=1/2+\epsilon$ and $\mu_{a_0}(\Jcal_+):=1/2-\epsilon$. We consider the first $P$ pulls of arm $a_0$ and bound the TV distance of these pulls between $\Ical$ and $\Jcal_+(\epsilon)$ (resp. $\Jcal_-(\epsilon)$). For both Gaussian or Bernoulli models, this TV distance is bounded by $\epsilon\sqrt {2P}$,
    e.g., see \cref{eq:bounding_TV_distance_gaussian,eq:bound_TV_bernoulli}. As a result, there exists an event $\Ecal(P,\epsilon)$ of probability at least $1-2\epsilon\sqrt{2P}$ on which the first $P$ pulls of $\Ical,\Jcal_+(\epsilon),\Jcal_-(\epsilon)$ are identical. In summary, on $\Fcal:=\Ecal(P,\epsilon)\cap\{P_{a_0}(T)\leq P\}$, the run of {\sf ALG} on $\Ical,\Jcal_+(\epsilon),\Jcal_-(\epsilon)$ is identical. Next, we fix the parameters
    \begin{equation*}
        P:= 8\help(S)\cdot \MaxReg_T({\sf ALG};\Ical) + 4T\NChelpHigh(S) \quad \text{and} \quad \epsilon = \min\paren{\frac{1}{8\sqrt {2P}}, \frac{1}{2}}.
    \end{equation*}

    Then, using \cref{eq:expectation_pull_bound} and Markov's inequality, and we have
    \begin{align*}
        \Pbb(\Hcal) \geq 1-\Pbb_{\Ical}(P_{a_0}(T)> P) - 2\epsilon\sqrt {2P} \geq \frac{1}{2}.
    \end{align*}
    Since $\Gcal_S$ covers $S$ there exists $g_0\in\Gcal_S$ such that $a_0\in \Acal_{g_0}$. 
    Under $\Jcal_+(\epsilon)$ (resp. $\Jcal_-(\epsilon)$), group $g_0$ incurs an $\epsilon$ regret for each iteration that it does not pull arm $a_0$ (resp. pulls arm $a_0$):
    \begin{equation*}
        \Reg_{g_0,T}({\sf ALG};\Ical_{a_0}(\epsilon)) \geq \epsilon T - \epsilon \sum_{t=1}^T\1[a_{g_0}(t)=a_0]  \text{ and } \Reg_{g_0,T}({\sf ALG};\Jcal_{a_0}(\epsilon)) \geq \epsilon \sum_{t=1}^T\1[a_{g_0}(t)=a_0].
    \end{equation*}
    Since the run of {\sf ALG} on $\Jcal_+(\epsilon)$ and $\Jcal_-(\epsilon)$ coincide under $\Fcal$, we can sum the previous equations under $\Hcal$ to obtain
    \begin{equation*}
        \Ebb_{\Jcal_+(\epsilon)}\sqb{\Reg_{g_0,T}({\sf ALG};\Jcal_+(\epsilon))} + \Ebb_{\Jcal_-(\epsilon)}\sqb{\Reg_{g_0,T}({\sf ALG};\Jcal_-(\epsilon))} \geq \Pbb(\Fcal)\cdot \epsilon T\geq \frac{\epsilon T}{2}.
    \end{equation*}
    As a result, under at least one instance from $\Jcal_+(\epsilon)$ and $\Jcal_-(\epsilon)$, {\sf ALG} incurs a collaborative regret of at least $\epsilon T/8$. In summary, we proved
    \begin{align*}
        \sup_{\Ical\in\Mcal} \MaxReg_T({\sf ALG};\Ical) \geq \max\set{\MaxReg_T({\sf ALG};\Ical), \frac{\epsilon T}{8}}.
    \end{align*}
    If $1/(8\sqrt{2P})\geq 1/2$ then $\epsilon=1/2$ and this shows that $\sup_{\Ical\in\Mcal} \MaxReg_T({\sf ALG};\Ical) \asymp T$. Otherwise, 
    \begin{equation*}
        \sup_{\Ical\in\Mcal} \MaxReg_T({\sf ALG};\Ical)\geq \inf_{z\geq 0} \max\set{z,\frac{T}{2^8\sqrt{z \help(S) + T\NChelpHigh(S)} }}.
    \end{equation*}
    Therefore, we obtain
    \begin{equation*}
        \sup_{\Ical\in\Mcal} \MaxReg_T({\sf ALG};\Ical) \gtrsim \begin{cases}
            \frac{T^{2/3}}{ \help(S)^{1/3}} &\text{if } T\help(S)\geq (T \NChelpHigh(S))^{3/2}\\
            \sqrt{\frac{T}{\NChelpHigh(S)}} &\text{otherwise}
        \end{cases} \geq \frac{T^{2/3}}{\combined_T^+(S)^{1/3}}=\frac{T^{2/3}}{\Hf_T(\Acal)^{1/3}}.
    \end{equation*}
    This lower bound holds for any strategy {\sf ALG} for groups, which yields the desired lower bound on the minimax regret $\Rcal_T(\Gcal)$.
    This ends the proof.
\end{proof}

\section{Helper results}

We start by proving the simple continuity result on the functionals $T$ and $R$.

\begin{lemma}\label{lemma:continuity_R_T}
    For any instance $\Ical$ and $\sigma>0$, $T(\cdot;\Ical,\sigma)$ and $R(\cdot;\Ical,\sigma)$ are continuous and non-increasing on $(0,1]$, and $\lim_{\epsilon\to 0^+} T(\epsilon;\Ical,\sigma) = \lim_{\epsilon\to 0^+} R(\epsilon;\Ical,\sigma) =+\infty$. In particular, $\epsilon=\min\{z\in(0,1]:T(z;\Ical,\sigma)\leq T\}$ satisfies $T(\epsilon;\Ical,\sigma)=T$.
\end{lemma}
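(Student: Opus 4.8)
The plan is to reduce every assertion to elementary properties of the map $\epsilon\mapsto M(\epsilon;\Ical)$ and then treat $T(\cdot;\Ical,\sigma)$ and $R(\cdot;\Ical,\sigma)$ as integrals of nonnegative, locally bounded integrands. First I would record the three facts about $M$ that drive the argument: it is positive; it is finite with $M(\epsilon;\Ical)\le|\Gcal|/\epsilon$ (as already used in the proofs of \cref{thm:instance_dependent_regret} and \cref{thm:instance_lower_bound_v2}, following from the group constraints $\sum_{a\in\Acal_g}\max(\Delta_{g,a},\epsilon)x_{g,a}\le1$, which force $\sum_a x_{g,a}\le1/\epsilon$); and it is non-increasing in $\epsilon$ (equivalently non-decreasing as $\epsilon\to0^+$, since the constraint set $\Ccal^\star(\epsilon;\Ical)$ shrinks). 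In particular, since $z\le1$ gives $\sigma z\le\sigma$, monotonicity yields $M(\sigma z;\Ical)\ge M(\sigma;\Ical)>0$ for all $z\in(0,1]$, which will serve as the uniform positive lower bound.

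Monotonicity of $T$ and $R$ is then immediate: for $0<\epsilon_1\le\epsilon_2\le1$ the difference $T(\epsilon_1;\Ical,\sigma)-T(\epsilon_2;\Ical,\sigma)=\int_{\epsilon_1}^{\epsilon_2}\tfrac{\sigma\,dz}{M(\sigma z;\Ical)z^4}$ is an integral of a nonnegative integrand, hence nonnegative, and the same holds for $R$. For continuity I would fix an arbitrary $\epsilon_1\in(0,1]$ and bound the integrands on $[\epsilon_1,1]$ via $M(\sigma z;\Ical)\ge M(\sigma;\Ical)>0$: the $T$-integrand is at most $\tfrac{\sigma}{M(\sigma;\Ical)z^4}\le\tfrac{\sigma}{M(\sigma;\Ical)\epsilon_1^4}$, and similarly for $R$. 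Since both integrands are thus bounded measurable functions on $[\epsilon_1,1]$, the maps $\epsilon\mapsto\int_\epsilon^1(\cdots)$ are Lipschitz, hence continuous, on $[\epsilon_1,1]$; letting $\epsilon_1\to0^+$ gives continuity on all of $(0,1]$.

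For the divergence I would instead use the \emph{upper} bound on $M$ to lower bound the integrands: since $M(\sigma z;\Ical)\le|\Gcal|/(\sigma z)$,
\[
T(\epsilon;\Ical,\sigma)\ge\int_\epsilon^1\frac{\sigma\cdot\sigma z}{|\Gcal|z^4}\,dz=\frac{\sigma^2}{2|\Gcal|}\paren{\frac{1}{\epsilon^2}-1}\xrightarrow[\epsilon\to0^+]{}+\infty,
\]
and likewise $R(\epsilon;\Ical,\sigma)\ge\tfrac{\sigma^3}{|\Gcal|}\int_\epsilon^1 z^{-2}\,dz=\tfrac{\sigma^3}{|\Gcal|}(\epsilon^{-1}-1)\to+\infty$. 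Together with $T(1;\Ical,\sigma)=R(1;\Ical,\sigma)=0$ this establishes the stated limits. For the final ``in particular'' claim, fix $T$ and set $A=\set{z\in(0,1]:T(z;\Ical,\sigma)\le T}$. By monotonicity $A$ is an up-set containing $1$; by the divergence it is bounded away from $0$; by continuity it is closed, so $A=[\epsilon,1]$ with $\epsilon=\min A$ attained. Continuity gives $T(\epsilon;\Ical,\sigma)\le T$, while minimality and monotonicity force $T(z;\Ical,\sigma)>T$ for $z<\epsilon$, so left-continuity at $\epsilon$ gives $T(\epsilon;\Ical,\sigma)\ge T$; hence $T(\epsilon;\Ical,\sigma)=T$.

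The main obstacle is not any single calculation but pinning down the two-sided control on $M$ that sandwiches the integrands: the positive lower bound $M(\sigma z;\Ical)\ge M(\sigma;\Ical)>0$ needed for local boundedness (continuity), and the upper bound $M(\sigma z;\Ical)\le|\Gcal|/(\sigma z)$ needed for the divergence. Both rely on $\Ccal^\star(\epsilon;\Ical)$ being nonempty on the relevant range so that the linear program defining $M$ has a finite, strictly positive value; once these bounds are in place, the remaining steps are routine calculus and an elementary monotonicity-plus-continuity argument for the level set.
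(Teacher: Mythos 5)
Your proof is correct and follows essentially the same route as the paper's (much terser) argument: monotonicity and finiteness of $M(\cdot;\Ical)$ give continuity and monotonicity of $T$ and $R$, the bound $M(\epsilon;\Ical)\le|\Gcal|/\epsilon$ gives the divergence, and the level-set argument for the ``in particular'' claim is routine. You supply more detail than the paper --- notably the explicit two-sided control on the integrand and the observation that finiteness of $M$ hinges on $\Ccal^\star(\epsilon;\Ical)$ being nonempty --- but the underlying ideas are identical.
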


\begin{proof}
    The function $M(\cdot;\Ical,\sigma)$ is non-increasing in $\epsilon$ and finite for all $\epsilon>0$. Hence, $T(\cdot;\Ical,\sigma)$ and $R(\cdot;\Ical,\sigma)$ are continuous and non-increasing. 
    Next, we always have $M(\epsilon;\Ical)\leq |\Gcal|/\epsilon$. Hence $\lim_{\epsilon\to 0^+} T(\epsilon;\Ical,\sigma) = \lim_{\epsilon\to 0^+} R(\epsilon;\Ical,\sigma) =+\infty$.
\end{proof}

\subsection{Burn-in period for \ColUCB}

Next, we give some more details on the burn-in period for \ColUCB during the first $t_{\min}$ rounds. The following lemma further chracterizes $t_{\min}$ and we describe within the proof a simple greedy procedure to ensure that within $t_{\min}$ rounds each arm is pulled at least $16C\log T$ rounds.

\begin{lemma}\label{lemma:preliminary_t_min}
    Consider the following minimization problem for any $n\geq 1$:
    \begin{align*}
t_0&:= \min_{x\geq 0}  t \;\;
\textrm{s.t.} \;\begin{cases}
    \sum_{a\in \Acal_g} x_{g,a} \leq t, 
 &g\in\Gcal\\
 \sum_{g\in\Gcal:a\in\Acal_g} x_{g,a} \geq 1, &a\in \Acal
\end{cases} \quad = \max_{\emptyset\subsetneq\Gcal'\subseteq \Gcal} \frac{\abs{\Cov(\Gcal')\setminus \Cov(\Gcal\setminus\Gcal')}}{|\Gcal'|}.
    \end{align*}
    Then, $t_{\min}=\ceil{n_0\cdot t_0}$, and a schedule to that all arms at least $16C\log T$ times by the end of iteration $t_{\min}$ can be easily computed as the solution to a network flow problem.
\end{lemma}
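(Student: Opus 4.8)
The plan is to prove the identity through the max-flow/min-cut theorem, which has the advantage of simultaneously exhibiting the network flow used to compute the burn-in schedule. Throughout write $t^\star := \max_{\emptyset\subsetneq\Gcal'\subseteq\Gcal} |\Cov(\Gcal')\setminus\Cov(\Gcal\setminus\Gcal')|/|\Gcal'|$ for the right-hand side. First I would establish the lower bound $t_0\geq t^\star$ by a direct covering argument. Fix any $\emptyset\subsetneq\Gcal'\subseteq\Gcal$ and let $S':=\Cov(\Gcal')\setminus\Cov(\Gcal\setminus\Gcal')$ be the arms covered \emph{only} by groups in $\Gcal'$; for every $a\in S'$, each group $g$ with $a\in\Acal_g$ lies in $\Gcal'$. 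Hence for any feasible $(x,t)$, summing the group-load constraints over $g\in\Gcal'$ gives $|\Gcal'|\,t\geq \sum_{g\in\Gcal'}\sum_{a\in\Acal_g}x_{g,a}\geq \sum_{a\in S'}\sum_{g:a\in\Acal_g}x_{g,a}\geq |S'|$, where the final step uses the coverage constraints $\sum_{g:a\in\Acal_g}x_{g,a}\geq 1$. Dividing and optimizing over $\Gcal'$ yields $t_0\geq t^\star$.

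Next I would prove $t_0\leq t^\star$ and the feasibility of the fractional assignment by a flow construction. Build a network with source $s$, one node per group, one node per arm, and sink $\tau$, with edges $s\to g$ of capacity $t^\star$, edges $g\to a$ of capacity $+\infty$ for each $a\in\Acal_g$, and edges $a\to\tau$ of capacity $1$. A fractional assignment $x$ with makespan $t^\star$ (i.e. $\sum_{a\in\Acal_g}x_{g,a}\leq t^\star$ and $\sum_{g:a\in\Acal_g}x_{g,a}\geq 1$) corresponds exactly to an $s$--$\tau$ flow saturating all sink edges, of value $|\Acal|$. Any finite cut must avoid the infinite edges, so it is indexed by the set $\Gcal_S$ of groups on the source side: all arms of $\Cov(\Gcal_S)$ are then forced to the source side, and the cheapest such cut has value $t^\star\bigl(|\Gcal|-|\Gcal_S|\bigr)+|\Cov(\Gcal_S)|$. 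Writing $\Gcal':=\Gcal\setminus\Gcal_S$ and using $|\Cov(\Gcal)|-|\Cov(\Gcal\setminus\Gcal')|=|\Cov(\Gcal')\setminus\Cov(\Gcal\setminus\Gcal')|$, the min-cut is $\geq |\Acal|=|\Cov(\Gcal)|$ if and only if $t^\star\geq |\Cov(\Gcal')\setminus\Cov(\Gcal\setminus\Gcal')|/|\Gcal'|$ for all $\Gcal'$, which holds by the very definition of $t^\star$. By max-flow/min-cut the maximum flow equals $|\Acal|$, which produces a feasible $x$ with makespan $t^\star$, giving $t_0\leq t^\star$ and hence $t_0=t^\star$.

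Finally I would produce the integral schedule by rescaling the same network. Set $t_{\min}:=\ceil{n_0 t_0}$ and raise the capacity of each $s\to g$ edge to $t_{\min}$ and of each $a\to\tau$ edge to $n_0$. The identical cut analysis gives min-cut value $\min_{\Gcal'}\bigl[t_{\min}|\Gcal'|+n_0|\Cov(\Gcal\setminus\Gcal')|\bigr]$, which is $\geq n_0|\Acal|$ precisely when $t_{\min}\geq n_0\,|\Cov(\Gcal')\setminus\Cov(\Gcal\setminus\Gcal')|/|\Gcal'|$ for every $\Gcal'$; this holds since $t_{\min}\geq n_0 t_0=n_0 t^\star$. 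All capacities being integral, the max-flow/min-cut theorem yields an \emph{integral} flow of value $n_0|\Acal|$, i.e. integers $x_{g,a}\in\Zbb_{\geq 0}$ with $\sum_{g:a\in\Acal_g}x_{g,a}\geq n_0$ for every arm $a$ and $\sum_{a\in\Acal_g}x_{g,a}\leq t_{\min}$ for every group $g$. Padding each group's pulls arbitrarily up to exactly $t_{\min}$ only increases arm counts, and then, since no constraint couples arms across rounds, each group may independently spread its $t_{\min}$ pulls one per round over the $t_{\min}$ rounds. This yields a schedule in which every arm is pulled at least $n_0$ times by the end of round $t_{\min}$. The main obstacle is the inequality $t_0\leq t^\star$ together with the integral realizability; both reduce to the min-cut computation, whose crux is recognizing that every finite cut is parametrized by a subset of groups and collapses exactly onto the defining ratio of $t^\star$.
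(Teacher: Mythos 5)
Your proof is correct and takes a genuinely different route from the paper's. The paper proves the identity $t_0=\max_{\emptyset\subsetneq\Gcal'\subseteq\Gcal}|\Cov(\Gcal')\setminus\Cov(\Gcal\setminus\Gcal')|/|\Gcal'|$ by LP strong duality, rewriting the dual as $\max_{\lambda\in\Delta_\Gcal}\sum_{a}\min_{g:a\in\Acal_g}\lambda_g$ and then decomposing the dual optimizer into nested level sets $\Gcal_1\subsetneq\cdots\subsetneq\Gcal_K$ to extract the combinatorial ratio; it then separately invokes a flow formulation for the integral schedule. You instead get the lower bound $t_0\geq t^\star$ by summing the primal constraints over the groups of a witness set $\Gcal'$, and the upper bound by building the flow network directly and classifying all finite cuts by the source-side group set, which collapses the min-cut condition exactly onto the defining ratio of $t^\star$. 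This is a cleaner, more unified argument: one network (rescaled) simultaneously certifies the LP value and, via integrality of max-flow with integral capacities, produces the burn-in schedule, whereas the paper's duality-plus-level-set decomposition is a fractional-Hall-type argument that still needs a second flow construction for integrality. One small omission: the lemma asserts $t_{\min}=\ceil{n_0 t_0}$ where $t_{\min}$ is the \emph{minimum} number of rounds needed, so you should also note the converse direction $t_{\min}\geq\ceil{n_0 t_0}$. This is one line --- any integral schedule over $t$ rounds yields a feasible LP point of value $t/n_0$ by setting $x_{g,a}$ to be the number of pulls of $a$ by $g$ divided by $n_0$, so $t\geq n_0 t_0$ and hence $t\geq\ceil{n_0 t_0}$ by integrality of $t$ --- but as written you define $t_{\min}:=\ceil{n_0 t_0}$ rather than proving minimality.
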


\begin{proof}
    Note that $n\cdot t_0$ corresponds to the LP relaxation solution to finding the minimum time to ensure that each arm is pulled at least $n$ times. In particular, we directly have $t_{\min} \geq n_0 t_0$, that is $t_{\min}\geq \ceil{n_0t_0}$. We now prove the equivalent characterization of $t_0$ which will be useful within the proofs. This is a variant of Hall's mariage theorem. By strong duality, for any $n\geq 1$
    \begin{align*}
        t_0&:= \min_{x\geq 0}  t \;\;
            \textrm{s.t.} \;\begin{cases}
            \sum_{a\in \Acal_g} x_{g,a} \leq t, &g\in\Gcal\\
            \sum_{g\in\Gcal:a\in\Acal_g} x_{g,a} \geq 1, &a\in \Acal
            \end{cases}\\
            &=\max_{\eta,\lambda\geq 0} \sum_{a\in\Acal}\eta_a \;\;\textrm{s.t.}\;\begin{cases}
                \lambda_g\geq \eta_a & g\in\Gcal,a\in\Acal_g\\
                \sum_{g\in\Gcal}\lambda_g=1 .
            \end{cases}\\
            &=\max_{\lambda\in\Delta_\Gcal} \sum_{a\in\Acal} \min_{g:a\in\Acal_g} \lambda_g,
    \end{align*}
    where $\Delta_\Gcal:=\{z\in\Rbb_+^\Gcal:\sum_{g\in\Gcal}z_g=1\}$ is the simplex on groups $\Gcal$. We decompose an optimal solution $\lambda^\star\in\Delta_\Gcal$ into $\lambda^\star = \sum_{k\in[K]} \frac{p_k}{|\Gcal_k|}\1_{\Gcal_k}$ where $K\geq 1$, $\emptyset\subsetneq\Gcal_1\subsetneq\ldots \subsetneq\Gcal_K\subseteq \Gcal$ are nested, $p\in\Delta_K$, and $p_1,\ldots,p_K>0$. Then,
    \begin{align*}
        t_0 = \sum_{k\in[K]} p_k\cdot \frac{\abs{\Cov(\Gcal_k)\setminus \Cov(\Gcal\setminus \Gcal_k)}}{|\Gcal_k|} = \max_{\emptyset\subsetneq\Gcal'\subseteq \Gcal} \frac{\abs{\Cov(\Gcal')\setminus \Cov(\Gcal\setminus\Gcal')}}{|\Gcal'|}.
    \end{align*}
    In the last equality, the direction $\leq$ is due to the fact that $\lambda^\star$ is an optimal solution.

    Next, note that the problem of finding a schedule of pulls such that within $t':=\ceil{n_0 t_0}$ iterations each arm is pulled at least $n_0$ times can be formulated as a flow problem with integer capacity constraints (essentially the matching problem): each group $g\in\Gcal$ has inflow at most $t'$, each arm $a\in\Acal$ has outlow at most $n$, and for any $g\in\Gcal$ and $a\in\Acal_g$ we include an uncapacitated edge $(g,a)$. By construction, the maximum flow $n_0|\Acal|$ is feasible, hence can construct an integer flow such that each arm has outflow exactly $n_0$. For each group $g\in\Gcal$ this gives an allocation of at most $t'$ pulls to arms in $\Acal_g$. Any schedule in which each group $g$ pulls their corresponding arms (in corresponding number) therefore pulls each arm at least $n_0$ times after at $t'$ rounds.
\end{proof}

\subsection{Sufficient condition for improved collaborative regret}

\begin{lemma}\label{lemma:sufficient_condition_improve}
    Fix a horizon $T\geq 1$. If for $\alpha\geq 1$,
    \begin{equation*}
        \min_{\emptyset\subsetneq S\subseteq\Acal} \help(S) \geq \alpha \frac{\sqrt T}{\max_{g\in\Gcal}|\Acal_g|^{3/2}},
    \end{equation*}
    then the worst-case collaborative regret for Gaussian bandits satisfies
    \begin{equation*}
        \sup_{\Ical}\MaxReg_T(\ColUCB;\Ical) \leq \frac{\log T}{\alpha^{3/2}}\sqrt{\min_{g\in\Gcal} |\Acal_g| \cdot T} .
    \end{equation*}
\end{lemma}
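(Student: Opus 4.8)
The plan is to route the proof through the instance-uniform lower bound on the linear-program value $M(\epsilon;\Ical)$ and the characterization $R_{T,\max}$ from \cref{thm:worst_case_minimax_bis}, rather than through $\Hf_T^-(\Acal)$. This detour is necessary: substituting the hypothesis directly into \cref{thm:quantitative_minimax} only gives the weaker estimate $\lesssim\alpha^{-1/3}\sqrt{\max_{g}|\Acal_g|\cdot T}\,\log T$, since the generic bound $M(\epsilon;\Ical)\ge\phi(\epsilon)$ of \cref{eq:main_lower_bound_M} lower-bounds $\help(S)$ and $\NChelpLow(S)$ separately (yielding $\Hf_T^-(\Acal)\gtrsim(\alpha+1)\sqrt T/\max_g|\Acal_g|^{3/2}$) and thereby discards the coupling that the density hypothesis imposes. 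To surface $\min_g|\Acal_g|$ and the sharper power $\alpha^{3/2}$ one has to use $\help(S)\ge h_0:=\alpha\sqrt T/\max_g|\Acal_g|^{3/2}$ simultaneously for every $\emptyset\subsetneq S\subseteq\Acal$. Write $N:=\min_{g}|\Acal_g|$. By scale-invariance (as in the start of \cref{sec:instance_dependent_UB}) it suffices to treat unit-variance Gaussians, and by \cref{thm:worst_case_minimax_bis} it suffices to prove $R_{T,\max}\lesssim\alpha^{-3/2}\sqrt{NT}$ together with the bookkeeping fact that the burn-in contribution $t_{\min}'$ is dominated by this term (using $t_0=1/M(1;\Ical)$ and $t_{\min}-1\lesssim t_0\log T$ from \cref{lemma:preliminary_t_min}, together with the density lower bound on $M(1;\Ical)$).

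The core step is to establish the uniform lower bound
\begin{equation*}
    M(\epsilon;\Ical)\ \gtrsim\ \frac{\alpha^{3}}{N}\cdot\frac{1}{\epsilon},\qquad \epsilon\in(0,1],
\end{equation*}
valid for every instance $\Ical$ on the fixed set family $\{\Acal_g\}$. I would prove this by exhibiting a feasible solution to the program defining $M(\epsilon;\Ical)$. For each group $g$ and each $a\in\Ccal^\star(\epsilon;\Ical)\cap\Acal_g$ set $x_{g,a}\propto 1/\big(|\Ccal^\star(\epsilon;\Ical)\cap\Acal_g|\cdot\max(\Delta_{g,a},\epsilon)\big)$, normalized so that the budget constraint $\sum_{a}\max(\Delta_{g,a},\epsilon)x_{g,a}\le 1$ holds with equality. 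The coverage of a fixed arm $a\in\Ccal^\star(\epsilon;\Ical)$ is then a sum over all groups containing $a$, in which groups that see $a$ as near-optimal contribute at the cheap rate $\tfrac{1}{|\Acal_g|\,\epsilon}\gtrsim\tfrac{1}{\max_g|\Acal_g|\,\epsilon}$ and the remaining groups contribute at rate $\tfrac{1}{|\Acal_g|\,\Delta_{g,a}}$. The plan is to balance these two contributions exactly as $\combined_T^{-}(S)=\help(S)+\NChelpLow(S)^{3/2}\sqrt T$ balances them: the abundance of groups guaranteed by $\help(S)\ge h_0$ (applied to $S=\Ccal^\star(\epsilon;\Ical)$ and to its subsets, to prevent the cheap mass from concentrating on a few arms) feeds the coverage, and the $3/2$-type exponent is what upgrades "many helper groups" into the power $\alpha^{3}$ after eliminating $\sqrt T$ through the relation $h_0=\alpha\sqrt T/\max_g|\Acal_g|^{3/2}$.

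Granting the displayed lower bound, the integration is immediate and clean. From $T(\epsilon;\Ical)=\int_\epsilon^1 \tfrac{dz}{M(z;\Ical)z^4}\le \tfrac{N}{\alpha^3}\int_\epsilon^1 z^{-3}\,dz\lesssim \tfrac{N}{\alpha^3\epsilon^2}$ and $T(\epsilon_T(\Ical);\Ical)=T$ (\cref{lemma:continuity_R_T}) we get $\epsilon_T(\Ical)\gtrsim \sqrt{N/(\alpha^3 T)}$, and then
\begin{equation*}
    R(\epsilon_T(\Ical);\Ical)=\int_{\epsilon_T(\Ical)}^1\frac{dz}{M(z;\Ical)z^3}\ \lesssim\ \frac{N}{\alpha^3}\cdot\frac{1}{\epsilon_T(\Ical)}\ \lesssim\ \frac{N}{\alpha^3}\cdot\sqrt{\frac{\alpha^3 T}{N}}\ =\ \frac{1}{\alpha^{3/2}}\sqrt{NT}.
\end{equation*}
Taking the supremum over $\Ical$ bounds $R_{T,\max}$ as required, and \cref{thm:worst_case_minimax_bis} converts this into the stated bound on $\sup_\Ical\MaxReg_T(\ColUCB;\Ical)$, with the single leftover $\log T$ factor.

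The main obstacle is precisely the displayed coverage estimate $M(\epsilon;\Ical)\gtrsim \alpha^3/(N\epsilon)$, which must hold uniformly over all instances and all $\epsilon$. The difficulty is adversarial gap placement: the instance can make each contention arm near-optimal (hence cheap to pull) for only a few groups while forcing all the other groups guaranteed by the density hypothesis to be expensive helpers, and it can simultaneously hide the hard arms inside a small group so that $N$ governs the cheap exploration rate. Controlling this requires using $\help(S)\ge h_0$ not merely for singletons but across the whole lattice of subsets of $\Ccal^\star(\epsilon;\Ical)$ — as in the dual argument behind \cref{eq:main_lower_bound_M} — so that the exploration cost cannot concentrate, and then tracking the exact exponents through a two-regime analysis (cheap-help-dominated versus helper-abundance-dominated) to land on the power $\alpha^3$ rather than the crude $\alpha$ that the separated bound produces. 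I expect this combinatorial coupling, together with the boundary regime where $\epsilon_T(\Ical)$ is bounded away from $0$ (handled as the edge case in the proof of \cref{thm:quantitative_minimax}), to be where essentially all of the work lies.
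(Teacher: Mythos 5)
Your opening diagnosis is exactly right about the paper's route: the paper's entire proof of this lemma is the one-line substitution you dismiss, namely invoking $\Hf_T^-(\Acal)\geq\min_{\emptyset\subsetneq S\subseteq\Acal}\help(S)$ inside \cref{thm:quantitative_minimax}, and as you compute this yields only $\sup_\Ical\MaxReg_T(\ColUCB;\Ical)\lesssim\alpha^{-1/3}\sqrt{\max_{g}|\Acal_g|\,T}\,\log T$. The displayed conclusion with $\alpha^{3/2}$ and $\min_g|\Acal_g|$ does not follow from that argument and is evidently a misprint (note also the clean ``$\leq$'' with no hidden constants, despite \cref{thm:quantitative_minimax} carrying them); the weaker bound is all the surrounding discussion of strict improvement over individual learning needs. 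Your mistake is to treat the printed statement as the true target and build a repair around the uniform estimate $M(\epsilon;\Ical)\gtrsim\alpha^3/(N\epsilon)$ with $N=\min_g|\Acal_g|$. That estimate is never proved in your proposal---you explicitly defer it as ``where essentially all of the work lies,'' so even on its own terms this is a plan, not a proof---and, more fatally, it is false, as is the literal statement it is designed to deliver.

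Concretely, take $\Acal_g=\Acal$ for every $g\in\Gcal$ with $|\Acal|=K$, $|\Gcal|=G$, and all arm means equal to $1/2$. Then $\Ccal^\star(\epsilon;\Ical)=\Acal$, all gaps vanish, and the linear program gives exactly $M(\epsilon;\Ical)=G/(K\epsilon)$, while $\help(S)=G/|S|\geq G/K$, so the hypothesis of the lemma holds with $\alpha=G\sqrt{K/T}$. Your estimate would require $G/(K\epsilon)\gtrsim\alpha^3/(K\epsilon)$, i.e.\ $T^{3/2}\gtrsim G^2K^{3/2}$, which fails badly at, say, $K=T$ and $G=T^{1/4}$ (where $\alpha=T^{1/4}\geq1$). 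In that regime the printed conclusion would read $\sup_\Ical\MaxReg_T(\ColUCB;\Ical)\leq T^{5/8}\log T$, yet \emph{every} algorithm incurs collaborative regret $\Omega(T^{7/8})$ here: the groups jointly make $GT$ pulls on $K$ arms, so worst-case total regret is $\Omega(\sqrt{KGT})$ and the max-group regret is $\Omega(\sqrt{KT/G})=\Omega(T^{7/8})$; equivalently, computing the paper's own quantities from $M(z;\Ical)=G/(Kz)$ gives $\epsilon_T(\Ical)\asymp\sqrt{K/(GT)}$ and $R(\epsilon_T(\Ical);\Ical)\asymp\sqrt{KT/G}$, so \cref{thm:worst_case_minimax_bis} forces $\Rcal_T(\Gcal,\Mcal)\gtrsim T^{7/8}/\log T$. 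So no amount of work on the ``combinatorial coupling'' you describe can close the gap: the correct resolution is the paper's direct substitution yielding the $\alpha^{-1/3}\sqrt{\max_g|\Acal_g|\,T}\,\log T$ bound, together with a correction of the lemma's stated exponents, rather than the $\alpha^{3/2}$, $\min_g|\Acal_g|$ bound you set out to prove.
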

\begin{proof}
    This is immediate from $\Hf_T^-(\Acal) \geq \min_{\emptyset\subsetneq S\subseteq \Acal} \help(S)$ and \cref{thm:quantitative_minimax}.
\end{proof}

\subsection{Simplified statements under \cref{condition:nice_instance}}
\label{subsec:simplified_statement}

As discussed in the main body of the paper, for instances $\Ical$ satisfying the technical condition \cref{condition:nice_instance}, the integrals defining the functions $T(\cdot;\Ical)$ and $R(\cdot;\Ical)$ are significantly simplified.

\vspace{3mm}

\begin{proof}[of \cref{cor:simplified_instance_dependent_regret}]
    Fix an instance $\Ical$. We recall that $M(\epsilon;\Ical)$ is non-increasing in $\Ical$. Then, for any $\epsilon\in(0,1/2]$, we have
    \begin{equation*}
        T(\epsilon;\Ical) \geq \frac{1}{ M(\epsilon;\Ical)} \int_\epsilon^1 \frac{dz}{z^4} \geq  \frac{1}{3M(\epsilon;\Ical)\epsilon^3}.
    \end{equation*}
    On the other hand, if $\Ical$ satisfies \cref{condition:nice_instance},
    \begin{equation*}
        T(\epsilon;\Ical) \leq \frac{1}{ M(\epsilon;\Ical)} \int_\epsilon^1 \frac{C_1(z/\epsilon)^{2-\alpha} dz}{z^4} \leq \frac{c_0C_1}{ M(\epsilon;\Ical)\epsilon^3},
    \end{equation*}
    for some universal constant $c_0\geq 1$.
    In summary, up to the constant factor $C_1$, we have $T(\epsilon;\Ical)\approx \frac{1}{M(\epsilon;\Ical)\epsilon^3}$. We can similarly give bounds on $R(\epsilon;\Ical)$ for $\epsilon\in(0,1/2]$:
    \begin{equation*}
        \frac{1}{M(\epsilon;\Ical)\epsilon^2} \lesssim R(\epsilon;\Ical) \lesssim \frac{C_1}{\alpha M(\epsilon;\Ical)\epsilon^2} \lesssim \frac{C_1}{\alpha} \cdot \epsilon  T(\epsilon;\Ical).
    \end{equation*}
    Combining these estimates then shows that if $\epsilon^\star:=\min\{z\in(0,1]: M(z;\Ical)\cdot z^3 T\geq 1\}\cup\{1\}$ satisfies $\epsilon^\star\leq 1/2$, then
    \begin{equation*}
        \epsilon_{T/(c_0C_1)}(\Ical) =\min\{z\in(0,1]: T(z;\Ical)\leq T/(c_0C_1)\} \leq \epsilon^\star,
    \end{equation*}
    for some universal constant $c_0>0$.
    As a result, since $R(\cdot;\Ical)/T(\cdot;\Ical)$ is non-decreasing, we have
    \begin{equation}\label{eq:simplified_regret_form}
        \frac{R(\epsilon_T(\Ical);\Ical) }{T} \leq \frac{R(\epsilon_{T/(c_0C_1)} (\Ical);\Ical) }{T/(c_0C_1)} \lesssim \frac{C_1^2}{\alpha} \epsilon_{T/(c_0C_1)} (\Ical) \leq \frac{C_1^2}{\alpha} \epsilon^\star.
    \end{equation}
    In the other case when $\epsilon^\star\geq 1/2$, we immediately have $R(\epsilon_T(\Ical);\Ical)\leq T\lesssim \epsilon^\star T$. Hence, we proved that in all cases, \cref{eq:simplified_regret_form} holds. We conclude by applying \cref{thm:instance_lower_bound_v2}.
\end{proof}

\subsection{Concentration inequalities}

Throughout the paper, we use Freedman's inequality \cite{freedman1975tail} which gives tail probability bounds for martingales. The following statement is for instance taken from \cite[Theorem 1]{beygelzimer2011contextual} or \cite[Lemma 9]{agarwal2014taming}.
    
    \begin{theorem}[Freedman's inequality]\label{thm:freedman_inequality}
        Let $(Z_t)_{t\in T}$ be a real-valued martingale difference sequence adapted to filtration $(\Fcal_t)_t$. If $|Z_t|\leq R$ almost surely, then for any $\eta\in(0,1/R)$ it holds that with probability at least $1-\delta$,
        \begin{equation*}
            \sum_{t=1}^T Z_t \leq \eta \sum_{t=1}^T \Ebb[Z_t^2\mid\Fcal_{t-1}] + \frac{\log 1/\delta}{\eta}.
        \end{equation*}
    \end{theorem}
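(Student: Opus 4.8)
The plan is to use the standard exponential supermartingale (Chernoff--Markov) method. First I would control the conditional moment generating function of each increment. Since $|Z_t|\leq R$ and $\eta\in(0,1/R)$, we have $|\eta Z_t|\leq 1$, so the elementary inequality $e^x\leq 1+x+x^2$ (which holds for all $x\leq 1$) applies with $x=\eta Z_t$, giving $e^{\eta Z_t}\leq 1+\eta Z_t+\eta^2 Z_t^2$. Taking the conditional expectation given $\Fcal_{t-1}$ and using the martingale-difference property $\Ebb[Z_t\mid\Fcal_{t-1}]=0$ yields
\begin{equation*}
    \Ebb[e^{\eta Z_t}\mid\Fcal_{t-1}] \leq 1+\eta^2\Ebb[Z_t^2\mid\Fcal_{t-1}] \leq \exp\paren{\eta^2\Ebb[Z_t^2\mid\Fcal_{t-1}]},
\end{equation*}
where the last step uses $1+u\leq e^u$.

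Second, I would define the process $M_t:=\exp\paren{\eta\sum_{s=1}^t Z_s - \eta^2\sum_{s=1}^t\Ebb[Z_s^2\mid\Fcal_{s-1}]}$ with $M_0:=1$. Factoring out the $\Fcal_{t-1}$-measurable terms, the conditional MGF bound above gives directly $\Ebb[M_t\mid\Fcal_{t-1}] = M_{t-1}\exp\paren{-\eta^2\Ebb[Z_t^2\mid\Fcal_{t-1}]}\Ebb[e^{\eta Z_t}\mid\Fcal_{t-1}]\leq M_{t-1}$. Hence $(M_t)_t$ is a nonnegative supermartingale, and iterating the tower property yields $\Ebb[M_T]\leq\Ebb[M_0]=1$.

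Finally, I would apply Markov's inequality. Writing $V:=\sum_{t=1}^T\Ebb[Z_t^2\mid\Fcal_{t-1}]$, the target event $\set{\sum_{t=1}^T Z_t > \eta V + \eta^{-1}\log(1/\delta)}$ is, after rearranging and exponentiating (multiplying through by $\eta>0$ and subtracting $\eta^2 V$), exactly the event $\set{M_T > 1/\delta}$. Markov's inequality then gives $\Pbb(M_T>1/\delta)\leq\delta\cdot\Ebb[M_T]\leq\delta$, which is precisely the claimed tail bound.

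The only genuinely delicate point is ensuring the quadratic bound $e^{\eta Z_t}\leq 1+\eta Z_t+\eta^2 Z_t^2$ holds across the entire range of possible increments: this is exactly where the two hypotheses $|Z_t|\leq R$ and $\eta<1/R$ are used, confining $\eta Z_t$ to $[-1,1]$, the interval on which the inequality is valid (on $[0,1]$ the tail $\sum_{k\geq 2}x^k/k!$ is bounded by $(e-2)x^2<x^2$, and on $[-1,0)$ by $x^2/2$). Everything else is routine supermartingale and Chernoff--Markov bookkeeping, so I would expect no further obstacles.
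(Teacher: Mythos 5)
Your proof is correct. One point of comparison worth noting: the paper does not prove this statement at all --- it imports Freedman's inequality by citation (from \cite{freedman1975tail}, with the statement as given in \cite{beygelzimer2011contextual} or \cite{agarwal2014taming}) --- so your argument is not an alternative to the paper's proof but a self-contained substitute for the reference. Your route is the standard one and checks out in every step: the elementary bound $e^x\leq 1+x+x^2$ is indeed valid for all $x\leq 1$ (your case analysis on $[-1,0)$ and $[0,1]$ is right, and in fact the hypothesis $\eta<1/R$ confines $\eta Z_t$ to $(-1,1)$, comfortably inside the valid range); the martingale-difference property kills the linear term; $1+u\leq e^u$ turns the conditional MGF bound into the supermartingale property of $M_t$; integrability of $M_t$ is automatic from boundedness of the increments; and the rearrangement identifying the target event with $\set{M_T>1/\delta}$ is an exact equivalence, so Markov closes the argument. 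Your proof even delivers the sharper constant $e-2$ in front of the variance term, of which the stated coefficient $\eta$ is a weakening, matching what the cited references prove. No gaps.
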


\end{document}